\newcommand\myshade{85}
\definecolor{utahtheoremcolor}{rgb}{0.1, 0.0, 0.9}
\colorlet{mylinkcolor}{violet}
\colorlet{mycitecolor}{violet}
\colorlet{myurlcolor}{violet}
\colorlet{ColorTitle}{black}
\colorlet{ColorSectionName}{black}
\colorlet{ColorBoxFG}{gray}
\colorlet{ColorBoxText}{black}
\colorlet{ColorBoxBG}{white}
\definecolor{celestialblue}{rgb}{0.29, 0.59, 0.82}
\renewcommand\hyper@natlinkbreak[2]{#1}
\crefname{algorithm}{Algorithm}{Algorithms}
\Crefname{algorithm}{Algorithm}{Algorithms}
\newcommand{\EE}{\mathbb{E}}
\newcommand{\Ric}{\mathrm{Ric}} 
\newcommand{\calM}{\mathcal{M}}
\newcommand{\Ret}{\mathrm{Ret}}
\newcommand{\grad}{\nabla}
\newcommand{\hess}{\nabla^2}
\newcommand{\tres}{\nabla^3}
\newcommand{\supp}{\text{supp}}
\newcommand{\PP}{\mathbb{P}}
\newcommand{\RR}{\mathbb{R}}
\newcommand{\NN}{\mathbb{N}}
\newcommand*\dotp{\mathpalette\dotp@{.5}}
\newcommand*\dotp@[2]{\mathbin{\vcenter{\hbox{\scalebox{#2}{$\m@th#1\bullet$}}}}}
\newcommand{\floor}[1]{\left\lfloor #1 \right\rfloor}
\DeclareMathOperator*{\argmin}{arg\,min}
\theoremstyle{definition}
\newtheorem{definition}{Definition}[section]
\newtheorem*{definition*}{Definition}
\theoremstyle{plain}
\newtheorem{theorem}[definition]{Theorem}
\newtheorem{lemma}[definition]{Lemma}
\newtheorem{proposition}[definition]{Proposition}
\newtheorem{corollary}[definition]{Corollary}
\newtheorem{assumption}[definition]{Assumption}
\newtheorem*{theorem*}{Theorem}
\newtheorem*{lemma*}{Lemma}
\newtheorem*{proposition*}{Proposition}
\newtheorem{corollary*}{Corollary}   
\newtheorem{assumption*}{Assumption}
\newtheorem{condition*}{Condition}
\newtheorem{exercise*}{Exercise}
\newtheorem*{example*}{Example}
\theoremstyle{remark}
\newtheorem*{remark}{Remark}
\crefname{assumption}{assumption}{assumptions} 
\title{Riemannian Zeroth-Order Gradient Estimation with Structure-Preserving Metrics for Geodesically Incomplete Manifolds}
\author{Shaocong Ma,  Heng Huang\thanks{This work was partially supported by NSF IIS 2347592, 2348169, DBI 2405416, CCF 2348306, CNS 2347617, RISE 2536663.} \\
Department of Computer Science\\
University of Maryland\\
College Park, MD 20742, USA \\
\texttt{\{scma0908, heng\}@umd.edu} 
}
\begin{document}

\maketitle

\begin{abstract} 
In this paper, we study Riemannian zeroth-order optimization in settings where the underlying Riemannian metric $g$ is geodesically incomplete, and the goal is to approximate stationary points with respect to this incomplete metric. To address this challenge, we construct structure-preserving metrics that are geodesically complete while ensuring that every stationary point under the new metric remains stationary under the original one. Building on this foundation, we revisit the classical symmetric two-point zeroth-order estimator and analyze its mean-squared error from a purely intrinsic perspective, depending only on the manifold's geometry rather than any ambient embedding. Leveraging this intrinsic analysis, we establish convergence guarantees for stochastic gradient descent with this intrinsic estimator. Under additional suitable conditions, an $\epsilon$-stationary point under the constructed metric $g'$ also corresponds to an $\epsilon$-stationary point under the original metric $g$, thereby matching the best-known complexity in the geodesically complete setting.  Empirical studies on synthetic problems confirm our theoretical findings, and experiments on a practical mesh optimization task demonstrate that our framework maintains stable convergence even in the absence of geodesic completeness.
\end{abstract}

\doparttoc 
\faketableofcontents  

\section{Introduction}  
In this work, we consider the stochastic optimization problem on the smooth manifold $\mathcal{M}$ equipped with a Riemannian metric $g$:
\begin{equation}
\min_{p \in \mathcal{M}} f(p) = \mathbb{E}_{\xi \sim \Xi} [f(p; \xi)],
\label{eq:manifold_opt}
\end{equation}
where $(\mathcal{M}, g)$ forms a $d$-dimensional Riemannian manifold, the individual loss $f(\cdot;\xi):\mathcal{M}\to\mathbb{R}$ is a smooth function depending on a random data point $\xi$ drawn from a distribution $\Xi$. The Riemannian metric $g$ allows us for defining the first-order gradient $\nabla f(p;\xi)$ in the tangent space at each $p \in \mathcal{M}$, leading to the standard first-order Riemannian stochastic gradient method \citep{ring2012optimization,Bonnabel2013,smith2014optimization,sato2021riemannian}.   In many practical scenarios, especially when the system incorporates non-differentiable external solvers or black-box objective functions especially when dealing with non-differentiable modules or black-box objective functions, the explicit gradient of the objective function is either unavailable or prohibitively expensive to compute. This practical challenge necessitates the use of zeroth-order optimization technique to approximate the gradient direction solely using the function evaluation \citep{nesterov2017random, li2023stochastic}, given by 
\begin{align}\label{eq:naive-grad-estimator}
\widehat{\grad} f(p;\xi) = \frac{f(\exp_p(\mu v);\xi) - f(\exp_p(-\mu v);\xi)}{2\mu} v,
\end{align} 
where \(v\) is a random vector sampled from a distribution over the tangent space \(T_p \mathcal{M}\), and \(\mu > 0\) is the perturbation stepsize. The exponential map $\exp_p:\mathcal{B} \subset T_p\mathcal{M}   \to\mathcal{M}$ sends a tangent vector $v\in T_p \calM$ to the manifold $\calM$ along the geodesic starting at $p$, with $\mathcal{B}$ denoting an open ball centered at the origin in $T_p \calM$. In practice, the exponential map is often replaced by a first-order approximation known as a \textit{retraction} (\Cref{def:retraction}).
\vspace{-3pt}
\subsection{Challenges in Riemannian Zeroth-Order Optimization}
\vspace{-3pt}
While existing analyses of Riemannian zeroth-order optimization establish convergence guarantees under various algorithms and assumptions \citep{Chattopadhyay2015Powell,Fong2019DFO,Wang2022GW,wang2022convergence,Maass2022Tracking,nguyen2023stochastic,li2023stochastic,Li2023RASA,wang2023sharp,He2024Accelerated,Wang2023,goyens2024registration,zhou2025inexact,Ochoa2025Hybrid},   a fundamental yet often overlooked issue arises from the \textbf{local} nature of the exponential map (or, more generally, retractions).   In practice, Riemannian zeroth-order methods often endow $\mathcal{M}$ with an \emph{Euclidean} metric $g_E$ by viewing it as a submanifold of an ambient Euclidean space $\RR^n$ and inheriting the metric from the embedding.  This setting helps simplify numerical computations, but it has a \textbf{fundamental limitation}: the inherited Euclidean metric $g_E$ may not be \emph{geodesically complete}. Specifically, for a point $p \in \mathcal{M}$, the exponential map $\exp_p$ is not necessary globally defined over the entire tangent space $T_p \mathcal{M}$. Consequently,  a randomly sampled tangent vector $v \in T_p\mathcal{M}$ may fall outside the domain of $\exp_p$, making $\exp_p(v)$ \textbf{undefined}.  Theoretically, one could instead begin with a \emph{geodesically complete} metric, under which the exponential map $\exp: T\mathcal{M} \to \mathcal{M}$ is globally defined on the full tangent bundle $T\mathcal{M}$. The Nomizu-Ozeki theorem \citep{nomizu1961existence,lee2018introduction} guarantees the existence of such a complete metric on any smooth manifold without boundary. Then by applying the Nash embedding theorem \citep{nash1956imbedding}, one could, in principle, obtain an equivalent geodesically complete Euclidean metric, allowing direct application of existing convergence analyses. \textbf{However}, the constructive proof of Nash’s theorem is numerically nontrivial, making it infeasible for practical optimization algorithms.

This challenge motivates us to consider the following natural question:
\begin{center}
\begin{minipage}[t]{0.86\textwidth}
\textit{\textbf{Q:}} How can we perform Riemannian zeroth-order optimization when the canonical Euclidean metric is geodesically incomplete?
\end{minipage}
\end{center} 
To answer this question, we need to develop a Riemannian zeroth-order optimization algorithm for a given metric $g$ that may not be geodesically complete, yet remains capable of finding a stationary point. 
 Our contributions are outlined in the following subsection.



\subsection{Contributions} 

\textbf{Contribution 1 (Structure-Preserving Metric Construction):} To address the potential \emph{geodesic incompleteness} of the given metric $g$, we construct the \emph{structure-preserving metrics} $g'$ (\Cref{def:structure-preserving}) in \Cref{thm:structure-preserving} that: \textit{(i)} is geodesically complete, \textit{(ii)} is conformally equivalent to the original metric $g$,  and \textit{(iii)} ensures any $\epsilon$-stationary point under $g$ is also an $\epsilon$-stationary point under $g'$. These properties allow us to work with the new metric $g'$ while maintaining the desired property as the original metric $g$.  However, adopting the structure-preserving metric raises a fundamental challenge: the geometry induced by $g'$ generally differs from that of $g$. In particular, $g'$ is typically no longer an \emph{Euclidean} metric inherited from the original ambient Euclidean space,  which precludes the direct use of standard Riemannian zeroth-order gradient estimators \citep{Li2023RASA,li2023stochastic}. Overcoming this mismatch between estimator design and underlying geometry leads to our second contribution.

\textbf{Contribution 2 (Intrinsic Zeroth-Order Gradient Estimation):} Rather than finding a new ambient Euclidean space for the structure-preserving metric $g'$,  we develop an \textit{intrinsic} framework for zeroth-order optimization under non-Euclidean Riemannian metrics that relies solely on the manifold structure itself, and not on any embedding or representation in a larger ambient space. Under this intrinsic framework, we further analyze the mean-squared error (MSE) of the classical symmetric two-point zeroth-order gradient estimator (\Cref{eq:naive-grad-estimator}) under an arbitrary geodesically complete metric $g$ in \Cref{thm:mse-riem-zo}, revealing the fundamental connection between the approximation error of gradient estimator and the curvature of the underlying manifold: 
\begin{align*}
    \mathbb{E}_{v\sim\mathrm{Unif}(\mathbb{S}^{d-1})} \Bigl[\bigl\|\widehat{\nabla}f(p;v)-\frac{1}{d}\nabla f(p)\bigr\|_p^{2}\Bigr] \leq  \frac{1+ \mu^2 \kappa^2 }{d}   \bigl\|\nabla f(p)\bigr\|_p^{2} +\mathcal{O}(\mu^2).
\end{align*}
where $v\sim\mathrm{Unif}(\mathbb{S}^{d-1})$ is uniformly drawn from the unit sphere $\mathbb{S}^{d-1} \subset T_p \mathcal{M}$ induced by $g'$, $\widehat{\nabla}f(p;v)$ is the gradient estimator given by \Cref{eq:naive-grad-estimator}, and $\kappa$ is a uniform upper bound on the absolute sectional curvature of $(\mathcal{M}, g')$. In the flat case $\kappa = 0$, the bound reduces to the classical approximation error for zeroth-order gradient estimation in Euclidean spaces. Building on this result, \Cref{thm:sgd-convergence} establishes the convergence of SGD under a general Riemannian metric $g$.  

\textbf{Contribution 3 (Efficient Sampling under General Metrics):} Moreover, sampling uniformly from the unit sphere $\mathbb{S}^{d-1} \subset T_p \mathcal{M}$ with respect to a general Riemannian metric $g$ is nontrivial. We show that the commonly used rescaling approach (\textit{i.e.} drawing a Gaussian vector and normalizing it to $g$-unit length) introduces an inherent bias under non-Euclidean metrics. To overcome this issue, we apply the rejection sampling method \citep{devroye2006nonuniform} to \Cref{alg:sampling}, an unbiased sampling procedure for generating $g$-unit-length tangent vectors. In \Cref{prop:sampling}, we prove that the output distribution of our method is exactly uniform over $\mathbb{S}^{d-1}$.  
 
\textbf{Contribution 4 (Empirical Validation):} Lastly, to validate our theoretical results and demonstrate the empirical effectiveness of the proposed framework, we conduct extensive experiments on both synthetic and the practical experiments. Synthetic experiments examine: \textit{(i)} the impact of sampling bias arising from rescaling sampling, and \textit{(ii)} the influence of geometric curvature on estimation accuracy. In the mesh optimization task, our method further shows practical effectiveness in scenarios where geodesic completeness is absent. 
\vspace{-3pt}
\subsection{Applications}
\vspace{-3pt}
In this section, we highlight several applications of Riemannian zeroth-order optimization where the underlying manifold is geodesically incomplete. 

\paragraph{Mesh Optimization} In physical simulations, mesh optimization is essential for improving discretized surface quality. Modern neural physical models, such as CFD-GCN \citep{belbute2020combining}, adjust vertex positions by optimizing a quality metric, usually involving an external PDE solver. A major bottleneck is the requirement to implement auto-differentiation through this solver to obtain gradients, which is fundamentally difficult.  Riemannian zeroth-order optimization offers a compelling alternative by avoiding this gradient calculation. In this setting, the manifold consists of the valid configuration space of vertex positions. This manifold, however, is geodesically incomplete under the Euclidean metric, because configurations on the boundary (e.g., a vertex on an edge) are excluded to prevent numerical instability. 

\paragraph{Irrigation System Layout Design} 
This application seeks to optimize the physical coordinates of sprinklers to maximize water coverage. The coverage objective function is often a complex, non-differentiable simulation (e.g., modeling spray overlap, pressure, and wind), making it difficult to compute gradients. Riemannian zeroth-order optimization provides a gradient-free solution. The underlying manifold is the configuration space of valid sprinkler positions, defined by the open set within the field's boundaries. This manifold is geodesically incomplete, as typically we cannot directly put the sprinklers on the boundary of the field. 

\paragraph{Covariance Matrix Estimation}  This is a fundamental problem in multivariate statistics and machine learning, essential for tasks like PCA and Gaussian modeling. The goal is to find a matrix that best represents the data's covariance, often by minimizing a loss function (e.g., maximizing likelihood). The underlying manifold is the set of all $d\times d$ \textit{positive definite matrices}, denoted $S_d^{++}$. A matrix $C$ is in this manifold if it is symmetric and $x^T C x > 0$ for all non-zero vectors $x \in \RR^d$. This manifold is geodesically incomplete because it is an open convex cone.

In summary, the incompleteness in these examples poses a fundamental challenge, as existing literature typically requires geodesic completeness for gradient estimation. This limitation motivates our work to develop a framework that can perform Riemannian zeroth-order optimization without geodesic completeness.

\section{Main Results}\label{sec:main-results}
In this section, we present the main results of this paper: \textit{(i)} We propose the concept of structure-preserving metric (\Cref{def:structure-preserving}) and provide its construction based on an arbitrary given metric $g$ (\Cref{thm:structure-preserving}).  \textit{(ii)} Then we derive the approximation error upper bound of the two-point zeroth-order gradient estimator \textit{intrinsically}; that is, it does not rely on how the manifold is embedded into the ambient space (\Cref{thm:mse-riem-zo}). \textit{(iii)} To numerically obtain the gradient estimator under a general Riemannian metric $g$, we adopt the rejection sampling algorithm (\Cref{alg:sampling}) to sample from the $g$-unit sphere. Later, \Cref{prop:sampling} guarantees that the sampled vector satisfies the desired property. \textit{(iv)} In \Cref{thm:sgd-convergence}, we establish the convergence of SGD under a general Riemannian metric $g$.  

Here we summarize the assumptions used in our theoretical analysis. A brief manifold preliminary is included in \Cref{sec:preliminaries}. Detailed discussions of each assumption are provided in \Cref{appendix-assumption}.
\begin{assumption}\label{ass:bounded-hessian}
    In the optimization problem given by \Cref{eq:manifold_opt}, the individual loss function $f(\cdot;\xi): \calM \to \RR$ 
    satisfies the following two properties: \textit{(a)} $L$-Bounded Hessian; for all $p \in \calM$, the Hessian matrix at the point $p$ is bounded by $L$. \textit{(b)} Lower boundedness; the infimum $f_\xi^* := $ exists almost surely with $\xi \sim \Xi$.  
\end{assumption}
The following assumption imposes a regularization condition on the retraction used in \Cref{thm:sgd-convergence}. While it is always possible to construct a pathological retraction that deviates substantially from the exponential map, such choices may still scale with $\|v\|_p$ but would negatively affect the final convergence rate. 
\begin{assumption}\label{ass:retraction}  Let $f:\calM \to \RR$ be a smooth function. There exists a constant $C_\Ret \geq 0$ such that
    \begin{align*}
        | f(\Ret_p(v) ) - f(\exp_p(v) ) | \leq C_\Ret    \|v\|_p^2.
    \end{align*} 
\end{assumption} 
\begin{assumption}\label{assumption:uniform-bound}
There exist constants $\rho>0$ and $M_3,M_4>0$ such that   
 $\bigl\|\nabla^{3}f(q)\bigr\|_{\mathrm{HS}}
   \le M_3$ and $
  \bigl\|\nabla^{4}f(q)\bigr\|_{\mathrm{HS}}
   \le M_4$ 
for all $q\in \mathcal{B}_p(p,\rho)$, where $\mathcal{B}_p(p,\rho)$ denotes the geodesic ball of radius~$\rho$
and $\|\cdot\|_{\mathrm{HS}}$ is the Hilbert-Schmidt norm. 
\end{assumption} 
\begin{assumption}\label{assumption:uniform-sectional-curvature}
There exists a constant \(\kappa\ge 0\) such that the sectional curvature of the Riemannian manifold \((\calM,g)\) satisfies  $|K_p(\sigma)| \le \kappa,$  for every point  $p\in\calM$  and every $2$-plane  $\sigma\subset T_p\calM$. 
Equivalently, \( -\kappa\le K_p(\sigma)\le\kappa \) for all \(p\) and \(\sigma\).  
\end{assumption}

\subsection{Structure-Preserving Metric} 
We begin with the definition of a \textit{structure-preserving metric} associated with  a given metric $g$. Since the exponential map of an arbitrary Riemannian metric $g$  is not necessarily globally defined on the entire tangent bundle $T \calM$ (\Cref{prop:local}), we seek an alternative metric $g'$  that is geodesically complete while preserving the essential geometric behavior of the original metric $g'$. This consideration motivates the following definition:
\begin{definition}\label{def:structure-preserving}
    Let $(\mathcal{M}, g)$ be a Riemannian manifold.  
A Riemannian metric $g'$ is called \emph{structure-preserving} with respect to $g$ if it satisfies:  
    \begin{enumerate}[(a)]
        \item \textbf{(Geodesic completeness)} There exists $\rho>0$ such that for any $p\in \calM$, the domain of the exponential map $\exp_p:T_p\calM \to \calM$ contains the ball $\mathcal{B}_p(\rho):= \{ v \in T_p\calM:\, \|v\|_g \leq \rho \}.$ 
        \item \textbf{(Conformal equivalence)} There exists a positive smooth function $h:\calM \to \RR$ such that $g'_p(v,w)=h(p) g_p(v,w)$ for all $p\in \calM$ and all $v,w\in T_p  \calM$. 
        \item  \textbf{($\epsilon$-stationarity preservation)} For any smooth function $f:\calM\to \RR$ and $\epsilon>0$,  every $\epsilon$-stationary point of $f$ under $g$ \footnote{A point $p\in \calM$ is called an \textit{$\epsilon$-stationary point} of the smooth function $f$ under the Riemannian metric $g$ if the length of its gradient at $p$ is less than $\epsilon$; that is, $\sqrt{g_p(\nabla f(p) , \nabla f(p) ) }< \epsilon$. } is also an $\epsilon$-stationary point of $f$ under $g'$.
    \end{enumerate}
\end{definition}  
Here, we include a brief discussion on the motivation for introducing each condition: \textit{(i)} The first condition (\textit{geodesic completeness}) ensures that if we set the perturbation stepsize $\mu< \rho$ and fix the random vector $v$ on the $g$-unit sphere $\mathbb{S}^{d-1}\subset T_p \calM$, the perturbed point $\mu v\in T_p \calM$ will always be within the domain of the exponential map.  \textit{(ii)} The \textit{conformal equivalence} condition preserves the set of stationary points; that is, for any smooth function $f:\mathcal{M} \to \mathbb{R}$, if $p$ is a stationary point under $g$, then it is also a stationary point under $g'$, and \textit{vice versa}. \textit{(iii)} The \textit{$\epsilon$-stationarity preservation} condition gives rise to the name ``stationary-preserving metric''. It states that any $\epsilon$-stationary point under $g$ remains an $\epsilon$-stationary point under $g'$, ensuring that the transformation leaves the original set of $\epsilon$-stationary points unchanged.   We emphasize, \textbf{however}, that the converse need not hold: an $\epsilon$-stationary point under $g'$ is generally not an $\epsilon$-stationary point under $g$. Nevertheless, under suitable conditions, this asymmetry does not affect the overall complexity guarantees as we will discuss it in \Cref{coro:complexity}.

In the following theorem, we demonstrate that given a metric $g$, it is always possible to construct a metric $g'$  which is \textit{structure-preserving} with respect to $g$.

\begin{theorem}\label{thm:structure-preserving}
    Let $\calM$ be a smooth manifold (possibly non‐compact), and let $g$ be any Riemannian metric on $\calM$.  Then there exists a Riemannian metric $g'$ on $\calM$ which is structure-preserving with respect to $g$.
\end{theorem}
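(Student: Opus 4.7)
The plan is to define $g' := h\cdot g$ as a conformal rescaling of $g$ by a smooth positive function $h \geq 1$, chosen so that the resulting metric is geodesically complete. First, I would invoke the Nomizu-Ozeki theorem \citep{nomizu1961existence}, which asserts that every Riemannian manifold admits a conformally equivalent complete metric. This supplies a positive smooth $h_0:\calM\to(0,\infty)$ such that $h_0\,g$ is geodesically complete. The factor $h_0$ may drop below $1$ in some regions, which would break $\epsilon$-stationarity preservation, so I would then set $h := h_0 + 1$ and take $g' := h\,g$; this gives a smooth conformal factor strictly greater than $1$ everywhere.

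Given this construction, verifying the three defining properties of \Cref{def:structure-preserving} is largely bookkeeping. Conformal equivalence (condition (b)) holds by construction with smooth factor $h > 0$. For $\epsilon$-stationarity preservation (condition (c)), the standard conformal transformation identity $\nabla^{g'} f = h^{-1}\nabla^{g} f$ yields
\[
\|\nabla^{g'} f\|_{g'}^2 \;=\; h^{-1}\|\nabla^{g} f\|_{g}^2 \;\leq\; \|\nabla^{g} f\|_{g}^2,
\]
where the inequality uses $h \geq 1$; hence every $g$-$\epsilon$-stationary point is automatically $g'$-$\epsilon$-stationary. For geodesic completeness (condition (a)), the key observation is that $g' = h_0\,g + g \geq h_0\,g$ as quadratic forms, so the Riemannian distances satisfy $d_{g'} \geq d_{h_0 g}$. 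Any $d_{g'}$-Cauchy sequence is therefore $d_{h_0 g}$-Cauchy and, by completeness of $h_0 g$, converges topologically to some limit; since $d_{g'}$ induces the manifold topology, that topological convergence lifts back to $d_{g'}$-convergence, making $g'$ metrically complete. Hopf-Rinow then upgrades this to geodesic completeness of $g'$, so the exponential map of $g'$ is defined on all of $T_p\calM$ for every $p$, and condition (a) holds trivially for any $\rho > 0$.

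The main technical subtlety is the completeness transfer from $h_0 g$ to $g' = (h_0+1)g$: pointwise dominance $g' \geq h_0 g$ does not by itself imply inheritance of completeness, and the argument requires pairing Hopf-Rinow with the fact that both Riemannian distances induce the same manifold topology, so that topological and metric convergence coincide for each. Everything else -- the conformal gradient rescaling identity, the additive modification $h_0 \mapsto h_0 + 1$, and the vacuous verification of condition (a) once completeness is secured -- is routine.
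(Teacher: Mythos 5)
Your proof is correct, and it takes a genuinely different route from the paper's. The paper does not invoke Nomizu--Ozeki as a black box; instead it reconstructs that argument explicitly. It builds a smooth proper exhaustion function $\rho:\calM\to[0,\infty)$, sets $h(p) := (\|\nabla\rho(p)\|_p^2 + 1)^\vartheta$ with $\vartheta\ge 1$, and proves completeness of $g' = h\,g$ directly by showing that the $g'$-length of any curve is bounded below by $|\rho(\gamma(b)) - \rho(\gamma(a))|$; properness of $\rho$ then traps Cauchy sequences inside compact sets. In this construction $h\ge 1$ and completeness hold simultaneously by design, so no transfer argument is needed. By contrast you treat Nomizu--Ozeki as a black box to obtain a complete conformal metric $h_0 g$, then patch the factor additively to $h := h_0 + 1$ to enforce $h\ge 1$, and recover completeness of $g' = (h_0+1)g$ via the comparison $d_{g'}\ge d_{h_0 g}$ plus the fact that both Riemannian distances induce the manifold topology. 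Your completeness-transfer step is sound (a $d_{g'}$-Cauchy sequence is $d_{h_0 g}$-Cauchy, converges, and topological convergence coincides with metric convergence for any Riemannian distance), and your observation that pointwise dominance alone is insufficient and requires the topological equivalence is the correct subtlety to flag. The trade-off: your route is shorter and more modular but relies on a lemma the paper never needs; the paper's route is longer but self-contained and produces an explicit $h$ in one shot. Both establish the same three conditions, and your handling of the $\epsilon$-stationarity identity $\|\nabla^{g'} f\|_{g'}^2 = h^{-1}\|\nabla^g f\|_g^2$ matches the paper's exactly.
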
  
\begin{proof} 
The proof follows the classical construction presented by \citet{nomizu1961existence} with modifying the conformal coefficient $h: \calM \to (0, +\infty)$ to ensure the $\epsilon$-stationarity preservation condition presented in \Cref{def:structure-preserving}. The full proof is provided in \Cref{sec:structure-preserving-appendix}. 
\end{proof}

As illustrated in \Cref{fig:conformal-metrics}, the metrics constructed in this theorem ensure that geodesics remain within the manifold for all directions and lengths, eliminating concerns that random perturbations in zeroth-order gradient estimation could map outside the domain of the exponential map. Moreover, the conformal equivalence condition given by \Cref{def:structure-preserving} preserves the set of stationary points; therefore, in Riemannian zeroth-order optimization, it suffices to work with the new metric $g'$.


\newlength{\plotwidth}
\setlength{\plotwidth}{0.235\textwidth}  
\begin{figure}[h]
  \centering
  \begin{subfigure}[b]{0.24\textwidth}
    \centering
    \includegraphics[width=\plotwidth]{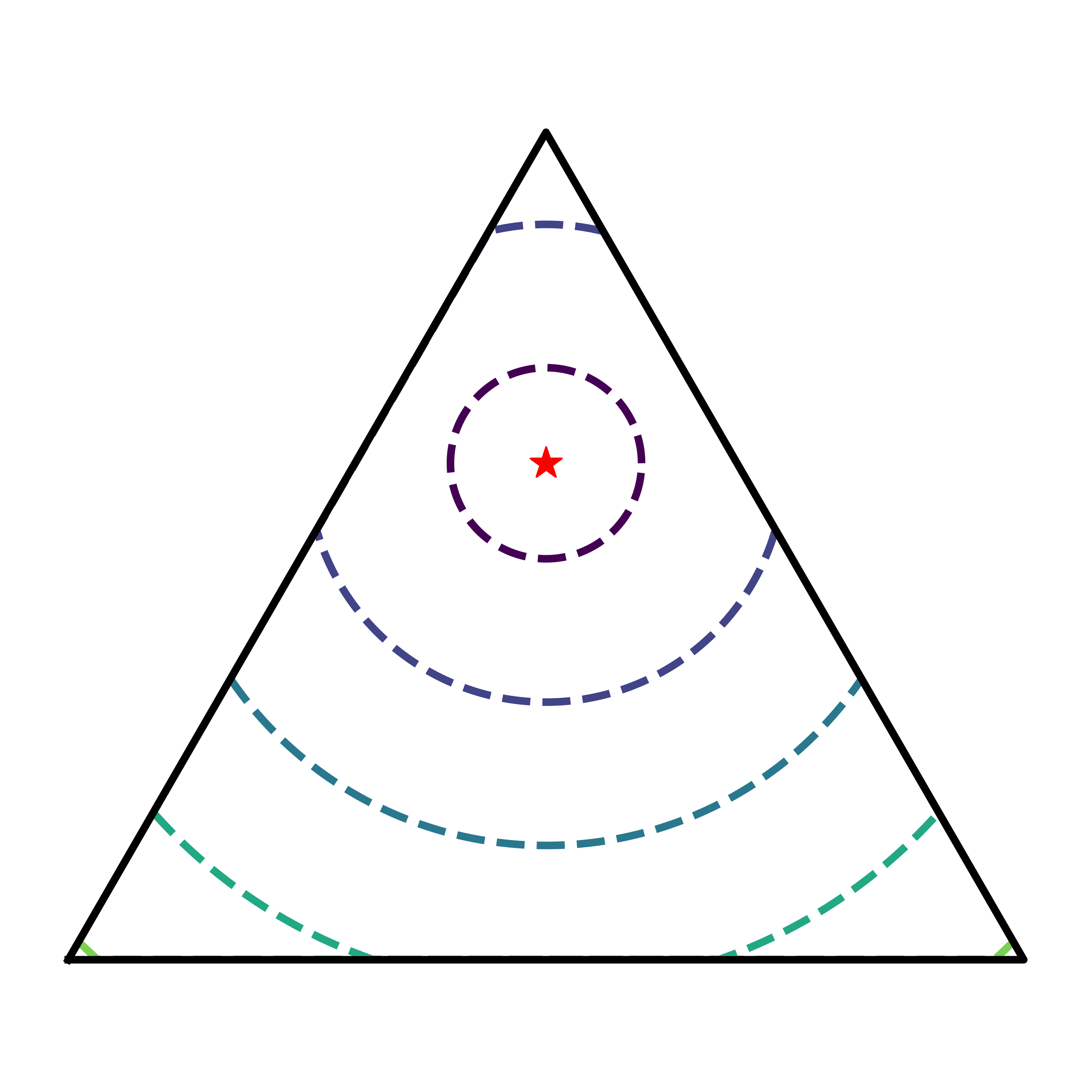}
    \caption{Euclidean Metric}
    \label{fig:euclidean}
  \end{subfigure}
  \hfill
  \begin{subfigure}[b]{0.72\textwidth}
    \centering
    \includegraphics[width=\plotwidth]{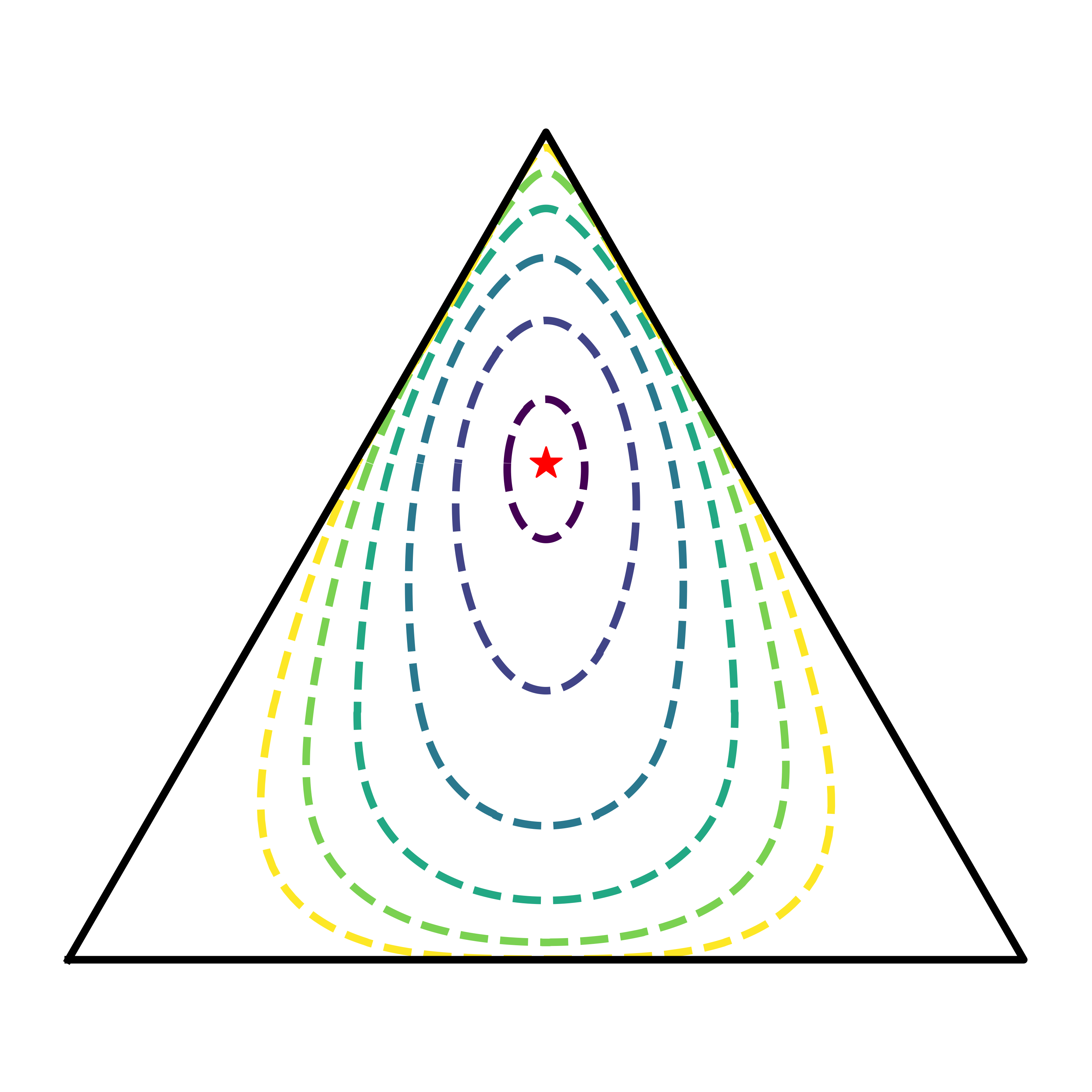}
    \includegraphics[width=\plotwidth]{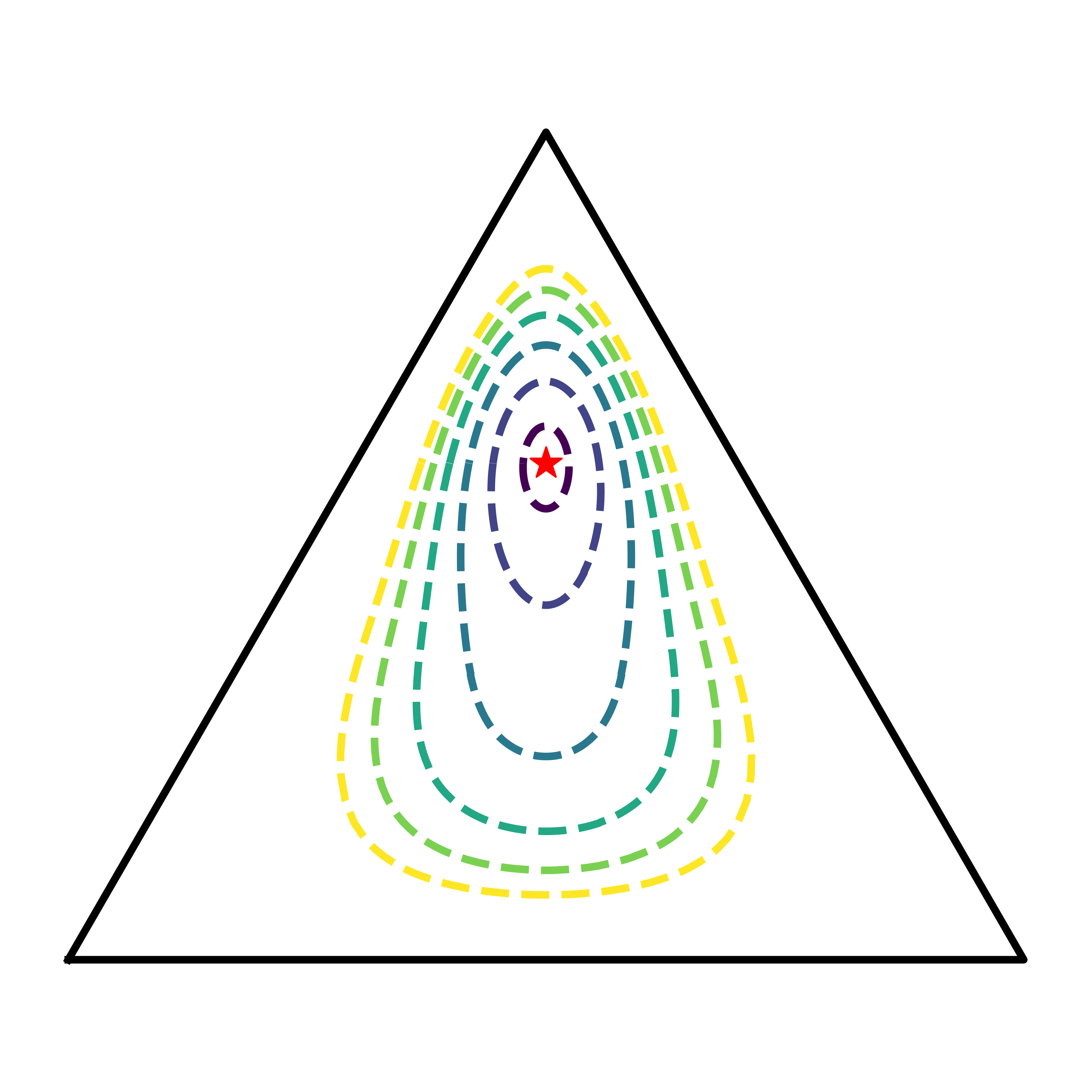}
    \includegraphics[width=\plotwidth]{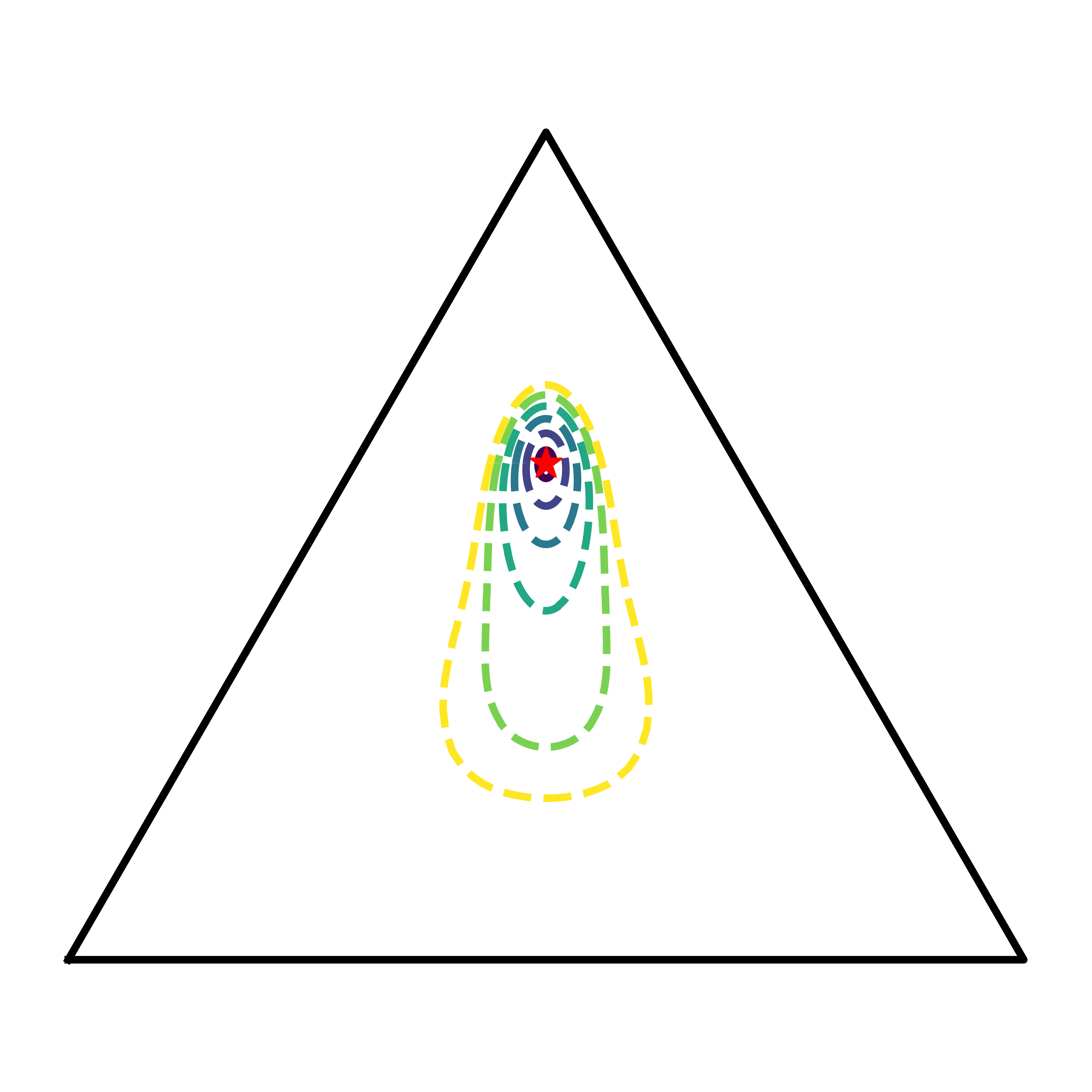}
    \caption{Structure Preserving Metrics}
    \label{fig:conformal-metrics}
  \end{subfigure} 
  \caption{Geodesic contours centered at $p=(0.2,0.2,0.6)$ under the Euclidean metric (\Cref{fig:euclidean}) and three structure-preserving metrics (\Cref{fig:conformal-metrics}). Radii range from $0.1$ to $0.9$ in steps of $0.15$. Under each structure-preserving metric, geodesics from $p$ never exit the probability simplex, regardless of direction or length. 
  }
  \label{fig:visualization-gc-metric}
\end{figure}


\paragraph{Challenges Arising from the Structure-Preserving Metric}
Although \Cref{thm:structure-preserving} ensures that the constructed metric $g'$ satisfies the desired properties, existing results in Riemannian zeroth-order optimization cannot be applied directly to establish convergence guarantees under $g'$. This limitation arises because much of the current literature assumes a \emph{Euclidean} setting, where $\calM$ is embedded in a Euclidean space and the gradient estimation is determined by that embedding. In contrast, the new metric $g'$ is generally \emph{non-Euclidean} with respect to the original ambient Euclidean space of $g$.  To address this obstacle, we are motivated to develop an \emph{intrinsic} zeroth-order optimization framework that operates solely on the manifold’s geometry, without requiring $\calM$ to be viewed as a subset of any Euclidean space.

\subsection{Intrinsic Zeroth-Order Gradient Estimation under Non-Euclidean Metric} 

In this section, we introduce the intrinsic approach to estimate the gradient of the function $f:\calM \to \RR$ without relying on the ambient space. We take $g$ as a geodesically complete metric and consider the classical symmetric estimator 
\begin{align}\label{eq:center-estimator}
\widehat{\nabla} f(p) = \frac{f(\exp_p(\mu v)) - f(\exp_p(-\mu v))}{2\mu } v,
\end{align} 
where $\exp_p\!: T_p\calM \to \calM$ is the exponential map. As noted by \citet{Bonnabel2013}, it is common to replace the exponential map with the retraction (\Cref{def:retraction}).   

The following theorem characterizes the mean-squared error (MSE) of this zeroth-order gradient estimator, establishing a connection between its approximation error and the intrinsic geometric properties of the underlying Riemannian manifold. The result is derived under the assumptions of bounded third- and fourth-order derivatives (\Cref{assumption:uniform-bound}) and globally bounded sectional curvature (\Cref{assumption:uniform-sectional-curvature}). The full upper bound and the proof is deferred to \Cref{appendix:variance-analysis}. 
\begin{theorem}
\label{thm:mse-riem-zo} 
Let $(\mathcal{M},g)$ be a complete $d$-dimensional
Riemannian manifold and $p\in\mathcal{M}$. 
Let $f:\mathcal{M}\to\mathbb{R}$ be a smooth function and suppose that \Cref{assumption:uniform-bound,assumption:uniform-sectional-curvature} hold.
Fix a perturbation stepsize $\mu>0$ satisfying $\mu^2 \leq \min\{ \frac{1}{d-1}, \frac{1}{2}+\frac{6}{d}+\frac{8}{d^2}\},$ 
and for any unit vector $v\in T_p\mathcal{M}$ define the
symmetric zeroth-order estimator as in \Cref{eq:center-estimator}.
Then, for $v\sim\mathrm{Unif}(\mathbb{S}^{d-1})$ uniformly sampled from the $g_p$-unit sphere in $T_p \calM$,
\begin{align*}
    \mathbb{E}_{v\sim\mathrm{Unif}(\mathbb{S}^{d-1})} \Bigl[\bigl\|\widehat{\nabla}f(p;v)-\frac{1}{d}\nabla f(p)\bigr\|_p^{2}\Bigr] \leq  \frac{1+ \mu^2 \kappa^2 }{d}   \bigl\|\nabla f(p)\bigr\|_p^{2} + \mathcal{O}(\mu^2).
\end{align*}
\end{theorem}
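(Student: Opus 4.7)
The plan is to reduce the analysis to a one-variable Taylor expansion of $f$ along the radial geodesic $\gamma_v(t):=\exp_p(tv)$, exploit the exact cancellation of even-order terms built into the symmetric two-point formula, and then average the resulting expression over $v\sim\mathrm{Unif}(\mathbb{S}^{d-1})$ using standard spherical moment identities. Since $(f\circ\gamma_v)^{(k)}(0)=\nabla^{k}f(p)(v,\dots,v)$ for every smooth $f$, \Cref{assumption:uniform-bound} together with the stepsize restriction on $\mu$ give the intrinsic Taylor expansion
\begin{align*}
f(\exp_p(\pm\mu v)) = f(p) \pm \mu\,\langle\nabla f(p),v\rangle_p + \tfrac{\mu^{2}}{2}\hess f(p)(v,v) \pm \tfrac{\mu^{3}}{6}\tres f(p)(v,v,v) + R_{\pm}(\mu),
\end{align*}
with Lagrange remainder $|R_{\pm}(\mu)|\le \tfrac{\mu^{4}}{24}M_{4}$ valid on the geodesic ball $\mathcal{B}_p(p,\rho)$.

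First I would subtract, divide by $2\mu$, and multiply by $v$: the $f(p)$ and Hessian contributions cancel identically, leaving
\begin{align*}
\widehat{\nabla}f(p;v) = \langle\nabla f(p),v\rangle_p\, v + \tfrac{\mu^{2}}{6}\tres f(p)(v,v,v)\, v + O(\mu^{3}M_{4})\, v.
\end{align*}
Second, I would expand $\bigl\|\widehat{\nabla}f(p;v)-\tfrac{1}{d}\nabla f(p)\bigr\|_p^{2}$ into (i) a leading term involving only $\langle\nabla f,v\rangle v$ and $\nabla f$, (ii) a cross term linear in $\tres f$, and (iii) a quadratic term in $\tres f$. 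The leading term is the classical Euclidean zeroth-order identity on the sphere: applying $\mathbb{E}[\langle a,v\rangle^{2}]=\|a\|_p^{2}/d$ yields
\begin{align*}
\mathbb{E}\Bigl[\bigl\|\langle\nabla f,v\rangle v - \tfrac{1}{d}\nabla f\bigr\|_p^{2}\Bigr] = \tfrac{d-1}{d^{2}}\,\|\nabla f(p)\|_p^{2} \le \tfrac{1}{d}\,\|\nabla f(p)\|_p^{2}.
\end{align*}
Third, I would evaluate the cross and quadratic pieces, which are polynomials of degree at most four in $v$, using the standard fourth-moment identity $\mathbb{E}[v_iv_jv_kv_l]=(\delta_{ij}\delta_{kl}+\delta_{ik}\delta_{jl}+\delta_{il}\delta_{jk})/[d(d+2)]$, and bounding them via $\|\tres f\|_{\mathrm{HS}}\le M_{3}$ and $\|\nabla^{4}f\|_{\mathrm{HS}}\le M_{4}$ from \Cref{assumption:uniform-bound}; this produces contributions of order $\mu^{2}M_{3}\|\nabla f(p)\|_p/d$ and $\mu^{4}M_{3}^{2}/d$ that must be parceled out between the multiplicative $\mu^{2}\kappa^{2}/d$ correction and the additive $O(\mu^{2})$ remainder.

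The curvature bound from \Cref{assumption:uniform-sectional-curvature} enters precisely at the step of isolating the $\mu^{2}\kappa^{2}/d$ factor on $\|\nabla f(p)\|_p^{2}$. On a curved manifold $\tres f$ fails to be fully symmetric, and the Ricci identity decomposes its antisymmetric part as a contraction of the Riemann tensor with $\nabla f$, uniformly bounded by $\kappa\|\nabla f(p)\|_p$. Inserting this decomposition into the cross term and applying Cauchy--Schwarz, the $\nabla f$-proportional curvature piece combines with the leading sphere moment to give the advertised $(1+\mu^{2}\kappa^{2})/d\,\|\nabla f(p)\|_p^{2}$ coefficient, while the pieces depending purely on $M_{3}$ and $M_{4}$ are absorbed into $O(\mu^{2})$ once the stepsize restriction $\mu^{2}\le\min\{\tfrac{1}{d-1},\,\tfrac{1}{2}+\tfrac{6}{d}+\tfrac{8}{d^{2}}\}$ is used to normalize the prefactors.

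The hardest step will be carrying out the curvature-dependent bookkeeping intrinsically, without recourse to an ambient embedding: one must track how the commutators for higher covariant derivatives inject Riemann-tensor terms into the sphere-average, and verify that these combine precisely into the clean $(1+\mu^{2}\kappa^{2})/d$ prefactor rather than a more awkward $\kappa$-polynomial tangled with $M_{3}$. The spherical moment computations and Cauchy--Schwarz bookkeeping are routine by comparison.
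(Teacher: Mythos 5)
Your plan matches the paper's proof in all essential respects: intrinsic Taylor expansion of $f\circ\exp_p$ along the radial geodesic to fourth order, cancellation of even-degree terms in the symmetric difference, decomposition of $\widehat{\nabla}f-\tfrac1d\nabla f$ into the leading rank-one piece, a $\mu^2$-weighted cubic piece proportional to $\nabla^3 f(p)(v,v,v)\,v$, and an $O(\mu^3)$ remainder, followed by the spherical second- and fourth-moment identities to compute or bound each piece. You also correctly locate where curvature enters: the fourth-moment contraction picks up trace terms of the form $T_{iik}T_{jjk}$, and the commutation identity for covariant derivatives writes $T_{iik}=\nabla_k(\Delta f)+\mathrm{Ric}_{k\ell}\nabla_\ell f$, whose Ricci part is then controlled through the sectional-curvature bound (Lemma on $\|\mathrm{Ric}\|_{\mathrm{op}}\le(d-1)\kappa$). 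The only differences are cosmetic: you state a Lagrange-form remainder while the paper keeps an integral remainder $\mathcal I_\pm$, you omit the explicit check that $\mathbb{E}\langle Z_0,R\rangle$ vanishes (an odd-moment argument the paper invokes before applying Young's inequality to the remaining cross term), and your operator bound on $\mathrm{Ric}$ drops the $(d-1)$ factor, which the paper compensates against the $d(d+2)(d+4)$ denominator. None of these affect the structure; the argument is essentially the one in the paper.
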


The bound in \Cref{thm:mse-riem-zo} reveals how the estimation error connects the intrinsic geometry of the manifold. In particular, the sectional curvature term $\kappa$ quantifies the influence of local geometry on the estimator’s variance. When $\kappa=0$, the curvature contribution disappears, and the bound reduces to the standard Euclidean variance expression.
 
\subsection{Sampling from the Non-Euclidean Unit Sphere} 

\begin{figure}[t]
    \centering

    \hfill
    \begin{subfigure}[t]{0.43\textwidth}
        \centering
        \includegraphics[width=\textwidth]{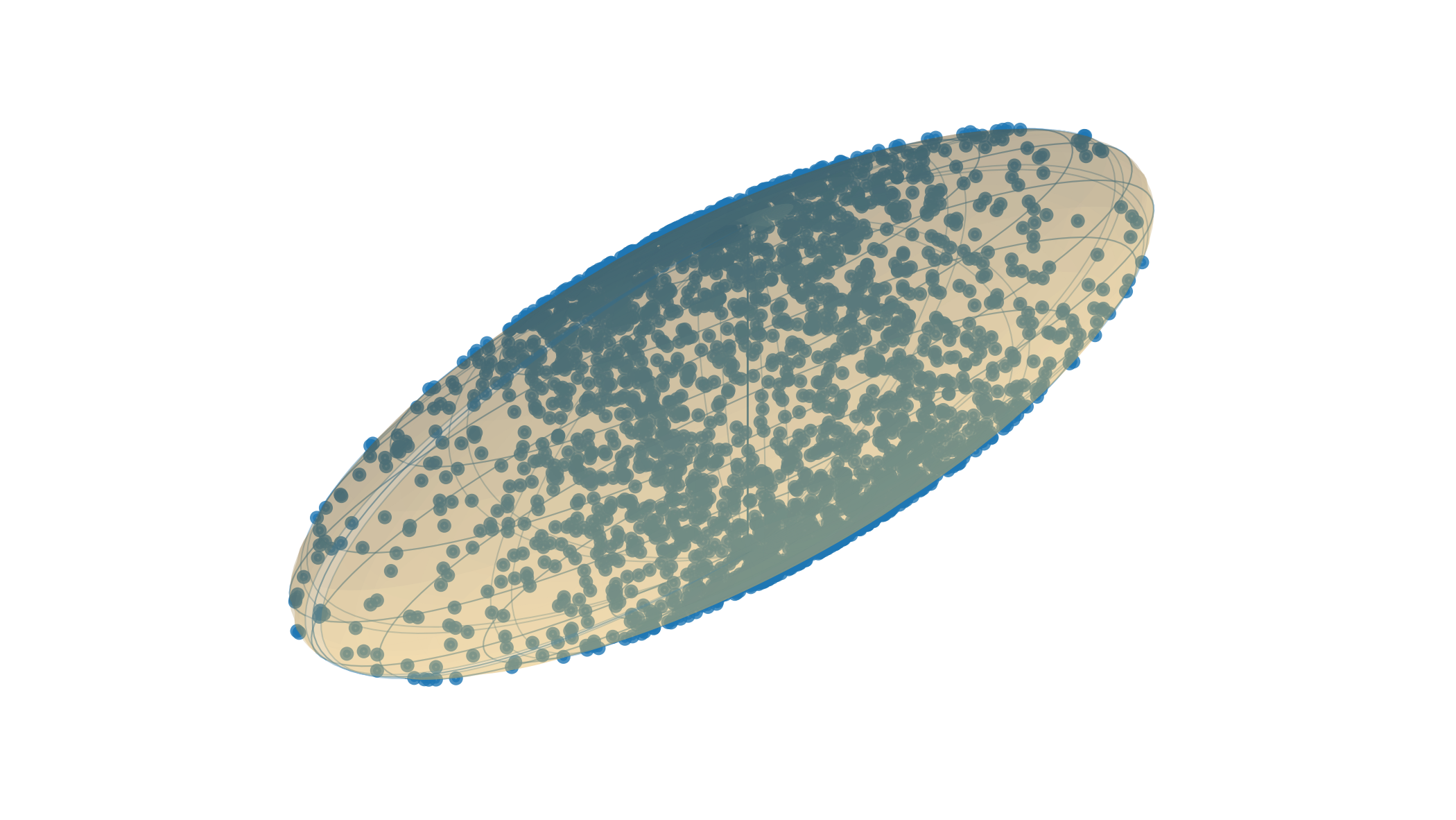} 
        \caption{Rescaling Sampling}
    \end{subfigure}\hfill
    \begin{subfigure}[t]{0.43\textwidth}
        \centering
        \includegraphics[width=\textwidth]{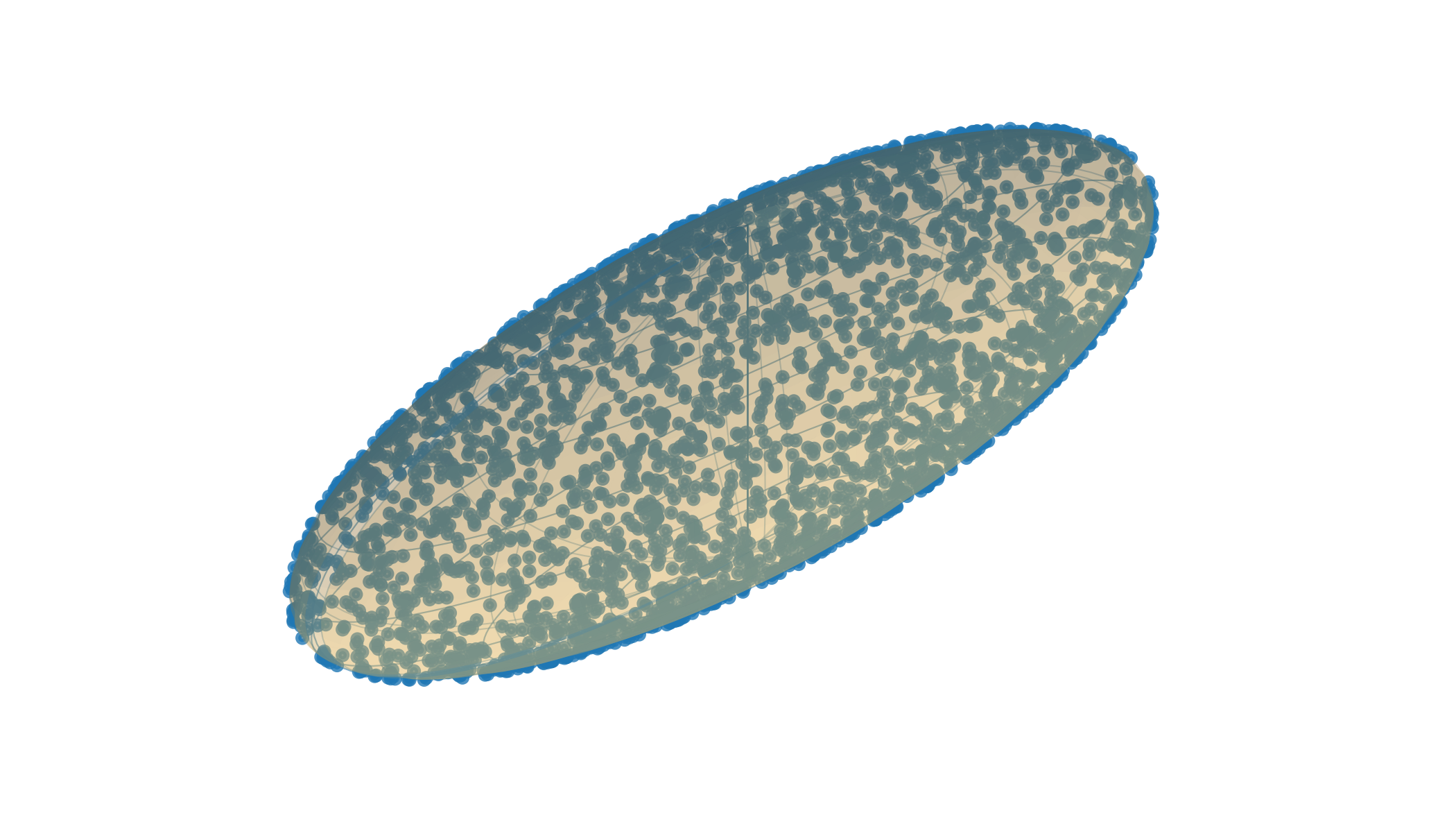}
        \caption{Rejection Sampling (\Cref{alg:sampling})} 
    \end{subfigure} 
    \caption{Illustration of sampling on the unit sphere induced by the non-Euclidean Riemannian metric $g$. The na\"{i}ve rescaling sampler (Left Panel) produces a visibly non-uniform distribution, leading to a biased estimator. Our rejection sampler (Right Panel) presented in \Cref{alg:sampling} eliminates the bias and yields an even, truly uniform distribution. }
    \label{fig:sampling}
\end{figure}

As the Riemannian metric $g$ defines a bilinear form on the tangent space $T_p \calM$, uniformly sampling the $g$-unit sphere $\mathcal{B}:=\{ v \in T_p \calM : g_p(v,v) = 1\}$ is equivalent to uniformly sample from the following compact set $\mathcal{C}:=  \{  v \in  \RR^d \colon\, v^\top A v = 1   \}$ for some positive definite matrix $A \in \RR^{d\times d}$. The matrix $A\succ 0$ is determined by the Riemannian metric $g$ and  the choice of local coordinates;  in practice, we commonly use the local coordinate system spanned by the basis $\{ \frac{\partial}{\partial x^i}\!|_p\}$. In this basis, the entries of $A$ are given by $A_{ij} := g_p(\frac{\partial}{\partial x^i}, \frac{\partial}{\partial x^j})$. 

\paragraph{Challenges in Sampling from the $g$-Unit Sphere} In Euclidean space, sampling from the unit sphere is relatively straightforward: one can sample from the standard Gaussian distribution and rescale the vector to have unit length. However, this method does not extend to the $g$-unit sphere. As illustrated in \Cref{fig:sampling}, rescaling-based sampling results in points being predominantly concentrated along the minor axes. To achieve a truly uniform distribution over the $g$-unit sphere, we adopt the rejection sampling method in \Cref{alg:sampling} that generates random vectors uniformly distributed over the compact set $\mathcal{C}$.

\begin{algorithm}[t]
\DontPrintSemicolon         
\caption{Uniform Sampling on Ellipsoid $\mathcal C=\{x\in\mathbb R^{d}\mid x^{\top}A\,x=1\}$}\label{alg:sampling}
\SetKwInput{KwIn}{Input}    
\SetKwInput{KwOut}{Output}

\KwIn{%
    A positive definite matrix $A\in\mathbb R^{d\times d}$\; 
}
\KwOut{%
    $v \in \mathbb R^{d}$\;
} 
$Q,\Lambda \leftarrow \text{eig}(A)$ \tcp*[r]{Eigenvalue Decomposition: $A = Q\Lambda Q^{\top}$, $\Lambda=\mathrm{diag}(\lambda_1,\dots,\lambda_d)$} 
$L \leftarrow Q\,\Lambda^{-1/2}$, $\lambda_{\max}=\max\{ \operatorname{diag}(\Lambda)\}$\;    
\While{\texttt{True}}{
    Draw $z \sim \mathcal N(0,I_{d})$; set $s \leftarrow z / \|z\|$; $v \leftarrow L\,s$ \tcp*[r]{proposal point on $\mathcal C$} 
    Draw $u \sim \mathcal U(0,1)$\;
    \If{$u < \sqrt{v^{\top}A^{2}v / \lambda_{\max}}$}{
        return $v$\;
    }
} 
\end{algorithm}

The following proposition confirms that our sampling strategy yields the desired properties; that is, the resulting output $v$ exactly follows the uniform distribution over the unit sphere determined by the Riemannian metric $g$. The detailed proof is deferred to \Cref{appendix:sampling}.  

\begin{proposition}\label{prop:sampling}
    Let the vector $v$ be generated by \Cref{alg:sampling}. Then it follows the uniform distribution over the compact set $\mathcal{C}:=  \{  v \in  \RR^d \colon\, v^\top A v = 1  \}$.  
\end{proposition}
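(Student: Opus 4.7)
My plan is to view \Cref{alg:sampling} as a standard rejection sampler targeting the uniform distribution on $\mathcal{C}$, and to verify that the proposal density and the acceptance probability combine to yield exactly the uniform measure. First I would check that every proposal lies on $\mathcal{C}$: since $L=Q\Lambda^{-1/2}$, a direct computation gives $L^{\top}AL = \Lambda^{-1/2}Q^{\top}(Q\Lambda Q^{\top})Q\Lambda^{-1/2}=I$, hence for $s\in\mathbb{S}^{d-1}$ the proposal $v=Ls$ satisfies $v^{\top}Av = s^{\top}L^{\top}ALs = 1$.

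The key technical step is to compute the density of the proposal on $\mathcal{C}$ with respect to the induced surface measure. Since $v$ is a linear image of $s$, I would use the change-of-variables formula restricted to tangent spaces: choose an orthonormal basis $T=[t_1,\dots,t_{d-1}]$ of $s^{\perp}=T_s\mathbb{S}^{d-1}$, so that $Lt_1,\dots,Lt_{d-1}$ span $T_v\mathcal{C}$, and the surface-Jacobian squared equals $\det(T^{\top}L^{\top}LT)$. Extending $T$ by $s$ to an orthogonal matrix and applying the cofactor identity for the last diagonal entry of a matrix inverse yields
\[
\det\!\bigl(T^{\top}L^{\top}LT\bigr) \;=\; \det(L)^{2}\,s^{\top}(L^{\top}L)^{-1}s.
\]
The relation $L^{\top}AL=I$ is equivalent to $A = L^{-\top}L^{-1}$, from which a short manipulation gives $L^{-\top}(L^{\top}L)^{-1}L^{-1}=A^{2}$; substituting $s=L^{-1}v$ therefore produces $s^{\top}(L^{\top}L)^{-1}s = v^{\top}A^{2}v$. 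The surface-Jacobian is thus $|\det L|\sqrt{v^{\top}A^{2}v}$, and the proposal density on $\mathcal{C}$ is proportional to $1/\sqrt{v^{\top}A^{2}v}$.

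Given this proposal density, the ratio of the target uniform density (constant on $\mathcal{C}$) to the proposal density is proportional to $\sqrt{v^{\top}A^{2}v}$. A brief Lagrange-multiplier argument shows that at every critical point of $v\mapsto v^{\top}A^{2}v$ on $\mathcal{C}$ the vector $v$ must be an eigenvector of $A$, at which $v^{\top}A^{2}v=\lambda_i$; the maximum is therefore $\lambda_{\max}$. The while-loop accepts with probability $\alpha(v)=\sqrt{v^{\top}A^{2}v/\lambda_{\max}}\in[0,1]$, which is precisely the standard rejection rule with envelope constant $\sqrt{\lambda_{\max}}$. Invoking the rejection sampling theorem then yields that the accepted output is distributed uniformly on $\mathcal{C}$.

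The main obstacle is the Jacobian computation, since $L$ maps between two distinct hyperplanes in $\mathbb{R}^{d}$ (namely $s^{\perp}$ and $(Av)^{\perp}$) rather than acting as an endomorphism of a fixed subspace, so the restricted determinant cannot be read off directly from $\det L$. The projection-determinant identity above resolves this cleanly by reducing the restricted determinant to the quadratic form $s^{\top}(L^{\top}L)^{-1}s$; once this step is established, the remainder of the proof is a routine invocation of rejection sampling.
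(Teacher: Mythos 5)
Your proposal is correct and follows the same overall structure as the paper's proof: identify the output of each while-iteration as a sample from a proposal density on $\mathcal{C}$ obtained by pushing forward $\mathrm{Unif}(\mathbb{S}^{d-1})$ through $L$, compute that density via the surface Jacobian of $L$, and observe that the acceptance test $u<\sqrt{v^\top A^2 v/\lambda_{\max}}$ is precisely the rejection step that flattens the proposal to the uniform target.

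The one place you diverge is the Jacobian computation, which is also the only genuinely technical step. The paper's \Cref{lemma:change-of-measure} invokes Federer's coarea/area machinery and computes the $(d-1)$-Jacobian through the $(d-1)$-th exterior power, obtaining $J(s)=|\det L|\,\|(L^\top)^{-1}s\|_2$. You instead pick an orthonormal basis $T$ of $s^\perp$ and evaluate the Gram determinant $\det(T^\top L^\top L T)$, reducing it to $\det(L)^2\,s^\top(L^\top L)^{-1}s$ via the cofactor identity $\bigl[(U^\top M U)^{-1}\bigr]_{dd}=\det(T^\top M T)/\det(U^\top M U)$ with $U=[T\,|\,s]$ orthogonal. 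Since $(L^\top L)^{-1}=L^{-1}L^{-\top}$ this agrees exactly with the paper's $\|(L^\top)^{-1}s\|_2^2$, and the identity $L^\top A L=I$ then converts it to $v^\top A^2 v$. Your route is more elementary and self-contained — it avoids any appeal to geometric measure theory — while the paper's is shorter once Federer's theorem is black-boxed; both are standard ways to compute the same area factor. Two small extras you include that the paper leaves implicit are worth keeping: the Lagrange-multiplier verification that $\sup_{v\in\mathcal{C}} v^\top A^2 v=\lambda_{\max}$, which confirms the envelope constant is valid and the acceptance probability lies in $[0,1]$, and the observation that $L$ carries $s^\perp$ onto $(Av)^\perp=T_v\mathcal{C}$, which justifies that the Gram determinant you compute is indeed the relevant tangential Jacobian.
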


\subsection{Convergence of Zeroth-Order SGD under Non-Euclidean Metric}\label{sec:convergence}
In the previous section, we have shown that the accuracy of Riemannian zeroth-order gradient estimator is improved as the underlying geometric structure selected to be flatter. However, there is no free lunch in simply flattening the manifold. As we have commented in \Cref{def:structure-preserving}, the $\epsilon$-stationary point under the new metric $g'$ may not be the $\epsilon$-stationary point under the original metric $g$; so one must balance estimator accuracy with optimization dynamics.


To solve the optimization problem in \Cref{eq:manifold_opt}, we employ the SGD algorithm. Starting from an initial parameter $p_1$, the updates are given by
\begin{align}\label{eq:sgd}
    p_{t+1} = \Ret_{p_t} \bigl( \eta \, \widehat{\grad} f( p_t;\xi_t)  \bigr),
\end{align}
for $t=1,2,\dots,T-1$, where $\Ret: T\calM \to \calM$ is the retraction (\Cref{def:retraction}), $\eta\in \RR$ is the learning rate, $\{\xi_t\}_{t=1}^T$ is the stochastic data sample accessed at the $t$-th update, and $\widehat{\grad} f( p_t;\xi_t)$ is the Riemannian zeroth-order gradient estimation of $f(\cdot;\xi_t)$ at the point $p_t$ defined as
\begin{align}\label{eq:center-estimator-ret}
\widehat{\nabla} f(p) = \frac{f(\Ret_p(\mu v)) - f(\Ret_p(-\mu v))}{2\mu } v,
\end{align}  
Now we build the convergence analysis of Riemannian SGD algorithm.   We write $a \lesssim b$ if there exists a constant $\mathsf{C}>0$ such that $a \le \mathsf{C}\,b$. The constant $\mathsf{C}$ may depend only on fixed problem parameters.  Besides the boundedness assumption made in \Cref{thm:mse-riem-zo}, we additionally require the $L$-smoothness (\Cref{ass:bounded-hessian}) and the regularization condition on the retraction (\Cref{ass:retraction}). 
\begin{theorem} \label{thm:sgd-convergence}
    Let $(\mathcal{M},g)$ be a geodesically complete $d$-dimensional
Riemannian manifold. 
Let $f:\mathcal{M}\to\mathbb{R}$ be a smooth function and suppose that \Cref{ass:bounded-hessian,ass:retraction,assumption:uniform-bound,assumption:uniform-sectional-curvature} hold. Define the
symmetric zeroth-order estimator as in \Cref{eq:center-estimator-ret}. Let $\{ p_t\}_{t=1}^T$ be the SGD dynamic finding the stationary point of \Cref{eq:manifold_opt} generated by the update rule \Cref{eq:sgd} with requiring $\eta  \lesssim  \sqrt{\frac{d}{T}}$ and $\mu^2  \lesssim   \sqrt{\frac{d}{T}} $ (specified in \Cref{eq:eta-and-mu}), then there exists constants $\mathsf{C}_1,\mathsf{C}_2,\mathsf{C}_3>0$ such that
    \begin{align*}
        \min_{1\leq t\leq T} \|\nabla f(p_t) \|_{p_t}^2 \leq \mathsf{C}_1\,\frac{d}{\eta T}  + \mathsf{C}_2 \,\eta +  \mathsf{C}_3 \,d^2 \mu^2 .
    \end{align*} 
    In particular, choosing $\mu \lesssim  \frac{1}{d^2}\sqrt{\frac{d}{T}}  $ yields $\min_{1\leq t\leq T} \|\nabla f(p_t) \|_{p_t}^2 \lesssim \sqrt{\frac{d}{T} }.$ 
\end{theorem} 
\begin{proof}
    The proof directly follows the standard convergence analysis of SGD in Euclidean space  \citep{mishchenko2020random}.  We may further relax the $L$-smoothness assumption to the expected smoothness condition proposed by \citep{khaled2022better}. The  zeroth-order gradient approximation error term  is bounded using \Cref{thm:mse-riem-zo}. See \Cref{appendix:sgd-convergence} for the full proof.  
\end{proof} 

\textbf{Importantly}, the upper bound in \Cref{thm:sgd-convergence} is not our final goal. We typically begin with a canonical Euclidean metric $g_E$, which may fail to be geodesically complete. To overcome this issue, we construct a new metric $g := h g_E$ via \Cref{thm:structure-preserving} and then apply the convergence analysis under this new metric $g$ (using \Cref{thm:mse-riem-zo} and \Cref{thm:sgd-convergence}). However, an $\epsilon$-stationary point delivered by SGD under $g$ often is not an $\epsilon$-stationary point under $g_E$, unless the additional condition stated in the following corollary is imposed: 
\begin{corollary}\label{coro:complexity}
    Let $g_E$ be the Euclidean metric, and let $g$ be a structure-preserving metric with respect to $g_E$.  
    Under the same assumptions as \Cref{thm:sgd-convergence}, suppose that \textbf{either} of the following conditions holds: 
    \begin{enumerate}[(a)]
        \item $g_E$ is geodesically complete; or
        \item the set of $\epsilon$-stationary points under $g_E$, $ K := \{\, p \in \calM : \|\nabla_{g_E} f(p) \|_{p, g_E} \leq \epsilon \,\},$ 
        is compact. 
    \end{enumerate}
    Then it requires at most $T \leq \mathcal{O}\left(\tfrac{d}{\epsilon^4}\right)$  iterations to achieve  $\min_{1 \leq t \leq T} \EE \bigl[ \| \nabla f(p_t)\|^2_{p_t, g_E} \bigr] \leq \epsilon^2 .$
\end{corollary}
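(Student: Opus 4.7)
The plan is to split the argument along the two sufficient conditions (a) and (b), using as a common bridge the pointwise conformal identity
\[
\bigl\|\nabla_{g_E} f(p)\bigr\|^{2}_{p,g_E} \;=\; h(p)\,\bigl\|\nabla_{g} f(p)\bigr\|^{2}_{p,g},
\]
which follows from $g = h\,g_E$ (conformal equivalence, \Cref{def:structure-preserving}(b)) together with the fact that the Nomizu--Ozeki-type construction in \Cref{thm:structure-preserving} produces $h\ge 1$ globally (this lower bound is precisely what enforces the $\epsilon$-stationarity preservation \Cref{def:structure-preserving}(c)).

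For case (a), if $g_E$ is already geodesically complete, no structure-preserving construction is needed: I would simply apply \Cref{thm:sgd-convergence} with $g = g_E$. Its rate is already stated in the $g_E$-norm, so choosing $T = \Theta(d/\epsilon^4)$ drives $\min_{t}\EE\|\nabla f(p_t)\|^{2}_{g_E}$ below $\epsilon^2$ and finishes this case.

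For case (b), I would invoke \Cref{thm:structure-preserving} to build $g = h\,g_E$ and then run SGD under $g$. Compactness of $K$ and continuity of $h$ yield $H := \max_{p\in K} h(p) < \infty$. Applying \Cref{thm:sgd-convergence} under $g$ with parameters tuned so that the theorem delivers $\min_{1 \le t \le T}\EE\|\nabla_g f(p_t)\|^{2}_{g} \le \epsilon^2/H$ requires $T \gtrsim dH^{2}/\epsilon^4 = O(d/\epsilon^4)$ iterations, since $H$ is a problem-dependent constant that can be absorbed. The conformal identity then converts this $g$-gradient bound into the desired $g_E$-gradient bound at the minimizing iterate $p_{t^\star}$, \emph{provided} that $h(p_{t^\star}) \le H$, that is, that $p_{t^\star}$ lies in a compact sublevel set of $h$.

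The hard part will be justifying exactly this proviso. On $K^c$, the conformal identity together with the definition of $K$ forces the pointwise lower bound $\|\nabla_g f(p)\|^{2}_{g} > \epsilon^2/h(p)$, so an iterate outside $K$ with a small $g$-gradient is forced to have an inflated $h$-value. My plan for a descent-plus-contradiction argument is to combine (i) the telescoping descent lemma available from $L$-smoothness (\Cref{ass:bounded-hessian}) and retraction regularity (\Cref{ass:retraction}) with (ii) the MSE estimate of \Cref{thm:mse-riem-zo}, to bound the total $g$-distance traversed by SGD in terms of $f(p_1)-f^\star$ plus the zeroth-order approximation error. Since the Nomizu--Ozeki construction makes $h$ a proper function, this bounded drift confines the trajectory to a compact sublevel set $C \supseteq K$ on which $h \le H' < \infty$; replacing $H$ by $H'$ throughout the previous bookkeeping then closes the argument at the claimed $O(d/\epsilon^4)$ rate.
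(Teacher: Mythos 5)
Your proposal follows the same route as the paper's proof. For case~(a) you invoke \Cref{thm:sgd-convergence} directly under $g_E$, which is exactly the paper's remark that this case is ``directly implied by setting $h\equiv 1$.'' For case~(b) you run the theorem under the structure-preserving metric $g = h\,g_E$, convert via the conformal identity $\|\nabla_{g_E} f(p)\|^{2}_{p,g_E} = h(p)\,\|\nabla_{g} f(p)\|^{2}_{p,g}$, and bound $h$ on the compact set $K$; this is the paper's decomposition as well. (One small imprecision: that identity follows from $g=h\,g_E$ alone; the bound $h\ge 1$ is only needed for \Cref{def:structure-preserving}(c), not for the identity.)

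Where you genuinely add something is in explicitly naming the proviso that the minimizing iterate $p_{t^\star}$ must satisfy $h(p_{t^\star})\le H$. The paper's proof waves past this with ``we restrict two sides on the compact set'' and absorbs $\max_K h$ into $\lesssim$, without arguing that the $g$-minimizing iterate actually lies in $K$ (or any region where $h$ is bounded). You are right that this is the crux. However, the trajectory-confinement fix you sketch does not close the gap. The telescoping descent from \Cref{ass:bounded-hessian} and \Cref{ass:retraction}, together with \Cref{thm:mse-riem-zo}, controls $\sum_t \|\nabla f(p_t)\|_{p_t}^{2}$ and $\sum_t \EE\|\widehat{\grad}f(p_t;\xi_t)\|^{2}$, not the total $g$-distance travelled $\sum_t \eta\,\|\widehat{\grad}f(p_t;\xi_t)\|$. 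Even with a uniform second-moment bound $\EE\|\widehat{\grad}f\|^{2}\le V$, Cauchy--Schwarz gives expected travel of order $\eta T\sqrt{V}\sim\sqrt{dT}$ at the tuned stepsize $\eta\sim\sqrt{d/T}$, which grows without bound in $T$; properness of $h$ therefore does not confine the iterates to a fixed sublevel set, and the would-be constant $H'$ becomes $T$-dependent, breaking the $O(d/\epsilon^{4})$ bookkeeping. You should treat the confinement step as an open assumption rather than as a consequence of properness, noting that the paper's own ``restrict to $K$'' step is equally informal.
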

\begin{proof}
    Under either condition, the conformal coefficient $h$ constructed in \Cref{thm:structure-preserving} admits a uniform upper bound.  
    Consequently, an $\epsilon$-stationary point with respect to the new metric $g := h g_E$ is also an $\epsilon$-stationary point with respect to the original metric $g_E$, up to a constant scaling factor. This structure allows the complexity bound established in \Cref{thm:sgd-convergence} to transfer directly to the metric $g_E$. See \Cref{sec:proof-coro-complexity} for the full proof. 
\end{proof}
Item (a) corresponds to the classical setting in which the original metric is geodesically complete. Item~(b), on the other hand, specifies conditions under which an $\epsilon$-stationary point under the new metric is also an $\epsilon$-stationary point under the original metric. We emphasize that \Cref{thm:sgd-convergence} establishes convergence even in more general scenarios, though with potentially worse complexity bounds than in the geodesically complete case. This phenomenon highlights a key distinction between the framework studied in our work and the traditional geodesically complete setting. Building on this result, we extend the best-known complexity bound for Riemannian zeroth-order SGD on smooth objectives from the special case of manifolds equipped with a Euclidean metric to a much broader class of manifolds endowed with general Riemannian metrics.

\section{Experiments} 
\label{sec:experiments}
In the experimental section, we aim to validate the theoretical findings presented in \Cref{sec:main-results}. The two synthetic experiments are designed to investigate the following questions: \textit{(i)} How does sampling bias influence the convergence behavior of Riemannian zeroth-order SGD? \textit{(ii)} How does the curvature of the underlying manifold affect the accuracy of gradient estimation?  
In addition, we conduct a real-world experiment on mesh optimization \citep{hoppe1993mesh,belbute2020combining,ma2025deep}, a practical application in which the positions of nodes are naturally represented as points on the probability simplex.

\subsection{Synthetic Experiment: Impact of Sampling Bias}
\label{sec:synthetic-experiment-bias}

In this experiment, we investigate the impact of sampling bias in zeroth-order Riemannian optimization.
Specifically, we consider two objective functions defined on the Euclidean space $\RR^d$, equipped with a non-Euclidean Riemannian metric given by $g_A(u,v) := u^\top A v$: 
$$f_{\text{quadratic}}(x) = \frac{1}{2} \EE_{\xi}\, x^\top \bigl(B + \xi \bigr) x, \qquad f_{\text{logistic}}(x) =   \EE_{(\zeta,y)}\, \log(1 + \exp(-y \, \zeta^\top  x  ))+ \frac{\lambda}{2} x^\top B x,$$
where each entry of $\xi$ is independently drawn from $\mathcal{N}(0,1)$, and $(\zeta, y)$ is sampled from a fixed categorical data distribution. The matrix $B \in \RR^{d \times d}$ is a pre-generated positive definite matrix.  
We compare two sampling strategies for Riemannian gradient estimation in the zeroth-order setting: \textit{(i)}~\textit{Rejection sampling} (\Cref{alg:sampling}), which produces uniform samples from the Riemannian unit sphere and is unbiased as shown in \Cref{prop:sampling}. \textit{(ii)}  \textit{Rescaling sampling}, which samples a Gaussian vector then normalizes it to the unit sphere with respect to the Riemannian metric $g_A$.  

\paragraph{Experimental Implications} For each configuration, we report the average objective value over 16 independent runs using the same hyperparameter settings for the SGD optimizer. As shown in \Cref{fig:zo-sampling-bias}, the rejection sampling method (\Cref{alg:sampling}) consistently outperforms the traditional rescaling approach; the rescaling method even leads to divergence under the same hyper-parameter setting for the logistic loss objective (right panel of \Cref{fig:zo-sampling-bias}). These results highlight the importance of using \Cref{alg:sampling} to ensure an unbiased uniform distribution over the Riemannian $g$-unit sphere, which is critical for stable and effective training. The complete experimental details are included in \Cref{appendix:exp-sampling}. 

\begin{figure}[ht]
    \centering
    \begin{minipage}[t]{0.49\textwidth}
        \centering
        \includegraphics[width=\textwidth]{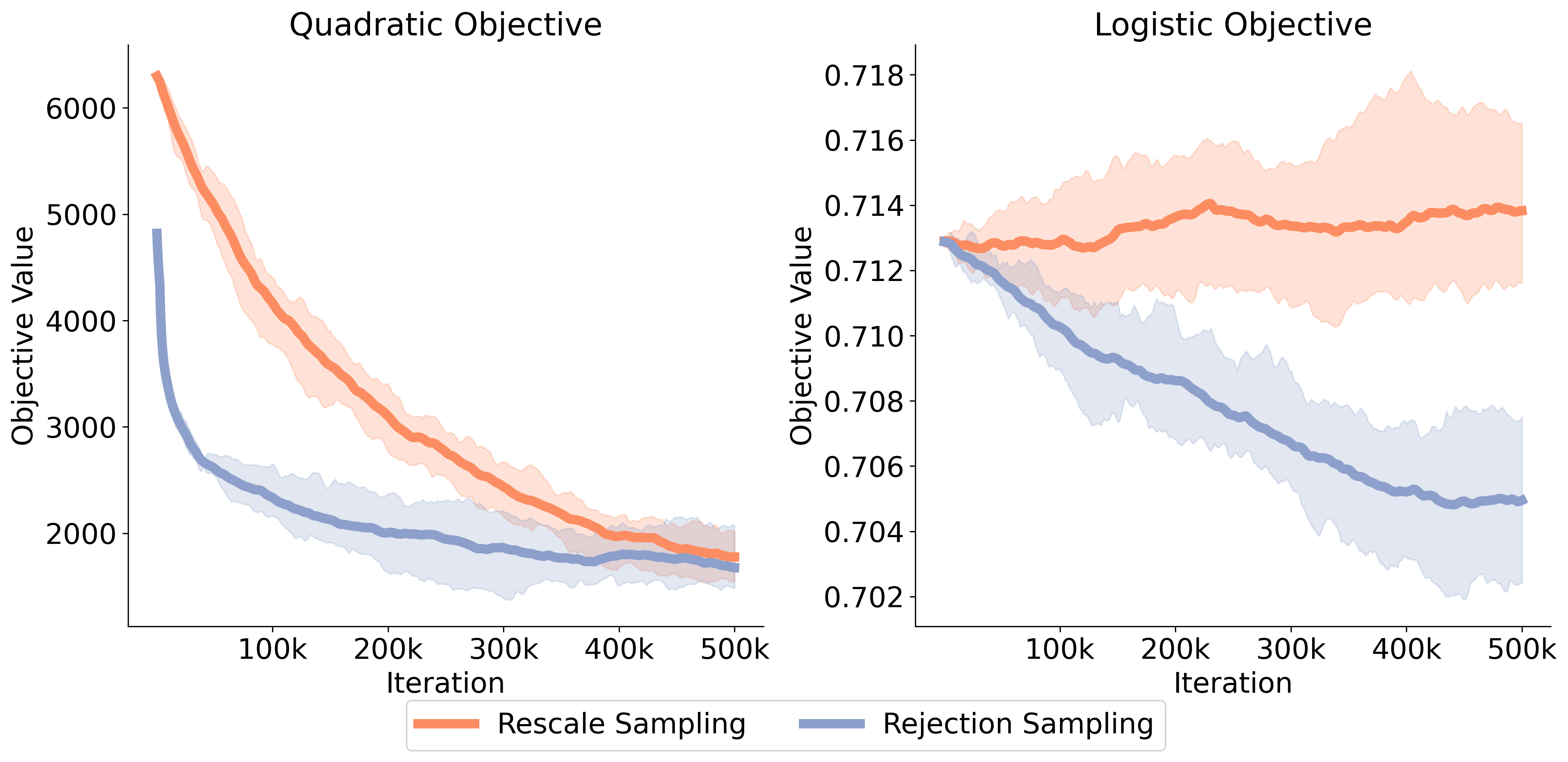}
        \caption{The impact of sampling bias on the convergence of Riemannian zeroth-order SGD.}
        \label{fig:zo-sampling-bias}
    \end{minipage} \hfill
    \begin{minipage}[t]{0.49\textwidth}
        \centering
        \includegraphics[width=\textwidth]{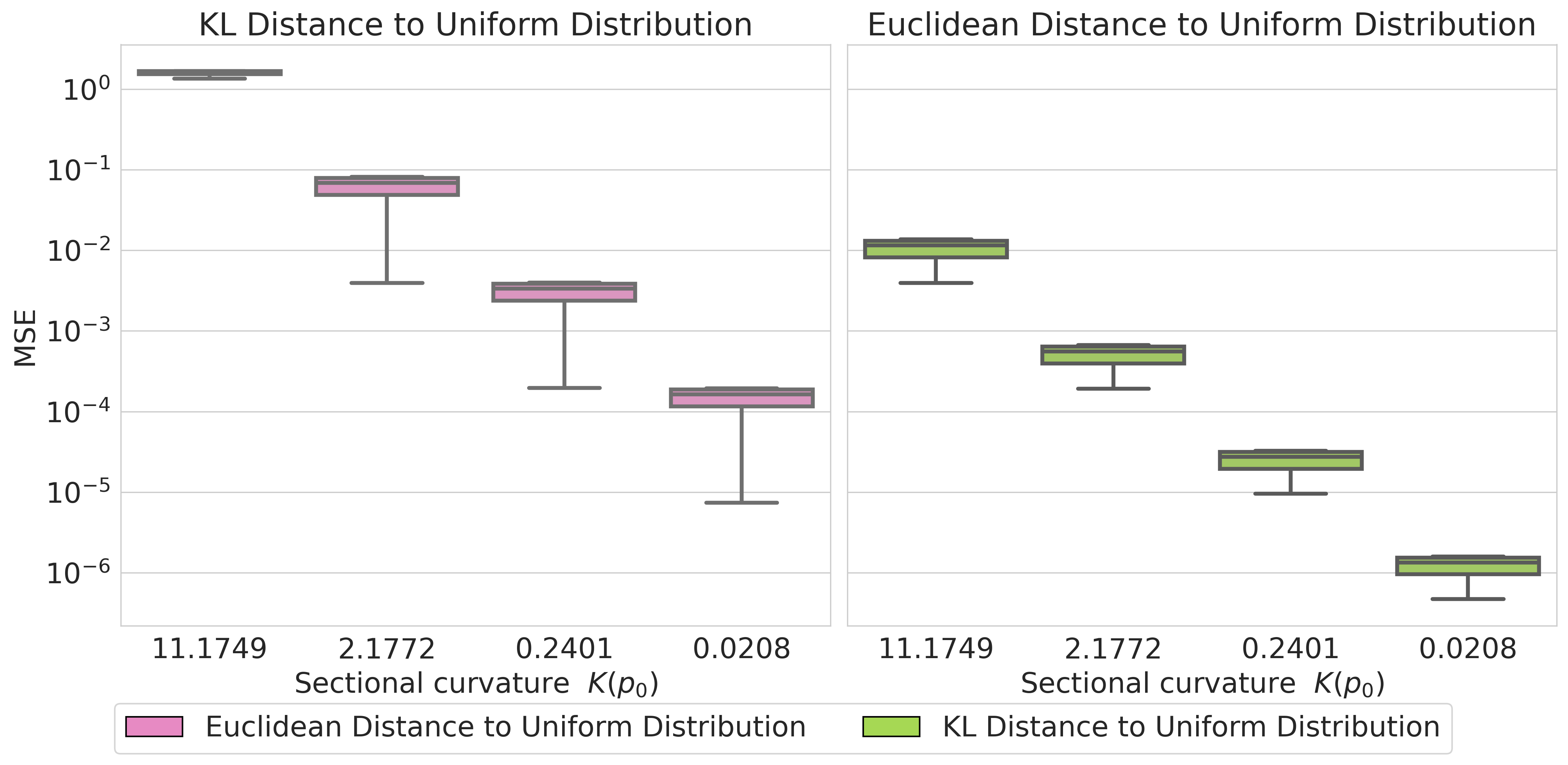}
        \caption{The impact of sectional curvatures on the gradient estimation accuracy.}
        \label{fig:zo-curvature-mse}
    \end{minipage}
\end{figure}

\subsection{Synthetic Experiment: Impact of Sectional Curvature}
\label{sec:synthetic-experiment-curvature} 
In this experiment, we investigate the impact of sectional curvature on the accuracy of zeroth-order gradient estimation. Specifically, we evaluate gradient estimation errors at a fixed point $p_0$ under four conformally equivalent Riemannian metrics with different curvatures.  We consider two objective functions commonly used in the optimization problem on probability simplex:
\[f_{\text{KL}}(p)  = \textrm{KL}(p\| q)=\sum_{i} p_i \log\left( d p_i\right), \qquad f_{\text{Euclidean}}(p) = \frac{1}{2}\|p - q\|^2=\frac{1}{2}\sum_{i=1}^d (p_i - \frac{1}{d})^2\]
where $q = \frac{1}{d}\mathbf{1}_{d}$ denotes the centroid of the simplex. 
We measure the accuracy of gradient estimation using the mean-squared error (MSE) under its own Riemannian metric, computed over 50,000 independent trials of zeroth-order gradient estimation (\Cref{eq:center-estimator}). The complete experimental details are included in \Cref{appendix:curvature}. 

\paragraph{Experimental Implications}
As depicted in \Cref{fig:zo-curvature-mse}, the Riemannian MSE of zeroth-order gradient estimation decreases as the sectional curvature $K(p_0)$ decreases.   This empirical finding aligns with our theoretical upper bound presented in \Cref{thm:mse-riem-zo}, illustrating a clear connection between gradient estimation accuracy and the intrinsic geometric properties of the underlying manifold. In particular, higher curvature consistently results in larger estimation errors for both objective functions.

\subsection{Gradient-Based Mesh Optimization} 
In modern physical simulation, solving PDEs often relies on finite-volume methods with spatial discretizations and external solvers that lack automatic differentiation support \citep{belbute2020combining,ma2025deep}, making the zeroth-order approach an ideal tool for optimizing mesh positions.

In this experiment, we consider the gradient-based mesh optimization problem for solving the Helmholtz equation \citep{Goodman2017IFO,engquist2018approximate},  
\[
\nabla^2 f = -k^2 f,
\]
where $\nabla^2$ denotes the Laplace operator, $k=10$ is the wave number, and $f$ is the eigenfunction. The ground-truth solution is computed on a fine mesh with resolution $200 \times 200$. Our goal is to optimize the node positions of a regular coarse mesh with resolution $20 \times 20$ so that its performance approximates that of the ground-truth solution.
The mesh node (in our setting, boundary nodes are fixed and excluded from optimization) is represented using a simplex formulation: each trainable node $p=(x,y)$ is expressed as a convex combination of its six neighbors under the regular triangular initialization. This parameterization naturally leads to a manifold optimization problem. \textbf{However}, the coordinate simplex, under its canonical embedding, is geodesically incomplete. To ensure the exponential map remains well-defined and to prevent perturbed nodes from crossing mesh edges, we adopt our proposed structure-preserving approach and compare it against several natural baselines, as illustrated in \Cref{fig:main}.


\begin{figure}[t]
    \centering 
    \begin{subfigure}[t]{0.22\textwidth}
        \centering
        \includegraphics[width=\textwidth]{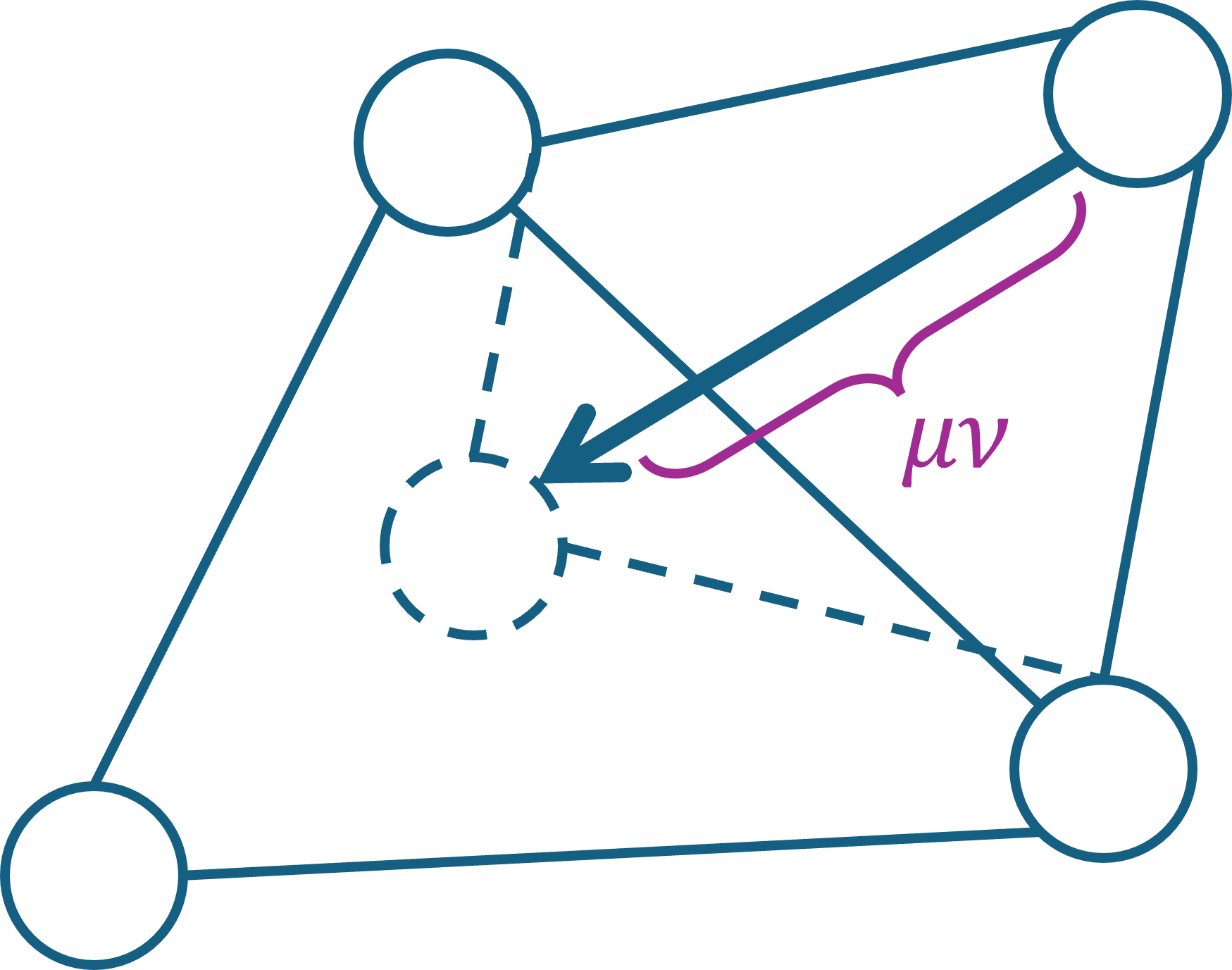}
        \caption*{Unconstrained}
        \label{fig:sub-a}
    \end{subfigure}
    \hfill
    \begin{subfigure}[t]{0.22\textwidth}
        \centering
        \includegraphics[width=\textwidth]{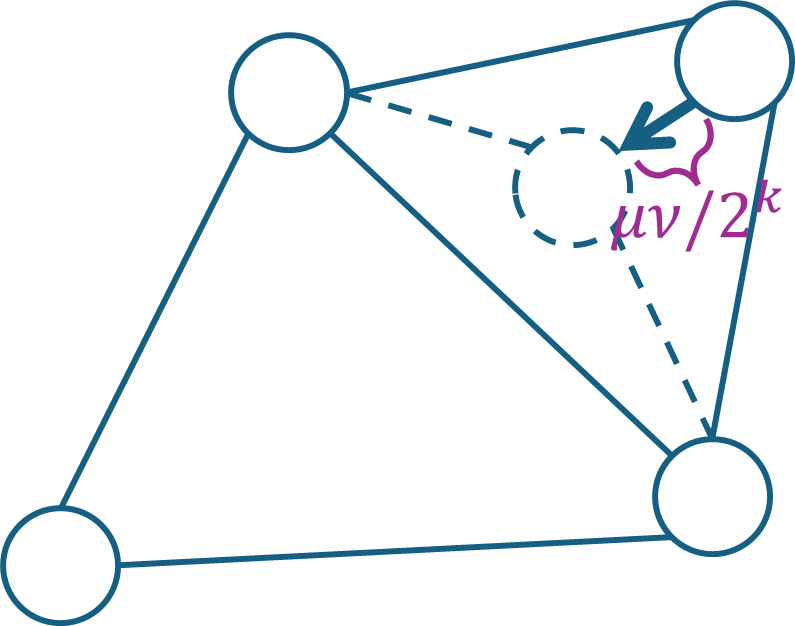}
        \caption{Soft Projection}
        \label{fig:sub-b}
    \end{subfigure}
    \hfill
    \begin{subfigure}[t]{0.22\textwidth}
        \centering
        \includegraphics[width=\textwidth]{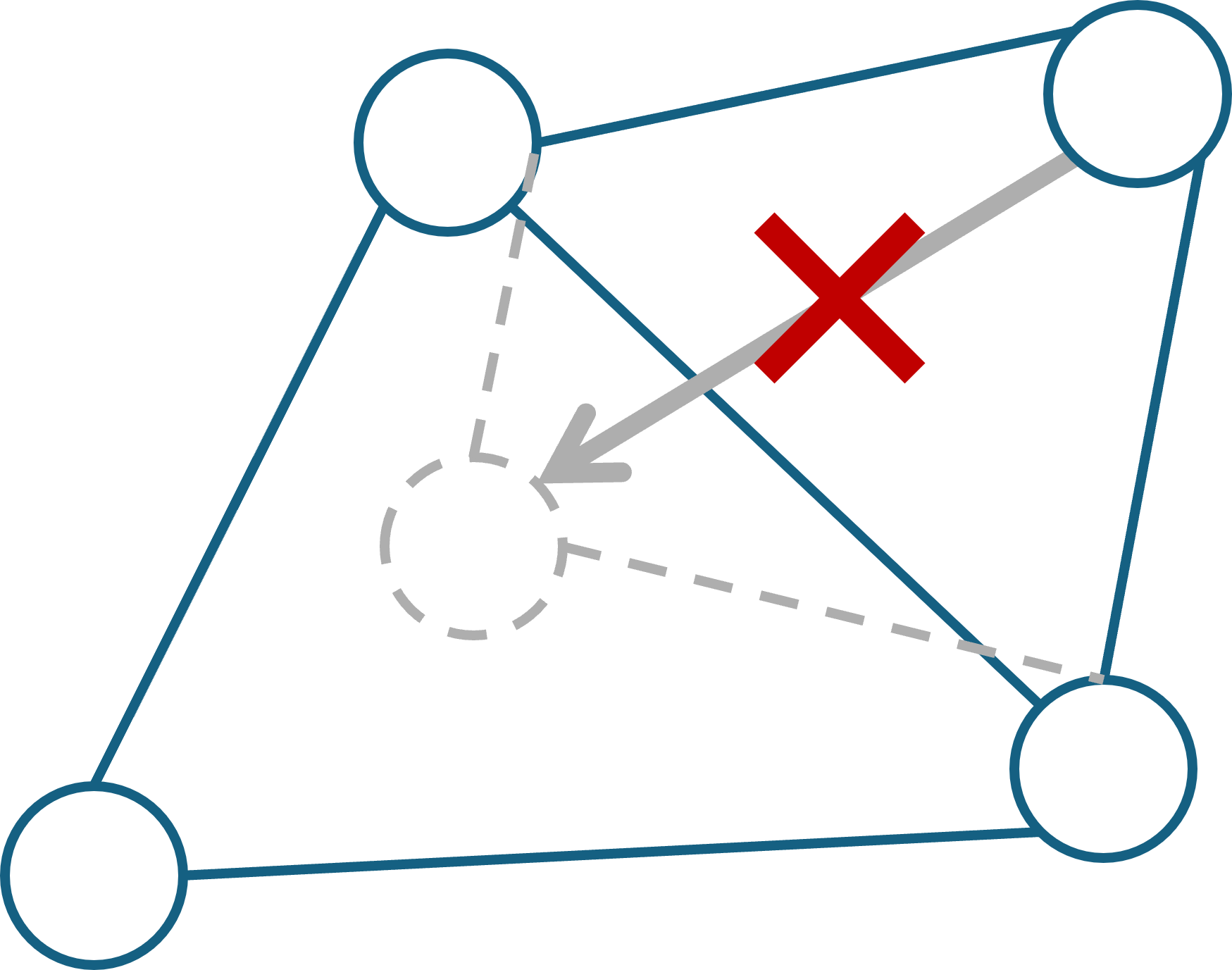}
        \caption{Reversion}
        \label{fig:sub-c}
    \end{subfigure}
    \hfill
    \begin{subfigure}[t]{0.22\textwidth}
        \centering
        \includegraphics[width=\textwidth]{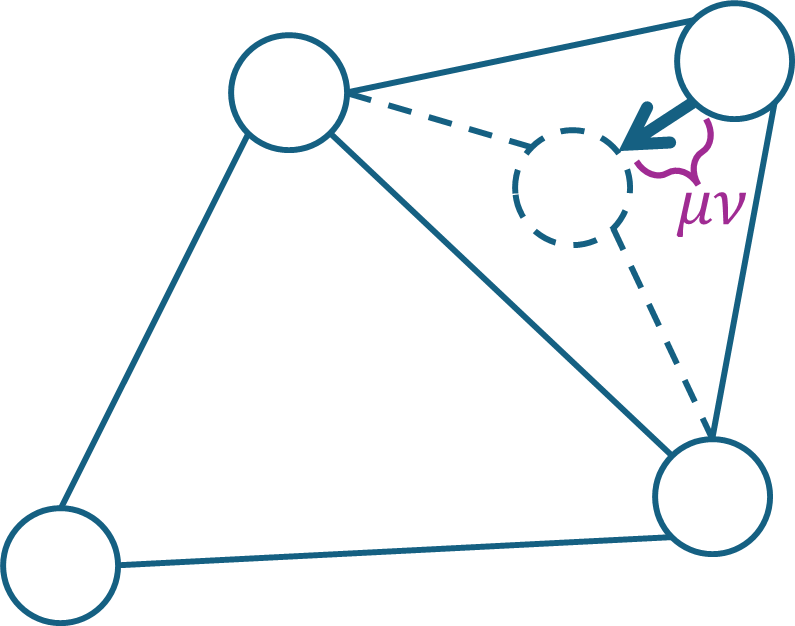}
        \caption{Structure-Preserving}
        \label{fig:sub-d}
    \end{subfigure} 

    \caption{The leftmost panel illustrates an invalid optimization step on a mesh node; it crosses the edge, causing potential error in the external PDE solver. Figure (a) illustrates the \textit{Soft Projection} approach, which resolves the issue by repeatedly reducing the perturbation stepsize $\mu$ along the perturbation direction $v$ until the movement becomes valid. Figure (b) shows the \textit{Reversion} approach, which instead handles invalid steps by reverting to the original position. Figure (c) takes the advantage of the structure-preserving metric, which twists the underlying Riemannian structure ensuring that the perturbation won't move the point out of the domain. }
    \label{fig:main}
\end{figure}

\begin{figure}[t]
    \centering   
        \includegraphics[width=0.405\textwidth]{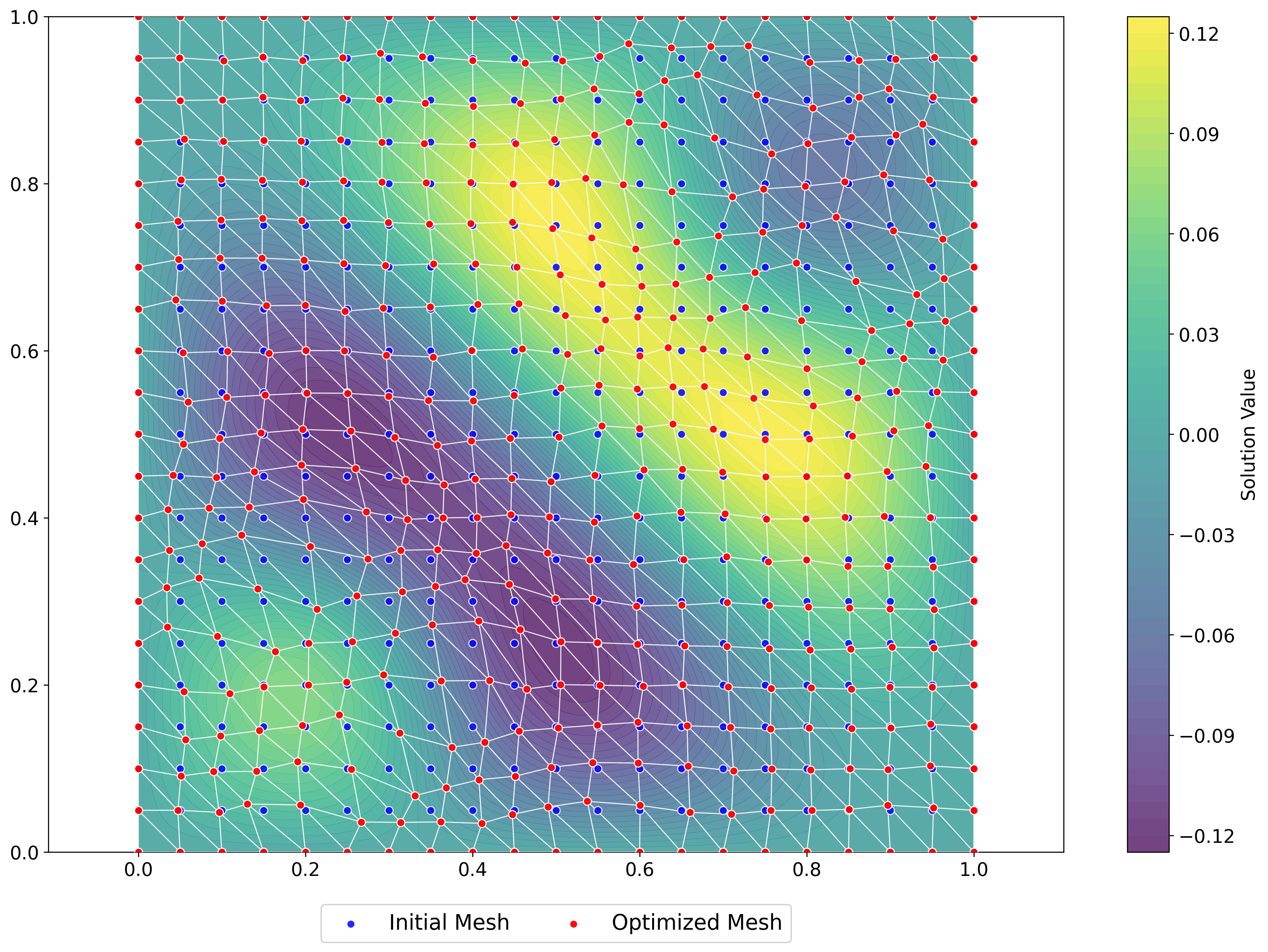} 
    \hfill 
        \includegraphics[width=0.465\textwidth]{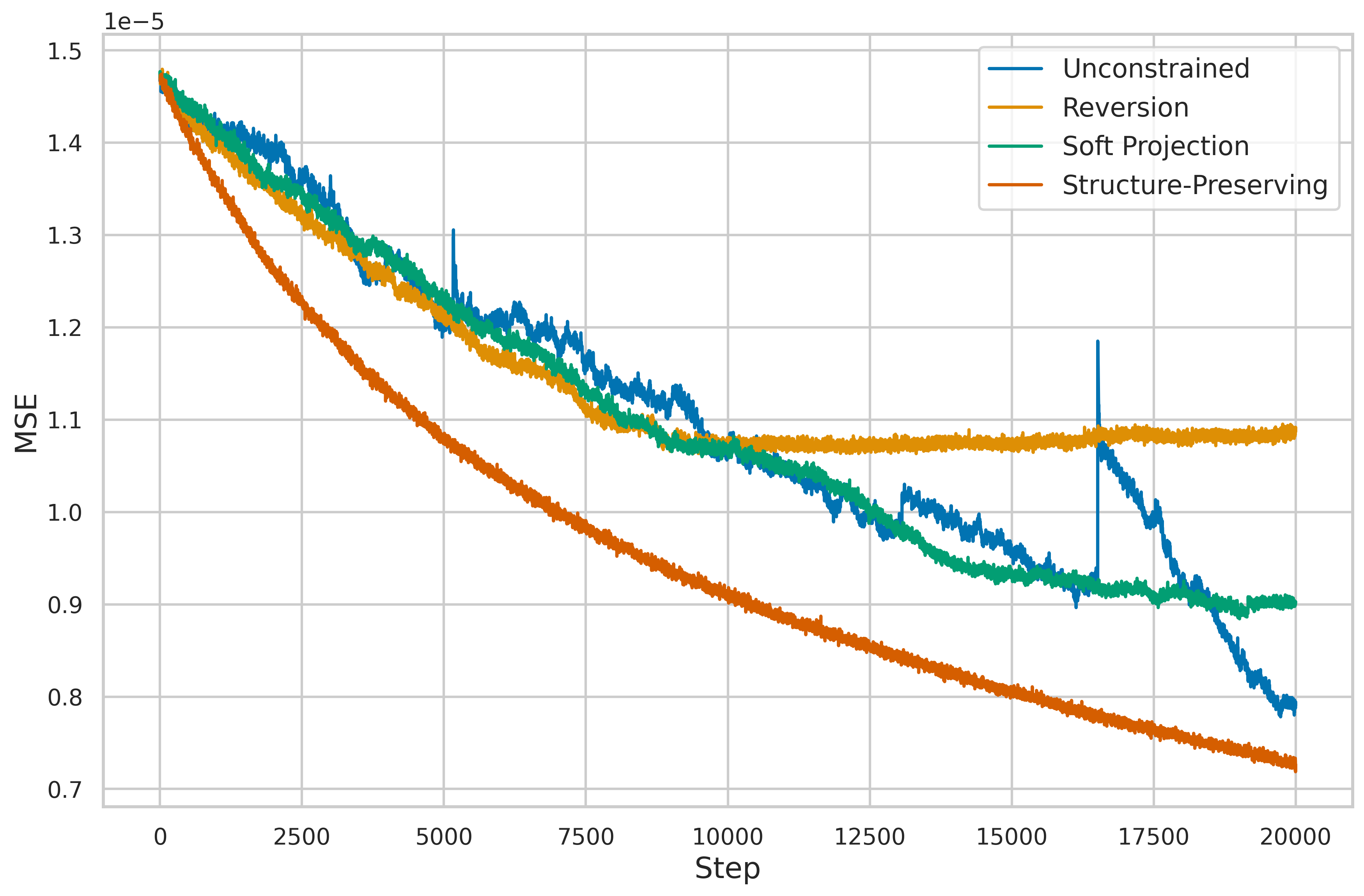} 

    \caption{The left panel shows the ground-truth prediction (background), the initial mesh (blue), and the optimized mesh (red) using our proposed method. The nodes adaptively concentrate around the critical region while preserving the overall mesh structure. The right panel presents the loss curves for different approaches. Our method achieves both stable and efficient convergence.}
    \label{fig:results}
\end{figure} 

\Cref{fig:results} presents the loss curves of the up-sampled prediction over 20,000 optimization steps. The \textit{unconstrained} method often violates mesh validity, leading to unstable fluctuations, most notably around the 16,000th step. The \textit{reversion} prevents invalid updates but quickly stalls after 8,000 steps; similarly, the \textit{soft projection} stabilizes training but progresses slowly, showing little improvement beyond 14,000 steps. In contrast, our\textit{ structure-preserving} approach consistently reduces the error throughout training, achieving the lowest final MSE without instability. These findings highlight that structure-preserving approaches not only maintain feasibility but also enable effective convergence.

\vspace{-3pt}
\section{Conclusion}
\vspace{-3pt}
In this work, we consider the zeroth-order optimization problem on Riemannian manifolds when the underlying metric might be \textbf{geodesically incomplete}. We propose the structure-preserving metric that is geodesically complete, while preserving the original set of stationary points (\Cref{thm:structure-preserving}). Building on this foundation, we intrinsically derive the accuracy upper bound of the classical two-point gradient estimator and reveal the role of manifold curvature (\Cref{thm:mse-riem-zo}). We further propose an unbiased rejection sampling scheme for generating perturbation directions under general Riemannian metrics (\Cref{prop:sampling}). Our theoretical analysis establishes convergence guarantees that extend the best-known complexity results beyond the Euclidean setting to a broader class of Riemannian manifolds (\Cref{thm:sgd-convergence}). Empirical studies, including synthetic experiments and a mesh optimization task, demonstrate that structure-preserving approaches enable stable and effective convergence. These findings extend the theoretical understanding of zeroth-order optimization methods in Riemannian manifolds and provide practical tools for Riemannian black-box optimization.

\bibliography{RL}

@article{devroye2006nonuniform,
  title={Nonuniform random variate generation},
  author={Devroye, Luc},
  journal={Handbooks in operations research and management science},
  volume={13},
  pages={83--121},
  year={2006},
  publisher={Elsevier}
}

@inproceedings{
ma2025on,
title={On the Optimal Construction of Unbiased Gradient Estimators for Zeroth-Order Optimization},
author={Shaocong Ma and Heng Huang},
booktitle={The Thirty-ninth Annual Conference on Neural Information Processing Systems},
year={2025},
url={https://openreview.net/forum?id=rVT1GK60Nt}
}

@article{smith2014optimization,
  title={Optimization techniques on Riemannian manifolds},
  author={Smith, Steven Thomas},
  journal={arXiv preprint arXiv:1407.5965},
  year={2014}
}

@inproceedings{
ma2025revisiting,
title={Revisiting Zeroth-Order Optimization:  Minimum-Variance Two-Point Estimators and  Directionally Aligned Perturbations},
author={Shaocong Ma and Heng Huang},
booktitle={The Thirteenth International Conference on Learning Representations},
year={2025},
url={https://openreview.net/forum?id=ywFOSIT9ik}
}

@book{sato2021riemannian,
  title     = {Riemannian Optimization and Its Applications},
  author    = {Hiroyuki Sato},
  series    = {SpringerBriefs in Electrical and Computer Engineering},
  publisher = {Springer Cham},
  year      = {2021},
  doi       = {10.1007/978-3-030-62391-3},
  isbn      = {978-3-030-62389-0},
  url       = {https://doi.org/10.1007/978-3-030-62391-3},
}

@article{ring2012optimization,
  title={Optimization methods on Riemannian manifolds and their application to shape space},
  author={Ring, Wolfgang and Wirth, Benedikt},
  journal={SIAM Journal on Optimization},
  volume={22},
  number={2},
  pages={596--627},
  year={2012},
  publisher={SIAM}
}

@Inbook{Lee2003,
author="Lee, John M.",
title="Smooth Manifolds",
bookTitle="Introduction to Smooth Manifolds",
year="2003",
publisher="Springer New York",
address="New York, NY",
pages="1--29",
abstract="This book is about smooth manifolds. In the simplest terms, these are spaces that locally look like some Euclidean space ℝn, and on which one can do calculus. The most familiar examples, aside from Euclidean spaces themselves, are smooth plane curves such as circles and parabolas, and smooth surfaces such as spheres, tori, paraboloids, ellipsoids, and hyperboloids. Higher-dimensional examples include the set of unit vectors in ℝn+1 (the n-sphere) and graphs of smooth maps between Euclidean spaces.",
isbn="978-0-387-21752-9",
doi="10.1007/978-0-387-21752-9_1",
url="https://doi.org/10.1007/978-0-387-21752-9_1"
}

@book{doCarmo1992,
  author    = {Manfredo Perdig{\~a}o do Carmo},
  title     = {Riemannian Geometry},
  year      = {1992},
  series    = {Mathematics: Theory \& Applications},
  publisher = {Birkh{\"a}user Boston, Inc.},
  address   = {Boston, MA},
  isbn      = {0-8176-3490-8},
  note      = {Translated from the second Portuguese edition by Francis Flaherty},
  mrnumber  = {1138207},
  zblnumber = {0752.53001}
}

@article{HopfRinow1931,
  author    = {Hopf, Heinz and Rinow, Willi},
  title     = {Ueber den Begriff der vollständigen differentialgeometrischen Fläche},
  journal   = {Commentarii Mathematici Helvetici},
  volume    = {3},
  number    = {1},
  pages     = {209--225},
  year      = {1931},
  doi       = {10.1007/BF01601813}
}

@book{dugundji1966topology,
  author    = {James Dugundji},
  title     = {Topology},
  year      = {1966},
  publisher = {Allyn and Bacon},
  address   = {Boston},
  isbn      = {978-0-697-06889-7},
  oclc      = {395340485}
}

@book{lee2018introduction,
  author       = {Lee, John M.},
  title        = {Introduction to Riemannian Manifolds},
  edition      = {2},
  series       = {Graduate Texts in Mathematics},
  volume       = {176},
  publisher    = {Springer Cham},
  year         = {2018},
  isbn         = {978-3-319-91754-2},
  doi          = {10.1007/978-3-319-91755-9}
}

@article{nash1956imbedding,
  author       = {Nash, John},
  title        = {The Imbedding Problem for Riemannian Manifolds},
  journal      = {Annals of Mathematics, Second Series},
  volume       = {63},
  number       = {1},
  pages        = {20--63},
  year         = {1956},
  doi          = {10.2307/1969989},
  jstor        = {1969989},
  mrnumber     = {0075639} 
}

@book{Federer1996GMT,
  author    = {Herbert Federer},
  editor    = {B. Eckmann and B. L. van der Waerden},
  title     = {Geometric Measure Theory},
  series    = {Classics in Mathematics},
  publisher = {Springer Berlin, Heidelberg},
  year      = {1996},
  doi       = {10.1007/978-3-642-62010-2},
  isbn      = {978-3-540-60656-7},
  url       = {https://doi.org/10.1007/978-3-642-62010-2},
  note      = {Originally published as volume 153 in the series: Grundlehren der mathematischen Wissenschaften},
}

@article{Chattopadhyay2015Powell,
  title   = {A Derivative-Free Riemannian Powell's Method, Minimizing Hartley-Entropy-Based ICA Contrast},
  author  = {Amit Chattopadhyay and Suviseshamuthu E. Selvan and Umberto Amato},
  journal = {IEEE Trans. Neural Networks and Learning Systems},
  volume  = {27},
  number  = {9},
  pages   = {1983--1995},
  year    = {2015}
}

@inproceedings{belbute2020combining,
  title={Combining differentiable PDE solvers and graph neural networks for fluid flow prediction},
  author={Belbute-Peres, Filipe De Avila and Economon, Thomas and Kolter, Zico},
  booktitle={international conference on machine learning},
  pages={2402--2411},
  year={2020},
  organization={PMLR}
}

@article{Fong2019DFO,
  title   = {Stochastic Derivative-Free Optimization on Riemannian Manifolds},
  author  = {Robert Simon Fong and Peter Ti{\v{n}}o},
  journal = {arXiv preprint arXiv:1908.06783},
  year    = {2019}
}

@article{Li2023RASA,
  title   = {Zeroth-Order Riemannian Averaging Stochastic Approximation Algorithms},
  author  = {Jiaxiang Li and Krishnakumar Balasubramanian and Shiqian Ma},
  journal = {arXiv preprint arXiv:2309.14906},
  year    = {2023}
}

@inproceedings{He2024Accelerated,
  title     = {Riemannian Accelerated Zeroth-Order Algorithm: Improved Robustness and Lower Query Complexity},
  author    = {Chang He and Zhaoye Pan and Xiao Wang and Bo Jiang},
  booktitle = {Proc. 41st Int. Conf. Machine Learning (ICML)},
  year      = {2024}
}

@article{zhou2025inexact,
  title={Inexact Riemannian Gradient Descent Method for Nonconvex Optimization with Strong Convergence},
  author={Zhou, Juan and Deng, Kangkang and Wang, Hongxia and Peng, Zheng},
  journal={Journal of Scientific Computing},
  volume={103},
  number={3},
  pages={1--24},
  year={2025},
  publisher={Springer}
}

@article{Maass2022Tracking,
  title   = {Tracking and Regret Bounds for Online Zeroth-Order Euclidean and Riemannian Optimization},
  author  = {Alejandro I. Maass and Chris Manzie and Dragan Ne{\v{s}}i{\'c} and Jonathan H. Manton and Iman Shames},
  journal = {SIAM Journal on Optimization},
  volume  = {32},
  number  = {2},
  pages   = {445--469},
  year    = {2022}
}

@article{Wang2022GW,
  title   = {From the Greene--Wu Convolution to Gradient Estimation over Riemannian Manifolds},
  author  = {Tianyu Wang and Yifeng Huang and Didong Li},
  journal = {arXiv preprint arXiv:2108.07406},
  year    = {2021}
}

@article{Ochoa2025Hybrid,
  title   = {Robust Global Optimization on Smooth Compact Manifolds via Hybrid Gradient-Free Dynamics},
  author  = {Daniel E. Ochoa and Jorge I. Poveda},
  journal = {Automatica},
  volume  = {171},
  pages   = {111916},
  year    = {2025}
}

@article{li2023stochastic,
  title={Stochastic zeroth-order Riemannian derivative estimation and optimization},
  author={Li, Jiaxiang and Balasubramanian, Krishnakumar and Ma, Shiqian},
  journal={Mathematics of Operations Research},
  volume={48},
  number={2},
  pages={1183--1211},
  year={2023},
  publisher={INFORMS}
}

@inproceedings{becigneul2019riemannian,
  title={Riemannian Adaptive Optimization Methods},
  author={B{\'e}cigneul, Gary and Ganea, Octavian-Eugen},
  booktitle={International Conference on Learning Representations (ICLR)},
  year={2019}
}

@book{Spivak1994Calculus,
  title={Calculus},
  author={Spivak, Michael},
  edition={3},
  pages={383},
  year={1994},
  publisher={Publish or Perish},
  address={Houston, TX},
  isbn={978-0-914098-89-8},
}

@article{wang2022convergence,
  title={Convergence Rates of Stochastic Zeroth-order Gradient Descent for$\backslash$L ojasiewicz Functions},
  author={Wang, Tianyu and Feng, Yasong},
  journal={arXiv preprint arXiv:2210.16997},
  year={2022}
}

@article{nesterov2017random,
	title={Random gradient-free minimization of convex functions},
	author={Nesterov, Yurii and Spokoiny, Vladimir},
	journal={Foundations of Computational Mathematics},
	volume={17},
	pages={527--566},
	year={2017},
	publisher={Springer}
}

@article{nguyen2023stochastic,
  title={Stochastic zeroth-order functional constrained optimization: Oracle complexity and applications},
  author={Nguyen, Anthony and Balasubramanian, Krishnakumar},
  journal={INFORMS Journal on Optimization},
  volume={5},
  number={3},
  pages={256--272},
  year={2023},
  publisher={INFORMS}
}

@article{wang2023sharp,
  title={On sharp stochastic zeroth-order Hessian estimators over Riemannian manifolds},
  author={Wang, Tianyu},
  journal={Information and Inference: A Journal of the IMA},
  volume={12},
  number={2},
  pages={787--813},
  year={2023},
  publisher={Oxford University Press}
}

@article{mishchenko2020random,
	title={Random reshuffling: Simple analysis with vast improvements},
	author={Mishchenko, Konstantin and Khaled, Ahmed and Richt{\'a}rik, Peter},
	journal={Advances in Neural Information Processing Systems},
	volume={33},
	pages={17309--17320},
	year={2020}
}

@article{khaled2022better,
	title={Better Theory for SGD in the Nonconvex World},
	author={Khaled, Ahmed and Richt{\'a}rik, Peter},
	journal={Transactions on Machine Learning Research},
	year={2022}
}

@article{ma2025deep,
  title={Deep learning of PDE correction and mesh adaption without automatic differentiation},
  author={Ma, Shaocong and Diffenderfer, James and Kailkhura, Bhavya and Zhou, Yi},
  journal={Machine Learning},
  volume={114},
  number={3},
  pages={1--25},
  year={2025},
  publisher={Springer}
}

@article{Absil2007,
  author    = {Absil, P.-A. and Baker, C. G. and Gallivan, K. A.},
  title     = {Trust-Region Methods on Riemannian Manifolds},
  journal   = {Foundations of Computational Mathematics},
  volume    = {7},
  number    = {3},
  pages     = {303--330},
  year      = {2007},
  doi       = {10.1007/s10208-005-0179-9}
}

@book{Absil2008,
  author    = {Absil, P.-A. and Mahony, R. and Sepulchre, R.},
  title     = {Optimization Algorithms on Matrix Manifolds},
  publisher = {Princeton University Press},
  address   = {Princeton, NJ},
  year      = {2008}
}

@inproceedings{Ahn2020,
  author    = {Ahn, Kwangjun and Sra, Suvrit},
  title     = {From Nesterov's Estimate Sequence to Riemannian Acceleration},
  booktitle = {Proceedings of the 33rd Conference on Learning Theory},
  series    = {Proceedings of Machine Learning Research},
  volume    = {125},
  pages     = {1--35},
  year      = {2020},
  editor    = {Abernethy, Jacob and Agarwal, Shivani},
  publisher = {PMLR}
}

@inproceedings{Alimisis2021,
  author    = {Alimisis, Foivos and Orvieto, Antonio and Becigneul, Gary and Lucchi, Aurelien},
  title     = {Momentum Improves Optimization on Riemannian Manifolds},
  booktitle = {Proceedings of the 24th International Conference on Artificial Intelligence and Statistics},
  series    = {Proceedings of Machine Learning Research},
  volume    = {130},
  pages     = {1979--1987},
  year      = {2021},
  publisher = {PMLR}
}

@article{Bonnabel2013,
  author    = {Bonnabel, Silv{\`e}re},
  title     = {Stochastic Gradient Descent on Riemannian Manifolds},
  journal   = {IEEE Transactions on Automatic Control},
  volume    = {58},
  number    = {9},
  pages     = {2217--2229},
  year      = {2013},
  doi       = {10.1109/TAC.2013.2254619}
}

@inproceedings{Criscitiello2022,
  author    = {Criscitiello, Christopher and Boumal, Nicolas},
  title     = {Negative Curvature Obstructs Acceleration for Strongly Geodesically Convex Optimization, Even with Exact First-Order Oracles},
  booktitle = {Proceedings of the 35th Conference on Learning Theory},
  series    = {Proceedings of Machine Learning Research},
  volume    = {178},
  pages     = {1--47},
  year      = {2022}
}

@inproceedings{Hamilton2021,
  author    = {Hamilton, Linus and Moitra, Ankur},
  title     = {A No-Go Theorem for Robust Acceleration in the Hyperbolic Plane},
  booktitle = {Advances in Neural Information Processing Systems},
  volume    = {34},
  pages     = {3330--3342},
  year      = {2021}
}

@article{Huang2018,
  author    = {Huang, Wen and Absil, P.-A. and Gallivan, Kyle A.},
  title     = {A Riemannian BFGS Method Without Differentiated Retraction for Nonconvex Optimization Problems},
  journal   = {SIAM Journal on Optimization},
  volume    = {28},
  number    = {1},
  pages     = {470--495},
  year      = {2018},
  doi       = {10.1137/17M1127582}
}

@article{Sato2013,
  author    = {Sato, Hiroyuki and Iwai, Toshihiro},
  title     = {A New, Globally Convergent Riemannian Conjugate Gradient Method},
  journal   = {arXiv preprint},
  year      = {2013},
  eprint    = {1302.0125},
  archivePrefix = {arXiv}
}

@article{Wang2023,
  author    = {Wang, Xi and Tu, Zhipeng and Hong, Yiguang and Wu, Yingyi and Shi, Guodong},
  title     = {Online Optimization over Riemannian Manifolds},
  journal   = {Journal of Machine Learning Research},
  volume    = {24},
  number    = {130},
  pages     = {1--67},
  year      = {2023}
}

@inproceedings{Zhang2016,
  author    = {Zhang, Hongyi and Sra, Suvrit},
  title     = {First-Order Methods for Geodesically Convex Optimization},
  booktitle = {Proceedings of the 29th Annual Conference on Learning Theory},
  series    = {Proceedings of Machine Learning Research},
  volume    = {49},
  pages     = {1617--1638},
  year      = {2016}
}

@inproceedings{ZhangReddi2016,
  author    = {Zhang, Hongyi and Reddi, Sashank J. and Sra, Suvrit},
  title     = {Riemannian SVRG: Fast Stochastic Optimization on Riemannian Manifolds},
  booktitle = {Advances in Neural Information Processing Systems},
  volume    = {29},
  pages     = {4592--4600},
  year      = {2016}
}

@inproceedings{Zhang2018,
  author    = {Zhang, Hongyi and Sra, Suvrit},
  title     = {An Estimate Sequence for Geodesically Convex Optimization},
  booktitle = {Proceedings of the 31st Conference on Learning Theory},
  series    = {Proceedings of Machine Learning Research},
  volume    = {75},
  pages     = {1703--1723},
  year      = {2018}
}

@article{agarwal2021,
  author    = {Naman Agarwal and Nicolas Boumal and Brian Bullins and Coralia Cartis},
  title     = {Adaptive regularization with cubics on manifolds},
  journal   = {Mathematical Programming},
  volume    = {188},
  pages     = {85--134},
  year      = {2021}
}

@article{goyens2024,
  author    = {Florentin Goyens and Cl{\'e}ment W. Royer},
  title     = {Riemannian trust-region methods for strict saddle functions with complexity guarantees},
  journal   = {Mathematical Programming},
  pages     = {1--43},
  year      = {2024}
}

@article{kungurtsev2023,
  author    = {Vyacheslav Kungurtsev and Francesco Rinaldi and Damiano Zeffiro},
  title     = {Retraction-Based Direct Search Methods for Derivative Free Riemannian Optimization},
  journal   = {Journal of Optimization Theory and Applications},
  year      = {2023},
  note      = {to appear}
}

@article{yao2021,
  author    = {Teng{-}Teng Yao and Zhi Zhao and Zheng{-}Jian Bai and Xiao{-}Qing Jin},
  title     = {A Riemannian derivative-free Polak--Ribi{\`e}re--Polyak method for tangent vector field},
  journal   = {Numerical Algorithms},
  volume    = {86},
  number    = {1},
  pages     = {325--355},
  year      = {2021}
}

@article{weber2023,
  author    = {Melanie Weber and Suvrit Sra},
  title     = {Riemannian optimization via Frank-Wolfe methods},
  journal   = {Mathematical Programming},
  volume    = {199},
  pages     = {525--556},
  year      = {2023}
}

@article{huang2022,
  author    = {Wen Huang and Ke Wei},
  title     = {Riemannian proximal gradient methods},
  journal   = {Mathematical Programming},
  volume    = {194},
  number    = {1--2},
  pages     = {371--413},
  year      = {2022}
}

@article{lai2024,
  author    = {Zhijian Lai and Akiko Yoshise},
  title     = {Riemannian interior point methods for constrained optimization on manifolds},
  journal   = {Journal of Optimization Theory and Applications},
  volume    = {201},
  pages     = {433--469},
  year      = {2024}
}

@article{nesterov2013gradient,
  title={Gradient methods for minimizing composite functions},
  author={Nesterov, Yu},
  journal={Mathematical programming},
  volume={140},
  number={1},
  pages={125--161},
  year={2013},
  publisher={Springer}
}

@article{Nes83,
  author       = {Nesterov, Yu.\ E.},
  title        = {A method of solving a convex programming problem with convergence rate $O\bigl(\frac{1}{k^2}\bigr)$},
  journal      = {Dokl. Akad. Nauk SSSR},
  year         = {1983},
  volume       = {269},
  number       = {3},
  pages        = {543--547},
  mathnet      = {http://mi.mathnet.ru/dan46009},
  mathscinet    = {http://mathscinet.ams.org/mathscinet-getitem?mr=0701288},
  zmath        = {https://zbmath.org/?q=an:0535.90071}
}

@book{nesterov2013introductory,
  title={Introductory lectures on convex optimization: A basic course},
  author={Nesterov, Yurii},
  volume={87},
  year={2013},
  publisher={Springer Science \& Business Media}
}

@article{liu2017accelerated,
  title={Accelerated first-order methods for geodesically convex optimization on Riemannian manifolds},
  author={Liu, Yuanyuan and Shang, Fanhua and Cheng, James and Cheng, Hong and Jiao, Licheng},
  journal={Advances in Neural Information Processing Systems},
  volume={30},
  year={2017}
}

@inproceedings{Kim2022,
  author    = {Kim, Jungbin and Yang, Insoon},
  title     = {Accelerated Gradient Methods for Geodesically Convex Optimization: Tractable Algorithms and Convergence Analysis},
  booktitle = {Proceedings of the 39th International Conference on Machine Learning},
  series    = {Proceedings of Machine Learning Research},
  volume    = {162},
  pages     = {11255--11282},
  year      = {2022},
  month     = {17--23 Jul},
  address   = {Baltimore, Maryland, USA},
  editor    = {Chaudhuri, Kamalika and Jegelka, Stefanie and Song, Le and Szepesv{\'a}ri, Csaba and Niu, Gang and Sabato, Sivan},
  publisher = {PMLR},
  url       = {https://proceedings.mlr.press/v162/kim22k.html},
  pdf       = {https://proceedings.mlr.press/v162/kim22k/kim22k.pdf}
}

@article{goyens2024registration,
  title={Registration of algebraic varieties using Riemannian optimization},
  author={Goyens, Florentin and Cartis, Coralia and Chr{\'e}tien, St{\'e}phane},
  journal={arXiv preprint arXiv:2401.08562},
  year={2024}
}

@book{Goodman2017IFO,
  author    = {Joseph W. Goodman},
  title     = {Introduction to Fourier Optics},
  edition   = {4},
  year      = {2017},
  publisher = {W. H. Freeman / Macmillan Higher Education},
  address   = {New York},
  isbn      = {978-1319119164}
}

@article{engquist2018approximate,
  title={Approximate separability of the Green's function of the Helmholtz equation in the high frequency limit},
  author={Engquist, Bj{\"o}rn and Zhao, Hongkai},
  journal={Communications on Pure and Applied Mathematics},
  volume={71},
  number={11},
  pages={2220--2274},
  year={2018},
  publisher={Wiley Online Library}
}

@article{nomizu1961existence,
  title={The existence of complete Riemannian metrics},
  author={Nomizu, Katsumi and Ozeki, Hideki},
  journal={Proceedings of the American Mathematical Society},
  volume={12},
  number={6},
  pages={889--891},
  year={1961},
  publisher={JSTOR}
}

@inproceedings{hoppe1993mesh,
  title={Mesh optimization},
  author={Hoppe, Hugues and DeRose, Tony and Duchamp, Tom and McDonald, John and Stuetzle, Werner},
  booktitle={Proceedings of the 20th annual conference on Computer graphics and interactive techniques},
  pages={19--26},
  year={1993}
}

@article{Sato2022,
  author  = {Hiroyuki Sato},
  title   = {Riemannian Conjugate Gradient Methods: General Framework and Specific Algorithms with Convergence Analyses},
  journal = {SIAM Journal on Optimization},
  volume  = {32},
  number  = {4},
  pages   = {2690--2717},
  year    = {2022},
  doi     = {10.1137/21M1464178}
}

@article{Isserlis1918,
  author  = {Isserlis, Leon},
  title   = {On a Formula for the Product-Moment Coefficient of Any Order of a Normal Frequency Distribution in Any Number of Variables},
  journal = {Biometrika},
  year    = {1918},
  volume  = {12},
  number  = {1--2},
  pages   = {134--139},
  doi     = {10.1093/biomet/12.1-2.134},
  jstor   = {2331932}
}

@book{Koopmans1974,
  author    = {Koopmans, Lambert H.},
  title     = {The Spectral Analysis of Time Series},
  series    = {Probability and Mathematical Statistics},
  volume    = {22},
  publisher = {Academic Press},
  address   = {New York},
  year      = {1974},
  isbn      = {0124192505},
  language  = {English}
}

@article{isserlis1916errors,
  author       = {Isserlis, Leon},
  title        = {On Certain Probable Errors and Correlation Coefficients of Multiple Frequency Distributions with Skew Regression},
  journal      = {Biometrika},
  year         = {1916},
  volume       = {11},
  number       = {3},
  pages        = {185--190},
  doi          = {10.1093/biomet/11.3.185},
  issn         = {0006-3444},
  publisher    = {Oxford University Press},
  note         = {JSTOR\: 2331846},
  url          = {https://doi.org/10.1093/biomet/11.3.185}
}

@book{mardia1999directional,
  author       = {Mardia, Kanti V. and Jupp, Peter E.},
  title        = {Directional Statistics},
  series       = {Wiley Series in Probability and Statistics},
  publisher    = {John Wiley \& Sons},
  address      = {Chichester, UK},
  year         = {1999},
  doi          = {10.1002/9780470316979},
  isbn         = {9780471953333},
  url          = {https://doi.org/10.1002/9780470316979}
}

@article{wang2021no,
  title={No-regret online learning over riemannian manifolds},
  author={Wang, Xi and Tu, Zhipeng and Hong, Yiguang and Wu, Yingyi and Shi, Guodong},
  journal={Advances in Neural Information Processing Systems},
  volume={34},
  pages={28323--28335},
  year={2021}
}
\bibliographystyle{iclr2026_conference}

\newpage
\appendix

\addcontentsline{toc}{section}{Appendix} 
\part{Appendix} 
\parttoc 
\allowdisplaybreaks
\section{Related Literature}\label{appendix:related-work}
\subsection{Optimization on Riemannian Manifolds}\label{sec:optimization-on-riemannain-manifolds}

\paragraph{First-Order Methods} Riemannian first-order optimization adapts gradient-based methods to Riemannian manifolds. For geodesically convex functions, Riemannian gradient descent enjoys convergence guarantees akin to Euclidean GD, with complexity $O(L/\epsilon)$ for $L$-smooth objectives. \citet{Zhang2016} established global complexity bounds on Hadamard manifolds with curvature-dependent rates. Stochastic Riemannian gradient descent converges almost surely under standard assumptions \citep{Bonnabel2013}, while variance-reduced variants such as R-SVRG \citep{ZhangReddi2016} and R-SRG/SPIDER improve convergence for finite-sum problems.
Adapting acceleration \citep{nesterov2013gradient,Nes83,nesterov2013introductory} to manifolds proved challenging due to the absence of global linearity. Early methods \citep{liu2017accelerated} were shown computationally impractical; \citet{Zhang2018} and \citet{Ahn2020} addressed this issue by controlling metric distortion, achieving accelerated rates under bounded curvature. \citet{Alimisis2021} proposed momentum-based RAGDsDR, while \citet{Kim2022} achieved optimal accelerated rates with RNAG, matching the $O(\sqrt{L/\epsilon})$ Euclidean complexity. There are still some fundamental limits remained: \citet{Hamilton2021} and \citet{Criscitiello2022} showed that curvature may prevent acceleration entirely on negatively curved manifolds. These negative impacts would be eliminated using the second-order methods. 

\paragraph{Second-Order Methods} Riemannian second-order methods utilize curvature via Hessians and connections. Newton-type methods achieve quadratic local convergence using the Riemannian Hessian \citep{Absil2008}, though global convergence requires safeguards like line search or trust-region strategies. Trust-region methods \citep{Absil2007} solve quadratic models in the tangent space and retract back, ensuring convergence to second-order points. Recent improvements analyze their behavior near strict saddles \citep{goyens2024}. Alternatively, Riemannian ARC \citep{agarwal2021} uses cubic regularization to achieve optimal $O(\epsilon^{-3/2})$ complexity.
Quasi-Newton methods generalize BFGS to manifolds via vector transports. \citet{ring2012optimization} initiated this line, and \citet{Huang2018} showed global convergence (and superlinear rates) under mild assumptions. Limited-memory variants (R-LBFGS) scale better to large problems. Overall, second-order methods offer faster local convergence but require careful geometric handling of Hessians and transports.

\paragraph{Zeroth-Order Methods} When gradients are unavailable, zeroth-order methods estimate descent directions via sampling. \citet{li2023stochastic,Li2023RASA} applied Gaussian smoothing in tangent spaces using exponential maps to construct unbiased gradient estimators with variance bounds that depend on curvature and dimension. A stochastic zeroth-order Riemannian gradient descent achieves $O(n/\epsilon^2)$ convergence for smooth nonconvex functions. \citet{Wang2023} proposed two-point bandit methods (R-2-BAN) for online geodesically convex optimization, showing regret bounds matching Euclidean rates up to curvature factors.
Other derivative-free approaches include retraction-based direct search methods, as in \citet{kungurtsev2023}, with convergence guarantees for smooth and nonsmooth objectives. \citet{yao2021} developed a Polak–Ribiére–Polyak conjugate gradient method using only function values and nonmonotone line search, achieving global convergence and hybridizing with Newton steps for improved performance. 

\paragraph{Hybrid and Other Emerging Directions} Several novel methods extend optimization frameworks to the Riemannian setting. Adaptive methods such as Riemannian Adagrad and Adam \citep{becigneul2019riemannian} address the challenge of accumulating gradients across varying tangent spaces by working on product manifolds, yielding convergence results for geodesically convex problems.
Riemannian conjugate gradient (CG) methods, which define conjugacy across tangent spaces via vector transport, have been shown to converge globally under standard line-search assumptions \citep{Sato2013,Sato2022,Kim2022}. 
Projection-free methods like Riemannian Frank-Wolfe avoid expensive retractions by solving a linear oracle at each step. \citet{weber2023} showed that Riemannian Frank-Wolfe method converges sublinearly in general and linearly under geodesic strong convexity. For composite objectives with nonsmooth regularizers, Riemannian proximal gradient methods offer convergence guarantees; \citet{huang2022} proved an $O(1/k)$ rate under retraction-based convexity.
Finally, primal-dual interior-point methods have also been adapted: \citet{lai2024} introduced a Riemannian interior-point algorithm with local superlinear convergence and global guarantees, mirroring the classical barrier method behavior in curved spaces.

\subsection{Riemannian Zeroth-Order Gradient Estimators}\label{sec:extimators}
In this section, we discuss several widely used gradient estimators in Riemannian optimization and highlight their connections to our work. Importantly, all of these estimators are developed under the assumption of a complete Riemannian manifold. In contrast, our setting differs from this convention by considering optimization over possibly \emph{geodesically incomplete} Riemannian manifolds.

\paragraph{\citet{wang2021no}} This paper extends the \emph{one-point} bandit estimator to homogeneous Hadamard manifolds. At the point $x\in \calM$ and given $y$ uniformly sampled from the geodesic sphere centered at $x$ with the radius $\delta$, by using the gradient estimator
\[  \widehat{\nabla} f(x):=  f(y) \frac{\exp^{-1}_x (y)}{\|\exp^{-1}_x (y)\|},\]
this work established the best-possible regret rate $\mathcal{O}(T^{3/4})$ for $g$-convex losses in the online regret optimization problem. 

\paragraph{\citet{Wang2023}} This journal version further develops a \emph{two-point} bandit estimator on symmetric Hadamard manifolds. Uniformly draw $y$ from the geodesic sphere centered at $x$ with the radius $\delta$ and defined $-y$ as the antipodal point of $y$. The gradient estimator is given by 
\[  \widehat{\nabla} f(x):=  \frac{f(y) - f(-y)}{2} \frac{\exp^{-1}_x (y)}{\|\exp^{-1}_x (y)\|}.\]
The regret improves to $\mathcal{O}(\sqrt{T})$ for $g$-convex and $\mathcal{O}(\log {T})$ for strongly $g$-convex losses. 

\paragraph{\citet{li2023stochastic} \& \citet{Maass2022Tracking}} These papers introduce a non-symmetric two-point Riemannian zeroth-order oracle for the online setting \citep{Maass2022Tracking} and the expected loss setting \citep{li2023stochastic}. With a tangent perturbation $v \in T_x\calM$ (obtained by projecting an ambient Gaussian onto $T_x\calM$), the gradient estimator is 
\[  \widehat{\nabla} f(x):=  \frac{f\circ \exp_x (\mu v) - f(x)}{\mu}  v.\]
Here we have adjusted the estimator from \citet{Maass2022Tracking} to the time-invariant expected objective function setting to align with our problem setup. This estimator is the direct generalization of the one-side Gaussian smoothing estimator widely used in Euclidean zeroth-order optimization. 

\paragraph{\citet{He2024Accelerated}} This work extends coordinate-wise finite differences to manifolds. Using an orthonormal basis $\{e_i\}$ of $T_x \calM$,, the deterministic coordinate-wise zeroth-order estimator is 
\[ \widehat{\nabla} f(x):=   \sum_{i=1}^d  \frac{f\circ \exp_x(\mu e_i) - f\circ \exp_x (-\mu e_i)}{2\mu } e_i .\]

In summary, compared to approaches that rely on projecting from the ambient Euclidean space, our analysis is purely \textit{intrinsic}, that is, the gradient estimator depends only on the Riemannian structure and is independent of any particular embedding. In contrast to prior intrinsic estimators, which primarily focus on geodesically convex problems, our work addresses the non-convex setting. As a result, our contributions extend the scope of existing research on Riemannian zeroth-order optimization.

\section{Preliminaries}\label{sec:preliminaries} 
In this section, we review some basic definitions and results from Riemannian geometry that are used in our analysis. For a full review, we refer the reader to some classical textbook \citep{Lee2003,lee2018introduction}.   
For convenience, we summarize our notations in \Cref{tab:glossary}.



\paragraph{Smooth Manifolds} A $d$-dimensional \textit{smooth manifold} $\calM$ is a second-countable Hausdorff topological space such that at any point $p \in \calM$, there exists $U_p \subset \calM$, a neighborhood of $p$, such that $U_p$ is diffeomorphism to the Euclidean space $\RR^d.$ Let $C^\infty(U)$ be all smooth functions over $U\subset \calM$. A \textit{deviation} at $p\in \calM$ is a linear mapping $v: C^\infty(U_p) \to \RR$ satisfying
$$v(fg)=v(f)\cdot g(p)+ v(g)\cdot f(p)$$
for all $f,g\in C^\infty(U_p)$. Then the \textit{tangent space} at $p$, denoted by $T_p \calM$, is the real vector space of all deviation at $p$.  The \textit{tangent bundle} is the disjoint union of all tangent spaces 
\[T\calM:= \{ (p,v) \mid p \in \calM, v\in T_p \calM \}.\]
A smooth map $f: \calM \to \RR^n$ is called an \textit{immersion} if its differential $d f \!\mid_p: T_p \calM \to T_{f(p) } \RR^n$, defined by $d f\!\mid_p (v):=v(f)$ for each~$v\in T_p \calM$, is an injective function at every $p\in \calM$; it is called an \textit{embedding} if it is an immersion and is also homeomorphic onto its image $f(\calM):=\{f(p) \mid p\in \calM \}$.

\paragraph{Riemannian Manifolds} A $d$-dimensional \textit{Riemannian manifold} $(\calM, g)$ is a $d$-dimensional smooth manifold equipped with a Riemannian metric $g$, which assigns to each point $p \in \mathcal{M}$ an inner product
\[g_p: T_p \calM \times T_p \calM \to \RR, \]
where $ T_p \calM$ denotes the tangent space at $p \in \calM$.  We also write $\langle \cdot , \cdot \rangle_p$ to represent $g_p$ and $\| \cdot \|_p$ for the norm it induces.   Let $\phi: \calM \to \RR^n$ be an embedding from the smooth manifold $\calM$ to the Euclidean space  $\RR^n$. Then $\mathcal{M}$ inherits a Riemannian metric from the ambient Euclidean structure via the pullback metric  
\[ g_p^E ( v, u) :=   \langle d\phi  |_{p}(v),   d\phi  |_{p}(u) \rangle = \langle \phi(v), \phi(u) \rangle ,\]
where $\langle \cdot,\cdot\rangle$ denotes the Euclidean inner product on $\mathbb{R}^n$. In this case, we say the metric $g^E$ is induced by the embedding $\phi$, and refer to $\mathbb{R}^n$ as the ambient Euclidean space. To distinguish between Riemannian metrics that may be induced by embeddings into different ambient spaces, we introduce the following definition:
\begin{definition}[$n$-Euclidean metric]\label{def:euclidean-metric}
    A Riemannian metric $g$ is called $n$-\textit{Euclidean} if there exists a smooth embedding $\phi: \calM \to \RR^n$ such that $g$ is induced by $\phi$.  
\end{definition}  
Notably, given an arbitrary $d$-dimensional Riemannian manifold $(\calM, g)$, the Nash embedding theorem \citep{nash1956imbedding,lee2018introduction} states that there always exists $n\in \NN$ such that the Riemannian metric $g$ is $n$-Euclidean. However, if we consider a different Riemannian metric $g'$ on the same manifold $\calM$, there is no guarantee that $g'$ can also be realized as an $n$-Euclidean metric for the same $n$.  This observation motivates us to develop an intrinsic analysis framework that does not depend on any specific embedding.
 
\paragraph{Geodesic}  A \textit{vector field} on $\mathcal{M}$ is a smooth \emph{section}
$X : \mathcal{M} \to T\mathcal{M}$ of the canonical tangent-bundle projection
$\pi : T\mathcal{M} \to \mathcal{M}$; equivalently, it is a smooth map
satisfying $\pi \circ X = \operatorname{id}_{\mathcal{M}}$.  Let $\mathfrak{X}(\mathcal{M})$ be the space of all vector fields on a Riemannian manifold $(\mathcal{M},g)$.  The \emph{Levi-Civita connection} is the unique affine connection 
\[
  \nabla:\mathfrak{X}(\mathcal{M})\times\mathfrak{X}(\mathcal{M})\to\mathfrak{X}(\mathcal{M}),\quad (X,Y)\mapsto\nabla_XY,
\]
satisfying torsion-free and metric-compatible\footnote{We call an affine connection torsion-free if $\nabla_XY-\nabla_YX=[X,Y]$, where the Lie bracket $[X,Y]$ is defined by $[X,Y](f)=X(Y(f))-Y(X(f))$ for any $f\in C^\infty(\mathcal{M})$), and metric-compatible if $X\bigl(g(Y,Z)\bigr)=g(\nabla_XY,Z)+g(Y,\nabla_XZ)$ for all $X,Y,Z$.}.  Let $I \subset \RR$ be an open interval containing $0$. A smooth curve $\gamma : I \to \calM$ is called a \emph{geodesic} over $I$ if its velocity vector $\gamma'(t):= d \gamma \!\mid_t \! (\frac{\partial}{\partial t}) \in T_{\gamma(t)}\calM$ satisfies the geodesic equation\footnote{More explicitly, we choose an extension vector field $\widetilde{X} \in \mathfrak{X}$ satisfying $\widetilde{X}(\gamma(t) ) = \gamma'(t)$ for all $t\in I$. Then we define $\nabla_{\gamma'(t)}\gamma'(t):= \nabla_{\widetilde{X}} \widetilde{X} \!\mid_{\gamma(t)}$. Here we directly use $\nabla_{\gamma'(t)}\gamma'(t)$  for our convenience, as this definition does not rely on the choice of extension (see Lemma 4.9, \citet{lee2018introduction}).}:
\[\nabla_{\gamma'(t)}\gamma'(t)=0\]
for all $t\in I$.  Given a point $p \in \calM$ and an initial velocity $v \in T_p\calM$, there always exists a unique geodesic $\gamma$ such that $\gamma(0) = p$ and $\gamma'(0) = v$ (Theorem 4.10, \citet{lee2018introduction}). The \emph{exponential map} at $p$, denoted $\exp_p : T_p\calM \to \calM$, is defined by $\exp_p(v) := \gamma(1)$. Importantly, the existence of geodesic does not guarantee that $\gamma$ can be defined over an open interval containing $[0,1]$; that is, the exponential map can be undefined for some $(p,v)\in T\calM$. We summarize this observation in the following proposition: 
\begin{proposition}[Proposition 5.7, \citet{lee2018introduction}]\label{prop:local}
    The exponential map $\exp_p : T_p\calM \to \calM$ is locally defined on an open neighbor of $0\in T_p \calM$. 
\end{proposition}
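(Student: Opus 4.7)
The plan is to reduce the statement to the classical smooth-dependence theorem for ordinary differential equations, applied to the geodesic spray on the tangent bundle, and then to invoke the homogeneity of geodesics to pass from arbitrarily small times to time~$1$.

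First, I would choose a local coordinate chart $(U,(x^{i}))$ around $p$ and write the geodesic equation $\nabla_{\gamma'}\gamma'=0$ in coordinates as the second-order system
\begin{equation*}
\ddot{x}^{k}(t)+\Gamma^{k}_{ij}\bigl(\gamma(t)\bigr)\,\dot{x}^{i}(t)\dot{x}^{j}(t)=0,
\end{equation*}
where the Christoffel symbols $\Gamma^{k}_{ij}$ are smooth functions of the base point. Introducing tangent-bundle coordinates $(x^{i},u^{i})$ with $u^{i}=\dot{x}^{i}$, this becomes a first-order autonomous system
\begin{equation*}
\dot{x}^{i}=u^{i},\qquad \dot{u}^{k}=-\Gamma^{k}_{ij}(x)\,u^{i}u^{j},
\end{equation*}
which defines a smooth vector field $G$ (the geodesic spray) on the open set $\pi^{-1}(U)\subset T\calM$. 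At the zero section, $G(p,0)=0$, so $(p,0)$ is an equilibrium of the flow.

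Next, I would apply the standard theorem on existence, uniqueness and smooth dependence of solutions of ODEs on initial data to $G$ at $(p,0)\in T\calM$. This produces an open neighborhood $\widetilde W$ of $(p,0)$ in $T\calM$ and an $\epsilon>0$ such that for every $(q,w)\in \widetilde W$ the unique maximal geodesic $\gamma_{q,w}$ is defined on $(-\epsilon,\epsilon)$, and the map $(t,q,w)\mapsto \gamma_{q,w}(t)$ is smooth on $(-\epsilon,\epsilon)\times \widetilde W$.

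Finally, I would exploit the homogeneity identity $\gamma_{q,\lambda w}(t)=\gamma_{q,w}(\lambda t)$, which follows by uniqueness because both sides satisfy the geodesic equation with the same initial conditions wherever defined. Setting $q=p$ and choosing an open neighborhood $V\subset T_{p}\calM$ of $0$ small enough that $\bigl(p,\tfrac{2}{\epsilon}v\bigr)\in \widetilde W$ for all $v\in V$, I obtain
\begin{equation*}
\exp_{p}(v)=\gamma_{p,v}(1)=\gamma_{p,(2/\epsilon)v}(\epsilon/2),
\end{equation*}
which is well defined since $\epsilon/2\in(-\epsilon,\epsilon)$. Hence $\exp_{p}$ is defined and smooth on the open neighborhood $V$ of $0\in T_{p}\calM$.

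The only real obstacle is bookkeeping: ensuring that the neighborhood produced by the ODE theorem on $T\calM$ restricts to an open neighborhood of $0$ in the fiber $T_{p}\calM$, and that the homogeneity rescaling keeps the rescaled time inside the interval of existence. Once the geodesic spray is correctly set up on $T\calM$ and the homogeneity identity is invoked, there is no substantive analytic difficulty beyond the classical ODE machinery.
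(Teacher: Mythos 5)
Your proof is correct and follows essentially the same route as the source the paper cites: the paper states this as Proposition~5.7 of \citet{lee2018introduction} without reproducing an argument, and Lee's proof is precisely the one you give, namely realizing the geodesic equation as a first-order autonomous ODE (the geodesic spray) on $T\calM$, invoking existence, uniqueness, and smooth dependence on initial conditions to obtain a uniform time $\epsilon$ of existence on a neighborhood of $(p,0)$ in $T\calM$, and then using the homogeneity identity $\gamma_{p,\lambda v}(t)=\gamma_{p,v}(\lambda t)$ to trade a short time interval for a small initial velocity and reach $t=1$.
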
 
\begin{remark}
    This proposition reveals a fundamental difference between Riemannian and Euclidean zeroth-order optimization: in the Riemannian setting, one cannot simply apply a small perturbation in the direction $v$ at the point $p \in \calM$, since the exponential map $\exp_p(\mu v)$ may be undefined. Developing a zeroth-order gradient estimator that operates within this local geometric structure is one of the central goals of our work.
\end{remark}
 Computing $\exp_p(v)$ involves solving a differentiable equation, which is often costly or intractable; hence, existing Riemannian optimization literature typically uses the first-order approximation called the \emph{retraction} to approximate the exponential map.
\begin{definition}[Retraction]\label{def:retraction}
A \emph{retraction} on a manifold $\calM$ is a smooth map $\Ret: T\calM \to \calM$ such that for all $p \in \calM$:
\begin{enumerate}
\item $\Ret_p(0) = p$, where $0 \in T_p\calM$ is the zero vector;
\item The differential $d\Ret_p|_0 : T_p\calM \to T_p\calM$ satisfies $d\Ret_p|_0 = \mathrm{id}_{T_p\calM}$.
\end{enumerate} 
\end{definition}
Here, $\Ret_p: T_p\calM \to \calM$ denotes the restriction of $R$ to the tangent space at $p$. Intuitively, a retraction approximates $\exp_p(v)$ by preserving the first-order geometry of geodesics while being easier to compute. 

The following lemma further characterizes the relation between the exponential map and the retraction. We present it here without providing the proof. 
\begin{lemma}[Theorem 2, \citet{Bonnabel2013}]Let $(\calM, g)$ be a smooth Riemannian manifold. 
\begin{enumerate}[(i)] 
    \item The exponential map $\exp: T\calM \to \calM$ is a retraction.
    \item For every $p\in \calM$, the geodesic distance $d(\cdot, \cdot):\calM\times\calM \to [0,+\infty)$ between $\exp_p (v)$ and $\Ret_p (v)$ is upper bounded as
    \[  d\Bigl(  \exp_p (v), \Ret_p (v) \Bigr) \leq C \| v\|^2_p \]
    for any $v$ and any retraction $\Ret$.  
\end{enumerate}
\end{lemma}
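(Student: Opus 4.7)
\smallskip

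For part (i), the plan is to verify the two defining properties of a retraction in \Cref{def:retraction} directly from the construction of the exponential map. First, the unique geodesic with initial point $p$ and zero initial velocity is the constant curve $t \mapsto p$, which gives $\exp_p(0) = p$. Second, I would invoke the scaling identity $\exp_p(tv) = \gamma_v(t)$, where $\gamma_v$ is the geodesic determined by $(p,v)\in T\calM$. Differentiating at $t=0$ yields
\begin{equation*}
d(\exp_p)|_0(v) \;=\; \left.\frac{d}{dt}\right|_{t=0}\!\exp_p(tv) \;=\; \gamma_v'(0) \;=\; v,
\end{equation*}
under the canonical identification $T_0(T_p\calM) \cong T_p\calM$, which gives $d(\exp_p)|_0 = \mathrm{id}_{T_p\calM}$.

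For part (ii), the natural route is to work in Riemannian normal coordinates centered at $p$, valid in some open ball $U \subset T_p\calM$ around the origin. In these coordinates, the exponential map is identified with the identity map, and by Gauss's lemma together with continuity of the metric $g$, the geodesic distance $d(\cdot,\cdot)$ is comparable to the Euclidean norm of coordinate differences in a neighborhood of $p$: there exists a constant $C_1$ such that $d(q_1,q_2) \le C_1\, \|q_1 - q_2\|$ for $q_1,q_2$ close to $p$. On the retraction side, since $\Ret_p$ is smooth with $\Ret_p(0)=p$ and $d(\Ret_p)|_0 = \mathrm{id}$, a second-order Taylor expansion in normal coordinates gives $\Ret_p(v) = v + R(v)$ with $\|R(v)\| \le C_2 \|v\|_p^2$ for $v$ in a compact neighborhood of $0$, by boundedness of the Hessian of $\Ret_p$ on that neighborhood.

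Combining the two pieces, the coordinate representations of $\exp_p(v)$ and $\Ret_p(v)$ differ by $R(v)$ with $\|R(v)\| \le C_2 \|v\|_p^2$, so
\begin{equation*}
d\bigl(\exp_p(v),\Ret_p(v)\bigr) \;\le\; C_1 \|R(v)\| \;\le\; C_1 C_2 \, \|v\|_p^2,
\end{equation*}
which is the desired bound with $C := C_1 C_2$.

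The main obstacle is that the preceding argument is inherently local: both the normal-coordinate chart and the Taylor remainder bound for $\Ret_p$ hold only on a neighborhood of the origin in $T_p\calM$. The statement is understood in this local sense, with a constant $C$ that a priori depends on $p$ and on the choice of $\Ret$. To upgrade to a uniform constant over a region of interest, I would invoke a compactness argument: restricting attention to $(p,v)$ in a compact subset of $T\calM$, the Hessian of $\Ret$ and the metric comparison constant vary continuously and thus admit a uniform bound, yielding a single $C$ on that region. This matches the usage of the lemma in the SGD analysis, where the iterates are controlled within a bounded set via the $L$-smoothness of $f$.
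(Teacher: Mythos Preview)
The paper does not supply its own proof of this lemma: immediately after stating it, the authors write ``We present it here without providing the proof,'' deferring to \citet{Bonnabel2013}. So there is no paper proof to compare against.

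Your argument is essentially correct for the local statement, and the approach via normal coordinates plus a second-order Taylor remainder for $\Ret_p$ is the standard one. Part (i) is fine as written. For part (ii), your identification of the local nature of the bound is exactly right: both the normal-coordinate comparison $d(q_1,q_2)\le C_1\|q_1-q_2\|$ and the Taylor remainder control on $\Ret_p$ are valid only in a neighborhood of $0\in T_p\calM$, so the constant $C$ you obtain depends on $p$ and on $\Ret$. This is the correct reading of the lemma as stated.

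One small correction to your closing remark: the paper does not in fact rely on this lemma in the SGD analysis, nor does it control iterates in a bounded set via $L$-smoothness. Instead, the authors impose the bound $|f(\Ret_p(v))-f(\exp_p(v))|\le C_\Ret\|v\|_p^2$ directly as \Cref{ass:retraction}, and separately (in \Cref{lemma:gap-between-ret-and-exp}) show that this assumption follows from a uniformly bounded gradient together with bounded Hessian. So your compactness remark, while mathematically sensible, does not match how the paper actually handles the issue.
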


\paragraph{Gradient} Let the cotangent space $T_p^* \calM$ be the dual space of $T_p \calM$; that is, the space of all linear mappings $\psi: T_x \calM \to \RR$. There is a natural isomorphism between $T_p \calM$ and $T_p^* \calM$ induced by the Riemannian metric $g$: 
\begin{align*}
    \flat_p :\; &T_p \calM \to T^*_p \calM, \quad v \mapsto g_p(v, \cdot);  \\
     \sharp_p:\;  &T^*_p \calM \to T_p \calM,   \quad\omega \mapsto\omega^{\sharp} \quad\text{satisfying}\quad g_p(\omega^\sharp, v)= \omega(v),
\end{align*}
for all $v \in T_p \calM$. Let $f:\calM \to \RR$ be a smooth real-value function. The differential of $f$ at $p\in \calM$, given by $d f|_p (v) :=  v f$,  
naturally defines a covector in the cotangent space;  that is, $d f|_p \in T_p^* \calM$. The gradient of $f$, denoted by $\grad f \in \mathfrak{X}(\calM)$, is a vector field given by  
\[ p \mapsto \grad f (p)  := \bigl( d f|_p  \bigr)^\sharp.\]
In this paper, we investigate the approach of estimating $\grad f (p) $ given only the access to the function evaluation. There have been a rich body of literature  in this direction, which we summarize  in \Cref{appendix:related-work}. In contrast, our approach is purely intrinsic, making our result distinct from existing literature.

\section{Main Results}

\subsection{Assumptions}\label{appendix-assumption}
The following assumption is standard in stochastic optimization literature \citep{mishchenko2020random,khaled2022better}. In the context of Riemannian optimization, it is often coupled with \Cref{ass:retraction} to define the $L$-smoothness of the pullback function \citep{Bonnabel2013,li2023stochastic,He2024Accelerated}. In contrast, we decouple these two assumptions to make their respective roles and dependencies more transparent.
\begin{assumption}\label{ass:bounded-hessian-appendix}
    In the optimization problem given by \Cref{eq:manifold_opt}, the individual loss function \[f(\cdot;\xi): \calM \to \RR\]
    satisfies the following two properties:
    \begin{enumerate}[(a)]
        \item $L$-Bounded Hessian; for all $p \in \calM$, the Hessian matrix at the point $p$ is bounded by $L$.
        \item Lower boundedness; the infimum $f_\xi^* := $ exists almost surely with $\xi \sim \Xi$. 
    \end{enumerate}
\end{assumption}

The following assumption imposes a regularization condition on the retraction used in \Cref{thm:sgd-convergence}. While it is always possible to construct a pathological retraction that deviates substantially from the exponential map, such choices may still scale with $\|v\|_p$ but would negatively affect the final convergence rate. 
\begin{assumption}\label{ass:retraction-appendix}  Let $f:\calM \to \RR$ be a smooth function. There exists a constant $C_\Ret \geq 0$ such that
    \begin{align*}
        | f(\Ret_p(v) ) - f(\exp_p(v) ) | \leq C_\Ret    \|v\|_p^2.
    \end{align*} 
\end{assumption}
\begin{remark}
This assumption can indeed be replaced with a stronger but more widely used boundedness assumption (\textit{e.g.} the bounded gradient assumption). \citet{Bonnabel2013} has shown that the geodesic distance between the (first-order) retraction $\Ret_p(v)$ and the exponential map $\exp_p(v)$ is of the order $o(\|v\|_p^2)$ (see Theorem 2, \citet{Bonnabel2013}). In \Cref{lemma:gap-between-ret-and-exp}, we show that given appropriate smoothness and boundedness conditions, the gap between $f(\Ret_p(v) )$ and  $f(\exp_p(v) )$ is also of the order $o(\|v\|_p^2)$, which implies \Cref{ass:retraction}. Here we present this weaker assumption to avoid introducing the bounded gradient assumption. 
\end{remark}

\begin{assumption}\label{assumption:uniform-bound-appendix}
There exist constants $\rho>0$ and $M_3,M_4>0$ such that  
\[
 \bigl\|\nabla^{3}f(q)\bigr\|_{\mathrm{HS}}
   \le M_3,  \quad 
  \bigl\|\nabla^{4}f(q)\bigr\|_{\mathrm{HS}}
   \le M_4,
\]
for all $q\in \mathcal{B}_p(p,\rho)$, where $\mathcal{B}_p(p,\rho)$ denotes the geodesic ball of radius~$\rho$
and $\|\cdot\|_{\mathrm{HS}}$ is the Hilbert-Schmidt norm. 
\end{assumption}
\begin{remark}
Unlike the Euclidean setting, optimization on Riemannian manifolds often relies on additional boundedness assumptions. For example, \citet{He2024Accelerated} and \citet{li2023stochastic} impose a Lipschitz continuity condition on the Hessian of the pullback objective (Assumption 4.2 in \citet{He2024Accelerated}, Assumption 2.2 in \citet{li2023stochastic}), which can be viewed as a variant of \Cref{assumption:uniform-bound}. The assumption of bounded fourth-order derivatives in \Cref{assumption:uniform-bound} is less common in the literature. However, we emphasize that it plays a crucial role in our analysis: it enables us to capture the dependence on sectional curvature in the accuracy of zeroth-order gradient estimation (see \Cref{thm:mse-riem-zo}). From our perspective, introducing this assumption leads to a novel and more refined result that has not yet been explored in existing work.
\end{remark}

\vspace{1em}

\begin{assumption}\label{assumption:uniform-sectional-curvature-appendix}
There exists a constant \(\kappa\ge 0\) such that the sectional curvature of the Riemannian manifold \((\calM,g)\) satisfies  
\[
      |K_p(\sigma)| \le \kappa,
      \qquad
      \text{for every point }p\in\calM
      \text{ and every $2$-plane } \sigma\subset T_p\calM .
\]
Equivalently, \( -\kappa\le K_p(\sigma)\le\kappa \) for all \(p\) and \(\sigma\).  
\end{assumption}
\begin{remark}
    Many existing literature \citep{wang2021no,Wang2023} also made assumptions on the sectional curvature (lower) boundedness.  Here we present a slightly stronger assumption: we assume the sectional curvature is uniformly bounded (\textit{i.e.} both upper and lower boundedness). We note that this assumption has also been used in existing literature (see Assumption 1, \citet{Alimisis2021}).  
\end{remark}


\subsection{Supporting Lemmas}
The following lemma generalizes the expected smoothness widely used in non-convex optimization \citep{mishchenko2020random,khaled2022better,ma2025revisiting,ma2025on}.


\begin{lemma}\label{lemma:expected-smoothness}
    Let $f_\xi^*=\inf_{p\in \calM} f(p;\xi)$ and $f^* := \inf_p  \EE_{\xi \sim \Xi} f(\cdot;\xi)$.  Suppose that \Cref{ass:bounded-hessian} is satisfied and $f^* < +\infty$.  Then there exists $A,B \geq 0$ such that for any $p\in \calM$,
    \[  \EE \| \nabla f(p;\xi)  \|^2_p \leq  A [f (p) - f^* ] + B. \] 
\end{lemma}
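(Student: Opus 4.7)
The plan is to reduce the claim to a sample-wise Riemannian Polyak--Łojasiewicz-type inequality and then pass to expectation. Specifically, I would first show that under \Cref{ass:bounded-hessian}, each sample loss satisfies
\[
\|\nabla f(p;\xi)\|_p^2 \;\leq\; 2L\bigl(f(p;\xi) - f_\xi^*\bigr)
\]
pointwise for almost every $\xi$. Taking expectation over $\xi$, adding and subtracting $f^*$ on the right-hand side, and verifying that the residual $f^* - \EE_\xi f_\xi^*$ is nonnegative and finite, yields the bound with $A = 2L$ and $B = 2L(f^* - \EE_\xi f_\xi^*)$.

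The key step is the sample-wise inequality, which I would derive through the standard Riemannian descent lemma. Fix $\xi$ and $p$, set $v = -\nabla f(p;\xi)/\|\nabla f(p;\xi)\|_p$, and consider the unit-speed geodesic $\gamma(t) = \exp_p(tv)$. The bounded-Hessian assumption implies $\tfrac{d^2}{dt^2} f(\gamma(t);\xi) = \nabla^2 f(\gamma(t);\xi)[\gamma'(t),\gamma'(t)] \leq L$, using the fact that the Levi--Civita connection is metric-compatible and torsion-free so that $\nabla_{\gamma'}\gamma' = 0$. Integrating twice gives
\[
f(\exp_p(tv);\xi) \;\leq\; f(p;\xi) - t\,\|\nabla f(p;\xi)\|_p + \tfrac{L t^2}{2}.
\]
Minimizing the right-hand side at $t^\star = \|\nabla f(p;\xi)\|_p/L$ yields $f_\xi^* \leq f(p;\xi) - \tfrac{1}{2L}\|\nabla f(p;\xi)\|_p^2$, which is exactly the desired inequality after rearrangement.

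To finish, I would take expectation in $\xi$, obtaining $\EE\|\nabla f(p;\xi)\|_p^2 \leq 2L(f(p) - \EE_\xi f_\xi^*)$, and note that $\EE_\xi f_\xi^* \leq \inf_p \EE_\xi f(p;\xi) = f^*$ by commuting infimum and expectation, so the shift $f^* - \EE_\xi f_\xi^*$ is nonnegative; finiteness follows from $\EE_\xi f_\xi^* \leq \EE_\xi f(p_0;\xi) = f(p_0) < \infty$ for any reference point $p_0$.

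The main obstacle is that the descent argument requires $\exp_p(tv)$ to be defined along the full segment $t \in [0, \|\nabla f(p;\xi)\|_p/L]$, which can fail on a geodesically incomplete manifold. This is not an issue in the intended application, since this lemma is invoked inside \Cref{thm:sgd-convergence} whose hypotheses place us on the geodesically complete structure-preserving metric $g$ produced by \Cref{thm:structure-preserving}; in a more general setting one would restrict $t$ to the maximal interval of existence of the geodesic in the descent direction and pass to the supremum, which may degrade the constants but preserves the $Af + B$ form.
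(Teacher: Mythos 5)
Your proof is correct and follows essentially the same approach as the paper: both establish the sample-wise coercivity bound $\|\nabla f(p;\xi)\|_p^2 \le 2L(f(p;\xi) - f_\xi^*)$ via the $L$-bounded Hessian along a geodesic, then take expectation and split $f(p) - \EE_\xi f_\xi^*$ into $[f(p) - f^*] + [f^* - \EE_\xi f_\xi^*]$, using Jensen to verify $B \ge 0$. The only difference is cosmetic---the paper cites the coercive inequality from \citet{mishchenko2020random} whereas you rederive it from the descent lemma---and your closing remark about the geodesic's domain of existence is a correct and slightly more careful observation that the paper leaves implicit.
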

\begin{proof}
    Let $\gamma: \RR \to \RR$ be the geodesic  with $\gamma(0)=p$ and $\gamma'(0)=v$ (with the unit length). Then the composite of smooth mappings $f \circ \gamma (\cdot ; \xi): \RR \to \RR$ is a smooth function. By applying the coercive inequality, i.e. Proposition 2 (\citet{mishchenko2020random}), to $f\circ \gamma  (\cdot ; \xi)$  (it is equivalent to apply the Taylor formula expanding $f(p;\xi)$ at the minima point $p^*_\xi:= \argmin_{p} f(p;\xi)$):
    \begin{align*}
        f( \underbrace{\gamma(0)}_{=p}; \xi)  \geq   f(p^*_\xi;\xi) +      \frac{1}{2L} \| \nabla f( \gamma(0); \xi) \|^2_p, 
    \end{align*}
    where the inequality we apply the $L$-bounded Hessian assumption (\Cref{ass:bounded-hessian}).  Let $f_\xi^*:=f(p^*_\xi; \xi)$ (i.e. $f_\xi^*:=\inf_{p\in\calM} f(p;\xi)$) be the minima of the individual loss $f(\cdot;\xi)$; then we obtain
    \begin{align*} 
         \EE_{\xi \sim \Xi} \|\nabla f(p;\xi)\|_p^2  &\leq 2 L f(p) - 2 L \EE_{\xi \sim \Xi} f^*_\xi \\ 
         &= 2 L [f(p) - f^*] + 2L [f^* - \EE_{\xi \sim \Xi} f^*_\xi ], 
    \end{align*}
    where $f^*$ is the minima of the objective loss function. Typically, we have 
    \[f^* := \inf_p  \EE_{\xi \sim \Xi} f(\cdot;\xi) \geq  \EE_{\xi \sim \Xi} f^*_\xi :=  \EE_{\xi \sim \Xi} \left[\inf_p f(\cdot|\xi)\right]\]
    by Jensen's inequality using the convexity of the $\inf$ operator. Therefore, $B\geq 0$. The proof is completed by defining $A=2L$ and $B=2L [f^* - \EE_{\xi \sim \Xi} f^*_\xi ]$. 
\end{proof}

\begin{lemma}\label{lemma:existence-of-rho}
    Let $\calM$ be a smooth manifold. Then there exists a smooth function $\rho: \calM \to [0,+\infty)$ that is proper; \textit{i.e.}, for every compact set $C \subset \RR$, $\rho^{-1}(C)$ is compact  in $\calM$. 
\end{lemma}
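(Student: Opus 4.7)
The plan is to construct $\rho$ via a partition of unity subordinate to a compact exhaustion of $\calM$. The key ingredient is that the definition of smooth manifold in the preliminaries (second-countable, Hausdorff, locally Euclidean) forces $\calM$ to be paracompact and $\sigma$-compact, so standard tools apply.

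First I would produce a compact exhaustion $\emptyset = K_0 \subset K_1 \subset K_2 \subset \cdots$ with each $K_j$ compact, $K_j \subset \operatorname{int}(K_{j+1})$, and $\bigcup_{j\ge 1} K_j = \calM$. This exists because any second-countable, locally compact Hausdorff space is $\sigma$-compact, and one can then fatten a countable compact cover inductively using local compactness (see, e.g., Proposition 2.28 of \citet{lee2018introduction}).

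Next I would take the open cover $\{V_j\}_{j\ge 1}$ of $\calM$ defined by $V_j := \operatorname{int}(K_{j+1}) \setminus K_{j-1}$ (with $K_{-1}:=\emptyset$). This cover is locally finite, since any $p\in\calM$ lies in some $K_m$, and $V_j\cap K_m=\emptyset$ whenever $j\ge m+2$. By the standard existence theorem for smooth partitions of unity on a paracompact smooth manifold, I can choose a smooth partition of unity $\{\psi_j\}_{j\ge 1}$ with $\operatorname{supp}(\psi_j)\subset V_j$, $\psi_j\ge 0$, and $\sum_j \psi_j\equiv 1$. Define
\[
\rho(p) := \sum_{j=1}^\infty j\,\psi_j(p).
\]
Local finiteness of the supports turns the infinite sum into a finite sum on a neighborhood of each point, so $\rho$ is smooth and $\rho\ge 0$.

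For properness, I would argue as follows: if $p\notin K_N$, then for every $j\le N-1$ we have $\operatorname{supp}(\psi_j)\subset V_j\subset \operatorname{int}(K_{j+1})\subset K_N$, so $\psi_j(p)=0$. Since $\sum_j\psi_j(p)=1$, this forces $\sum_{j\ge N}\psi_j(p)=1$, and therefore $\rho(p)\ge N\sum_{j\ge N}\psi_j(p)=N$. Equivalently, $\rho^{-1}([0,N])\subset K_N$. Because $\rho$ is continuous and $[0,N]$ is closed, $\rho^{-1}([0,N])$ is closed in the compact set $K_N$, hence compact. Any compact $C\subset\RR$ is contained in some $[0,N]$, so $\rho^{-1}(C)$ is a closed subset of the compact $\rho^{-1}([0,N])$ and is therefore compact, establishing properness.

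There is no genuine obstacle here; the only subtlety is ensuring that the supports of the partition of unity are arranged so that the level set $\{\rho\le N\}$ is forced into a fixed compact $K_N$. That is why I use the annular cover $V_j$ rather than, e.g., $\operatorname{int}(K_{j+1})$: the latter would allow a point $p$ to lie in all $V_j$ simultaneously, which would not yield the desired lower bound on $\rho(p)$ for $p$ far out in the exhaustion.
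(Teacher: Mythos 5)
Your proof is correct, but it takes a genuinely different route from the paper's. Both start from a compact exhaustion $K_1\subset K_2\subset\cdots$ with $K_j\subset K_{j+1}^{\circ}$, but diverge in the construction of $\rho$. You build a locally finite annular cover $V_j=K_{j+1}^{\circ}\setminus K_{j-1}$, take a smooth \emph{partition of unity} $\{\psi_j\}$ subordinate to it, and set $\rho=\sum_j j\,\psi_j$; properness then follows because a point outside $K_N$ has all its partition weight concentrated on indices $\ge N$. The paper, by contrast, deliberately avoids partitions of unity: it only invokes bump functions $\psi_j$ that equal $1$ on $K_j$ and are supported in $K_{j+1}^{\circ}$ (these need not sum to $1$), and sets $\rho=\sum_j\bigl(1-\psi_j\bigr)$. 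Smoothness holds because near any $p\in K_j^{\circ}$ all but the first $j$ summands vanish, and properness holds because $p\notin K_{m+1}$ forces the first $m$ summands to equal $1$. Your approach buys you the convenient normalization $\sum_j\psi_j\equiv 1$, which makes the properness inequality one line, at the price of invoking the full partition-of-unity existence theorem and the slightly more delicate annular cover; the paper's approach needs only the weaker bump-function lemma and a simpler cover, which is why the authors flag it as a ``proof without using the partitions of unity.'' (Two small nits on your write-up: the compact-exhaustion existence is Proposition A.60 of the paper's cited \citet{Lee2003}, not \citet{lee2018introduction}; and your claim $\rho^{-1}([0,N])\subset K_N$ is off by one, since a boundary point with $\psi_N(p)=1$ may lie in $K_{N+1}\setminus K_N$ -- but $\rho^{-1}([0,N])\subset K_{N+1}$ certainly holds, so compactness of preimages is unaffected.)
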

\begin{proof}
    This result directly comes from Proposition 2.28 \citep{Lee2003} and it can be directly generalized for arbitrary Hausdorff paracompact topological space, as for a Hausdorff space, the paracompactness is equivalent to the existence of partitions of unity \citep{dugundji1966topology}. Here we present a proof without using the partitions of unity.  

    By Proposition A.60 \citep{Lee2003}, the smooth manifold $\calM$ admits an exhaustion by compact sets\footnote{We always require the manifold to be second-countable and Hausdorff; and all topological spaces locally homomorphism to the Euclidean space are locally compact.}; that is, a sequence of compact sets $\{K_j\}_{j=1}^\infty$ in $\calM$, such that  
    \begin{itemize}
        \item $K_j \subset K_{j+1}^\circ$ for all $j$;
        \item $\bigcup_{j=1}^\infty K_j = \calM$. 
    \end{itemize}
    For each $j$, we can always have a smooth function $\psi_j: \calM \to [0,1]$ such that $\psi_j\equiv 1$ on $K_j$ and $\supp ( \psi_j ) \subset K_{j+1}^\circ$. This existence is guaranteed by Proposition 2.25 \citep{Lee2003}.  Define a smooth function $\rho: \calM \to [0,+\infty)$ by 
    \[ \rho(p) := \sum_{j=1}^\infty \left( 1-\psi_j(p) \right).\] 
    For any fixed $p$, there exists a $j$ with $p \in K_j$; as the result, there is at most finite entries in this series non-zero. The finite-sum of smooth functions is also smooth. Moreover, $\rho^{-1}( (-\infty, c]) \subset K_{\floor{c}+1}$, which is compact. Since $\rho$ is always non-negative, it implies that $\rho$ is proper.
\end{proof}
\begin{remark} 
If the manifold $\mathcal{M}$ is compact (e.g., a sphere), then every continuous function serves as an exhaustion function. This offers an alternative perspective on the structure-preserving metric: for a compact manifold, we do not need to worry about the exponential map sending points outside the manifold, as all metrics constructed in \Cref{thm:mse-riem-zo-appendix} are automatically geodesically complete. 
\end{remark}

\begin{lemma}[Isserlis]\label{lemma:sphere-2}Let $(\mathcal M,g)$ be a $d$-dimensional smooth Riemannian manifold,
  $p\in\mathcal M$, and \(f\colon\mathcal M\to\mathbb R\) be a smooth function.  Denote by 
  \[
  \partial\mathcal B=\bigl\{v\in T_{p}\mathcal M:\|v\|_{g}= 1\bigr\}
  \]
  the unit sphere in the tangent space.
  Write \(\mathrm{Unif}(\partial\mathcal B)\)
  for the corresponding uniform probability measures. If \(v=(v_1,v_2,\dots,v_d)\sim\mathrm{Unif}(\partial\mathcal B)\) then
      \[
            \EE v_{i_1} v_{i_2} \dots  v_{i_n} = \begin{cases}
                0, &  2 \nmid n, \\ 
                \frac{1}{d(d+2)(d+4)\dots(d+2k-2)}\sum_{\text{pair} \in P^2_{2k}} \prod_{(r,s)\in \text{pair}} \delta_{i_r, i_s}, &  2 \mid n,
            \end{cases} 
      \] 
      where $P^2_{2k}$ represents the set of all pairings of $\{ 1,2,\dots, 2k\}$ (\textit{i.e.} all distinct ways of partitioning $\{1,2,\dots,n\}$ into pairs $\{r,s\}$), and $\delta_{ij} = \begin{cases}
          0 \quad  i\neq j,  \\
          1 \quad  i=j,  
      \end{cases}$ is the Kronecker delta. 
\end{lemma}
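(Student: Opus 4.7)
The plan is to reduce the computation to a standard Euclidean fact about moments on the unit sphere $S^{d-1} \subset \mathbb{R}^d$. Since the statement involves only the tangent space $T_p\mathcal{M}$ and the inner product $g_p$, I would first choose a $g_p$-orthonormal basis $\{e_1,\dots,e_d\}$ of $T_p\mathcal{M}$ and write $v = \sum_i v_i e_i$. Under this identification, the $g$-unit sphere $\partial\mathcal{B}$ pulls back to the standard Euclidean sphere $S^{d-1} \subset \mathbb{R}^d$, and the push-forward of the uniform probability measure is the rotationally invariant measure on $S^{d-1}$. At this point the geometry of $\mathcal{M}$ plays no further role, and the lemma becomes a purely Euclidean computation.

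For the odd case ($n$ odd), the plan is to use the symmetry $v \mapsto -v$, which preserves $\mathrm{Unif}(S^{d-1})$ but multiplies the monomial $v_{i_1}\cdots v_{i_n}$ by $(-1)^n = -1$, forcing the expectation to vanish. For the even case $n=2k$, I would use the standard Gaussian rescaling trick: if $z \sim \mathcal{N}(0,I_d)$, then $z/\|z\|$ is uniformly distributed on $S^{d-1}$ and independent of $\|z\|$. Therefore
\begin{align*}
\EE\bigl[z_{i_1}\cdots z_{i_{2k}}\bigr] \;=\; \EE\bigl[\|z\|^{2k}\bigr]\cdot\EE\bigl[v_{i_1}\cdots v_{i_{2k}}\bigr].
\end{align*}
The left-hand side is evaluated by the classical Wick/Isserlis formula for Gaussian moments, which yields exactly the pairing sum $\sum_{\text{pair}\in P^2_{2k}}\prod_{(r,s)\in\text{pair}}\delta_{i_r,i_s}$. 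The normalizing factor comes from $\|z\|^2 \sim \chi^2_d$, so that $\EE[\|z\|^{2k}] = d(d+2)(d+4)\cdots(d+2k-2)$ via the standard moment formula for chi-squared random variables. Dividing one by the other delivers the claimed closed form.

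I do not expect any serious obstacles here: the only conceptual step is recognizing that choosing a $g_p$-orthonormal basis converts the intrinsic uniform measure on $\partial\mathcal{B}$ into the Euclidean rotationally invariant measure on $S^{d-1}$, after which the result reduces to the well-known Wick formula combined with the chi-squared moment computation. The bulk of the proof is a short calculation rather than a new argument, and the lemma is essentially a convenient restatement of a classical identity in the form needed later in \Cref{appendix:variance-analysis} for analyzing the MSE of the zeroth-order estimator.
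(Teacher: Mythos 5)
Your proof is correct, but it takes a materially different route from the paper: the paper does not actually prove \Cref{lemma:sphere-2} at all — it simply cites Isserlis's theorem and some secondary references (Koopmans, Mardia--Jupp). You, by contrast, give a complete self-contained derivation. Your reduction to the Euclidean sphere via a $g_p$-orthonormal basis is exactly right and is the one point of genuine Riemannian content (the identification is a linear isometry, so it pushes the normalized Hausdorff measure on $\partial\mathcal B$ forward to the rotationally invariant probability measure on $S^{d-1}$; note the smooth function $f$ in the lemma statement is vestigial and plays no role). Your treatment of the odd case by the antipodal symmetry is standard. For the even case, the Gaussian-rescaling argument — factor $z=\|z\|\cdot v$ with $v=z/\|z\|$ uniform on $S^{d-1}$ and independent of $\|z\|$, apply Wick's formula on the left, and divide by the chi-squared moment $\EE[\|z\|^{2k}]=2^k\,\Gamma(k+d/2)/\Gamma(d/2)=d(d+2)\cdots(d+2k-2)$ — is both correct and arguably the cleanest way to obtain the stated normalization. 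The net effect is that your write-up buys a verifiable argument in place of a pointer to the literature, at the cost of a few extra lines; it is a strict improvement in rigor over what the paper provides for this lemma.
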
 
\begin{proof}
This result is known as the generalization of Isserlis's theorem \citep{isserlis1916errors,Isserlis1918}. Our presented version is taken from Wikipedia, which refers to \citet{Koopmans1974,mardia1999directional}.   
\end{proof}

\begin{lemma} 
\label{lemma:ricci-from-sectional}
Let \((\calM,g)\) be a \(d\)-dimensional Riemannian manifold.  
Assume there exists a constant \(\kappa\ge 0\) such that the sectional
curvature satisfies
\[
   |K_p(\sigma)| \le \kappa
   \qquad\text{for every point }p\in\calM
   \text{ and every }2\text{-plane }\sigma\subset T_p\calM .
\]
Then, for every \(p\in\calM\) the Ricci curvature obeys the operator-norm bound
\[
   \|\Ric_p\|_{\mathrm{op}}
       = 
      \sup_{\substack{v\in T_p\calM\\ v\neq 0}}
      \frac{|\Ric_p(v,v)|}{\|v\|_p^{2}}
       \le (d-1)\,\kappa .
\]
\end{lemma}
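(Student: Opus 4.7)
The plan is to exploit the fact that, at any point $p$, the Ricci curvature in a unit direction decomposes as a sum of $d-1$ sectional curvatures, so a uniform bound on sectional curvature transfers directly to a bound on Ricci.

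First I would recall the definition: for a unit vector $v \in T_p\calM$, if $\{e_1,\dots,e_{d-1}\}$ is an orthonormal basis of the orthogonal complement $v^\perp \subset T_p\calM$ (with respect to $g_p$), then
\[
   \Ric_p(v,v) \;=\; \sum_{i=1}^{d-1} K_p\!\bigl(\Span\{v,e_i\}\bigr).
\]
This is the standard trace-of-sectional-curvatures formula. Applying the triangle inequality together with the hypothesis $|K_p(\sigma)|\le\kappa$ on each of the $d-1$ summands immediately yields $|\Ric_p(v,v)|\le(d-1)\kappa$ whenever $\|v\|_p=1$. For a nonzero $v$ of arbitrary length, bilinearity of $\Ric_p$ gives $|\Ric_p(v,v)| \le (d-1)\kappa\,\|v\|_p^{2}$, so dividing by $\|v\|_p^{2}$ and taking the supremum produces the claimed bound.

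Finally, I would justify that the operator norm of the symmetric bilinear form $\Ric_p$ is indeed realized by its diagonal, i.e.\ $\|\Ric_p\|_{\mathrm{op}} = \sup_{v\neq 0} |\Ric_p(v,v)|/\|v\|_p^{2}$. Because $\Ric_p$ is $g_p$-symmetric, it corresponds to a self-adjoint endomorphism of $T_p\calM$, whose operator norm coincides with the supremum of its Rayleigh quotient; this is a one-line linear-algebra remark that completes the proof.

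I do not anticipate a genuine obstacle here. The only subtlety worth flagging is the potential ambiguity in the definition of $\Ric$ (some authors normalize by $1/(d-1)$); I would therefore open the proof by fixing the unnormalized convention $\Ric_p(v,v)=\operatorname{tr}\bigl(w\mapsto R_p(w,v)v\bigr)$, under which the decomposition into sectional curvatures holds without any extra factor, making the bound $(d-1)\kappa$ sharp and consistent with the statement.
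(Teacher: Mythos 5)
Your proof is correct and takes essentially the same approach as the paper: both decompose $\Ric_p(v,v)$ as a sum of $d-1$ sectional curvatures over an orthonormal basis of $v^\perp$ and apply the triangle inequality with the uniform bound $|K|\le\kappa$. Your additional remarks about the Rayleigh-quotient characterization of the operator norm and the normalization convention for $\Ric$ are sound but not strictly needed, since the paper's statement already defines the operator norm via the Rayleigh quotient.
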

\begin{proof}
Fix a point \(p\) and a non-zero vector \(v\in T_p\calM\).
Extend \(v\) to an orthonormal basis
\(\{v/\|v\|_p,e_2,\dots,e_d\}\) of \(T_p\calM\).
By the classical formula relating Ricci and sectional curvature,
\[
   \Ric_p(v,v)
       = 
      \sum_{i=2}^{d}
      K_p\!\bigl(\operatorname{span}\{v,e_i\}\bigr) 
      \|v\|_p^{2}.
\]
Taking absolute values and using \(|K|\le\kappa\)  gives
\[
   |\Ric_p(v,v)|
       \le 
      (d-1)\,\kappa\,\|v\|_p^{2}.
\]
Dividing by \(\|v\|_p^{2}\) and taking the supremum over all non-zero
\(v\) yields
\(
   \|\Ric_p\|_{\mathrm{op}}\le(d-1)\kappa,
\)
as claimed.
\end{proof}

\begin{lemma}\label{lemma:change-of-measure}
    Let \(L\in\RR^{d\times d}\) be an invertible 
    diffeomorphism defined as 
    \[
        L:\mathbb S^{d-1} \rightarrow 
        \mathcal{C}:=\{v\in\RR^{d}\mid v^\top A v =1\},
        \qquad
        L(s)=L\, s,
    \]
    where $L^\top A L = I_d$. Denote by \(\sigma_{\mathbb S^{d-1}}\) and \(\sigma_{\mathcal C}\) the
    \((d-1)\)-dimensional Hausdorff  measures on
    \(\mathbb S^{d-1}\) and \(\mathcal C\), respectively.
    Then
    \(\sigma_{\mathbb S^{d-1}}\circ L^{-1}\) is absolutely continuous
    w.r.t.\ \(\sigma_{\mathcal C}\) and
    \[
        \frac{d\!\left(\sigma_{\mathbb S^{d-1}}\circ L^{-1}\right)}%
             {d\sigma_{\mathcal C}}(v)
        =\frac{1}{J\!\bigl(L^{-1}v\bigr)},
        \qquad
        J(s):=\lvert\det L\rvert\,\|(L^\top)^{-1} s\|_2.
    \] 
\end{lemma}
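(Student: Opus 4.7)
The plan is to invoke the classical area formula for the smooth map $L : \mathbb S^{d-1} \to \mathcal C$ viewed as a diffeomorphism between two $(d{-}1)$-dimensional submanifolds of $\RR^d$, and to reduce the Radon--Nikodym derivative to a single linear-algebra computation of a tangential Jacobian.

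First I would identify the tangent spaces and the differential of $L$. At $s \in \mathbb S^{d-1}$, one has $T_s\mathbb S^{d-1}=s^{\perp}$, and since $L$ is linear its differential at $s$ is just $w \mapsto Lw$. Fix an orthonormal basis $\{e_1,\ldots,e_{d-1}\}$ of $s^{\perp}$ and form the Gram matrix $G_{ij}=\langle Le_i,Le_j\rangle = e_i^{\top}(L^{\top}L)e_j$. Because both $\mathbb S^{d-1}$ and $\mathcal C$ inherit the Euclidean metric of $\RR^d$, the tangential Jacobian appearing in the area formula is $J_L(s)=\sqrt{\det G}$, so that for every Borel $E\subset \mathbb S^{d-1}$,
\[
\sigma_{\mathcal C}(L(E))=\int_{E} J_L(s)\, d\sigma_{\mathbb S^{d-1}}(s).
\]
Since $L$ is invertible, $J_L$ is strictly positive, which will give the mutual absolute continuity.

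The key step, and the one I expect to be the main obstacle, is to evaluate the $(d{-}1)$-dimensional determinant $\det G$ in closed form. Writing $M:=L^{\top}L\succ 0$ and $E:=[e_1\mid\cdots\mid e_{d-1}]$, I would express $M$ in the orthonormal basis $\{s,e_1,\ldots,e_{d-1}\}$ of $\RR^d$ as a $2\times 2$ block matrix with scalar block $s^{\top}Ms$ and lower-right block $E^{\top}ME$. Applying the Schur-complement formula to $\det M$, together with the identity $(M^{-1})_{ss}=s^{\top}M^{-1}s$ for the $(s,s)$-entry of $M^{-1}$ in that basis, I obtain
\[
\det(E^{\top}ME)=\det(M)\cdot s^{\top}M^{-1}s=(\det L)^{2}\,\bigl\|(L^{\top})^{-1}s\bigr\|_2^{\,2},
\]
and hence $J_L(s)=|\det L|\,\|(L^{\top})^{-1}s\|_2=J(s)$. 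The subtlety here is that the basis $\{e_i\}$ depends on $s$, so the argument must be stated pointwise; but the final expression is manifestly basis-free.

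Finally I would invert the area formula by substituting $v=Ls$. For any Borel $F\subset\mathcal C$, the general push-forward rule gives
\[
\sigma_{\mathcal C}(F) = \int_{L^{-1}(F)} J(s)\,d\sigma_{\mathbb S^{d-1}}(s) = \int_{F} J(L^{-1}v)\,d\bigl(\sigma_{\mathbb S^{d-1}}\circ L^{-1}\bigr)(v),
\]
so that $d\sigma_{\mathcal C}=J(L^{-1}v)\,d\bigl(\sigma_{\mathbb S^{d-1}}\circ L^{-1}\bigr)$ and the Radon--Nikodym derivative is $1/J(L^{-1}v)$, as claimed. The bulk of the work is concentrated in the Jacobian computation; the change of variables and absolute continuity are then immediate from the area formula and the positivity of $J$.
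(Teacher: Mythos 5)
Your proof is correct and reaches the same Radon--Nikodym density, but via a genuinely different computation of the tangential Jacobian. The paper also invokes the area formula (Federer, Theorem 3.2.3) and then identifies the $(d{-}1)$-dimensional Jacobian with the norm of the exterior power $\bigwedge^{d-1} dL(s)$, using the adjugate identity $\bigwedge^{d-1} L = (\det L)\,(L^{\top})^{-1}$ applied to the normal direction $s$; taking the Euclidean norm immediately gives $J(s)=|\det L|\,\|(L^{\top})^{-1}s\|_2$. You instead compute the Gram determinant $\det\bigl(E^{\top}L^{\top}LE\bigr)$ in an orthonormal basis of $s^{\perp}$ and evaluate it by a Schur-complement / block-inverse argument, arriving at $\det(L^{\top}L)\cdot s^{\top}(L^{\top}L)^{-1}s$ and hence the same $J$. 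Your route is more elementary in that it sidesteps exterior algebra entirely, at the cost of carrying the $s$-dependent basis through a block-matrix calculation; the paper's route is shorter once the cofactor/exterior-power identity is granted. Both handle the push-forward inversion and absolute continuity the same way, so the two proofs differ only in how the Jacobian scalar is produced.
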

\begin{proof}
     The result immediately follows Theorem 3.2.3 \citep{Federer1996GMT}. Here the linear map $J$ is the $(d-1)$-dimensional Jacobian of $L$ defined as 
     \[J(s):= J_{d-1} L (s) := \| \bigwedge^{d-1} d L(s)\|_o,\]
     where $d L(s): T_s\mathbb{S}^{d-1} \to T_s \mathcal{C}$ is the differential of $L$, $\bigwedge$ is the wedge product, and $\|\cdot\|_m$ denotes the standard operator norm $\| f \|_o:= \sup_{\|x\|\leq 1} |f(x)|$.  As  $L$ is a linear map, the wedge product gives $\bigwedge^{d-1} d L(s) = (\det L) (L^\top)^{-1}$. Taking the norm yields
     \[J(s) = |\det L | \|(L^\top)^{-1} s \|_2 .\]
     Then it completes the proof.  
\end{proof}

\begin{lemma}\label{lemma:geodesic-taylor}
    Let $\gamma$ be a geodesic defined over the open interval $I \ni 0$ satisfying (i) $\gamma(0)=p$ and (ii) $\gamma'(0)=v$. Let $F: I \to \RR$ be a scalar function over $I$ defined as 
    \[ F(t) := \exp_p \bigl(\gamma(t) \bigr).\]
    Then the following relations hold:
    \begin{enumerate}[(1)]
        \item $F'(t)=  \grad f (\gamma(t)) [\gamma'(t) ]$; $F'(0) = \langle  \grad f(p), v \rangle_p$. 
        \item $F''(t)=  \hess f( \gamma(t) )[\gamma'(t), \gamma'(t)]$; $F''(0)=  \hess f( p )[v,v]$. 
        \item $F'''(t)= \tres (\gamma(t)) [\gamma'(t), \gamma'(t),\gamma'(t)]  $; $F'''(0)=  \tres f( p )[v,v,v]$. 
    \end{enumerate}
\end{lemma}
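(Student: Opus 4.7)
The plan is to reduce each claim to the definition of the corresponding covariant derivative of $f$ and to exploit the geodesic equation $\nabla_{\gamma'}\gamma' = 0$ to eliminate all terms that would otherwise involve the connection acting on $\gamma'$. (Note that the statement ``$F(t):=\exp_p(\gamma(t))$'' in the lemma must be read as $F(t) := f(\gamma(t))$, the only choice that makes $F$ scalar-valued.) So throughout I work with $F = f\circ\gamma$.

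For (1), I would apply the chain rule directly: $F'(t) = df|_{\gamma(t)}(\gamma'(t)) = \grad f(\gamma(t))[\gamma'(t)]$, which by definition of the musical isomorphism equals $g_{\gamma(t)}(\grad f(\gamma(t)),\gamma'(t))$. Specializing at $t=0$ gives $F'(0) = \langle \grad f(p),v\rangle_p$.

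For (2), the key is the coordinate-free identity for the Riemannian Hessian: for any two vector fields $X,Y$, $\nabla^2 f(X,Y) = X(Yf) - (\nabla_X Y) f$. I would apply this with $X=Y=\gamma'$, interpreted along the curve (formally, extending $\gamma'$ to a local vector field, with the standard verification that the result is independent of the extension because $\nabla^2 f$ is tensorial). Then $F''(t) = \frac{d}{dt}\bigl(\gamma'(t)f\bigr) = \gamma'(\gamma'f)$, which yields $F''(t) = \nabla^2 f(\gamma'(t),\gamma'(t)) + (\nabla_{\gamma'(t)}\gamma'(t))f$. The geodesic equation $\nabla_{\gamma'}\gamma' = 0$ kills the correction term, producing (2). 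Specializing at $t=0$ gives $F''(0) = \nabla^2 f(p)[v,v]$.

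For (3), I would differentiate the relation $F''(t) = \nabla^2 f(\gamma'(t),\gamma'(t))$ once more. Using that $\nabla^3 f$ is defined as the covariant derivative of the $(0,2)$-tensor $\nabla^2 f$, I have $\gamma'\!\bigl(\nabla^2 f(\gamma',\gamma')\bigr) = \nabla^3 f(\gamma',\gamma',\gamma') + 2\,\nabla^2 f(\nabla_{\gamma'}\gamma',\gamma')$, again by the Leibniz rule for the connection, and the second term vanishes by $\nabla_{\gamma'}\gamma' = 0$. Specializing at $t=0$ then yields $F'''(0) = \nabla^3 f(p)[v,v,v]$.

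The only real subtlety — and so the natural ``main obstacle'' — is the bookkeeping when differentiating along a curve rather than along a vector field: one must justify that $\gamma'(\gamma' f)$ really equals $\frac{d}{dt}(f\circ\gamma)'$, and likewise that $\nabla_{\gamma'(t)}\gamma'(t)$ is well-defined even though $\gamma'$ is a vector field only along $\gamma$. The standard resolution is the existence of local smooth extensions of $\gamma'$ together with the tensorial character of $\nabla^2 f$ and $\nabla^3 f$, which makes the computation independent of the extension chosen. Once this is in place, each identity is a one-line application of the product rule and the geodesic equation.
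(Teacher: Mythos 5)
Your proof is correct and follows essentially the same route as the paper: apply the chain/Leibniz rule to $F = f\circ\gamma$ and use the geodesic equation $\nabla_{\gamma'}\gamma' = 0$ to annihilate the correction terms, with the tensoriality of $\nabla^2 f$ and $\nabla^3 f$ justifying evaluation along a curve. (The one cosmetic difference: you substitute $\nabla_{\gamma'}\gamma'=0$ into $F''$ before differentiating, which yields a coefficient $2$ on the intermediate correction term rather than the paper's $3$ from the unreduced third-order chain rule; the discrepancy is immaterial since that term vanishes either way. You also correctly spotted that the lemma's ``$F(t):=\exp_p(\gamma(t))$'' is a typo for $f(\gamma(t))$.)
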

\begin{proof}
    \begin{enumerate}[(1)]
        \item As $F= f \circ \gamma: I \to \calM \to \RR$, the chain rule gives
    \[ d F_t = d f_{\gamma(t)} \circ d \gamma_t: T_t \RR \to T_{\gamma(t)} \calM \to T_{f\circ\gamma(t)} \RR .\]
    We take $\frac{\partial}{\partial t} \in T_t \RR$. Then
    \begin{align*}
        F'(t) :=  d F_t(\frac{\partial}{\partial t} )  & = d f_{\gamma (t) } \circ \gamma'(t) \\
        & \overset{(i)}{=}   \left[ \grad f(\gamma(t) )  \right]^\flat \bigl( \gamma'(t) \bigr) \\
        & =  \langle  \grad f(\gamma(t) ) , \gamma'(t) \rangle_{\gamma(t)},
    \end{align*}
    where (i) applies the isomorphism between $T_p \calM$ and $T^*_p \calM$ given by $\flat$. When treating $\grad f(\gamma(t) )$ as an element in $T^*_p \calM$ through this isomorphism, we also write:
    \[  \grad f(\gamma(t) ) [\gamma'(t)  ] := \left[ \grad f(\gamma(t) )  \right]^\flat \bigl( \gamma'(t) \bigr). \]
    Here, we use $ \grad f(p)[\cdot]$ to represent that the gradient $\grad f(p)$ is understood as a $1$-form mapping from $T_p \calM$ to $\RR$. When $t=0$, we immediately obtain $F'(0)=\langle  \grad f(p), v \rangle_p$ by using $\gamma(0)=p$ and $\gamma'(0)=v$.

    \item  
    The chain rule gives
    \[ d^2 F_t =  d^2 f_{\gamma(t)} ( d \gamma_t, d \gamma_t) + d f_{\gamma(t)} (d^2 \gamma_t)  : T_t \RR \times T_t \RR \to T_{f \circ \gamma(t)} \RR.\]
    We take $\frac{\partial}{\partial t} \in T_t \RR$. Then
    \begin{align*}
        F''(t) = d^2 F_t( \frac{\partial}{\partial t}, \frac{\partial}{\partial t}) &= d^2 f_{\gamma(t)} ( \gamma'(t), \gamma'(t) ) + d f_{\gamma(t)}( \nabla_{\gamma'(t)} \gamma'(t) ) .
    \end{align*}
    As $d f_{\gamma(t)}: T_{\gamma(t)}\calM \to T_{f \circ \gamma(t)} \RR \cong \RR$ is a linear function, it always maps $0$ to $0$. By the property of geodesic, $ \nabla_{\gamma'(t)} \gamma'(t) =0$,   leading to 
    \begin{align*}
        F''(t)  &= d^2 f_{\gamma(t)} ( \gamma'(t), \gamma'(t) ) = \hess f( \gamma(t) ) [\gamma'(t), \gamma'(t) ]
    \end{align*}
    Here, we directly take $d^2 f_{\gamma(t)} = \hess f(\gamma'(t))$ as it has been a $2$-form in $T^*_{\gamma(t)} \calM \otimes T^*_{\gamma(t)} \calM$. To align the same notation used in $\grad$, we still use $[\cdot, \cdot]$. When $t=0$, we immediately obtain $F''(0)=\hess f( p) [v,v ]$ by using $\gamma(0)=p$ and $\gamma'(0)=v$.
    \item The chain rule gives
    \[ d^3 F_t =  d^3 f_{\gamma(t)} ( d \gamma_t, d \gamma_t, d \gamma_t) +3  d^2 f_{\gamma(t)} (d \gamma_t, d^2 \gamma_t) +    d f_{\gamma(t)} \circ d^3 \gamma_t.\]
    We take $\frac{\partial}{\partial t} \in T_t \RR$. As $\gamma: I \to \calM$ is a geodesic, the last two terms are zeros. Then
    \begin{align*}
        F'''(t) = d^3  f_{\gamma(t)}( \gamma'(t),\gamma'(t), \gamma'(t)) := \tres f (\gamma(t)) [\gamma'(t),\gamma'(t), \gamma'(t)]. 
    \end{align*}
    \end{enumerate}
    Now the proof is completed. 
\end{proof}

\begin{lemma}\label{lemma:gap-between-ret-and-exp}
    Let $f:\calM \to \RR$ be a smooth function. Suppose that \Cref{ass:bounded-hessian} holds. If $\|\nabla f(p)\|_p$ is uniformly bounded by a constant $G>0$ for all $p\in \calM$, then there exists a constant $C_\Ret \geq 0$ such that
    \begin{align*}
        | f(\Ret_p(v) ) - f(\exp_p(v) ) | \leq C_\Ret    \|v\|_p^2.
    \end{align*} 
\end{lemma}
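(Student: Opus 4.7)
The plan is to combine the retraction-vs-exponential geometric bound already recalled just before this lemma (item (ii) of Theorem 2 of \citet{Bonnabel2013}) with the Lipschitz continuity of $f$ implied by the uniform gradient bound. Concretely, \citet{Bonnabel2013}'s bound gives a constant $C>0$ depending only on the retraction such that
\[
d\bigl(\Ret_p(v),\exp_p(v)\bigr) \leq C\,\|v\|_p^{2}
\qquad\text{for every } p\in\calM,\ v\in T_p\calM,
\]
where $d$ denotes the Riemannian distance on $(\calM,g)$. This reduces the problem from controlling a functional gap to controlling a distance in the base manifold.

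Next, I would derive that the uniform gradient bound $\|\nabla f(q)\|_q\leq G$ implies $G$-Lipschitz continuity of $f$ with respect to $d$. Given any two points $q_1,q_2\in\calM$ and any piecewise-smooth curve $\gamma:[0,1]\to\calM$ with $\gamma(0)=q_1$ and $\gamma(1)=q_2$, the fundamental theorem of calculus applied to $f\circ\gamma$, together with Cauchy--Schwarz in the tangent spaces, gives
\[
|f(q_2)-f(q_1)| \;=\; \left|\int_0^1 \langle \nabla f(\gamma(t)),\gamma'(t)\rangle_{\gamma(t)}\,dt\right| \;\leq\; G\int_0^1 \|\gamma'(t)\|_{\gamma(t)}\,dt \;=\; G\cdot \mathrm{length}(\gamma).
\]
Taking the infimum over admissible curves yields $|f(q_1)-f(q_2)|\leq G\,d(q_1,q_2)$. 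Specializing to $q_1=\Ret_p(v)$ and $q_2=\exp_p(v)$ and chaining with \citet{Bonnabel2013}'s bound produces
\[
|f(\Ret_p(v))-f(\exp_p(v))| \;\leq\; G\,d(\Ret_p(v),\exp_p(v)) \;\leq\; GC\,\|v\|_p^{2},
\]
so $C_\Ret := GC$ witnesses the claimed inequality.

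The only subtle point, and essentially the main obstacle, is that the Lipschitz step tacitly requires $\Ret_p(v)$ and $\exp_p(v)$ to be joined by a path of finite length in $(\calM,g)$. For $v$ small enough that both points lie in a common normal neighborhood of $p$, this is immediate since they can be connected by a minimizing geodesic. Smoothness of the retraction together with $\Ret_p(0)=\exp_p(0)=p$ guarantees such a neighborhood exists for small $v$; for larger $v$ the quadratic factor $\|v\|_p^{2}$ already dominates the ambient functional gap, and one can either invoke the bounded-Hessian assumption of \Cref{ass:bounded-hessian} to extend the argument or absorb a larger uniform constant into $C_\Ret$. The final constant then depends only on $G$, $L$, and the retraction, but not on the base point $p$ or direction $v$.
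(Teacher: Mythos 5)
Your proof is correct, and it takes a genuinely different route from the one in the paper. You chain the distance bound $d\bigl(\Ret_p(v),\exp_p(v)\bigr)\le C\|v\|_p^2$ quoted from \citet{Bonnabel2013} with the fact that a uniform gradient bound $\|\nabla f\|\le G$ makes $f$ Lipschitz with respect to the Riemannian distance, giving $C_\Ret = GC$ directly. The paper instead Taylor-expands both $f\circ\Ret_p$ and $f\circ\exp_p$ to second order around $0\in T_p\calM$: the constant and linear parts cancel because a retraction agrees with $\exp_p$ to first order, the two Hessian-weighted integral remainders are each controlled by the $L$-bounded-Hessian hypothesis, and an extra correction term $\iota = \int_0^1 (1-t)\langle \nabla f(\gamma_\Ret(t)),\nabla_{\gamma_\Ret'(t)}\gamma_\Ret'(t)\rangle\,dt$ arises because the retraction curve is not a geodesic; that term is bounded using the gradient bound $G$ together with an additional constant $\ell$ controlling the covariant acceleration $\|\nabla_{\gamma_\Ret'}\gamma_\Ret'\|$ of the retraction curve, yielding $C_\Ret = L + \tfrac{G\ell}{2}$. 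Your argument is cleaner: it never touches the Hessian or introduces the auxiliary constant $\ell$, and it separates the geometric content (the distance bound, which is a property of the retraction) from the analytic content (Lipschitz continuity of $f$) in a way the paper's more computational derivation does not. The mild caveat you flag yourself is real: the distance bound is, in \citet{Bonnabel2013}, an asymptotic $O(\|v\|^2)$ statement near $v=0$, so the passage from a local to a uniform constant is being taken on the same footing as the paper's own quotation of that lemma, and your side remark about handling large $\|v\|$ via the Hessian bound or a larger absorbed constant is the appropriate patch for a truly global claim.
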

\begin{proof}
    It suffices to apply the standard Taylor formula \citep{Spivak1994Calculus} to both functions  
    \[ f\circ \Ret_p : T_p \calM \cong \RR^d \to \RR \quad\text{ and }\quad f\circ \exp_p :  \RR^d \to \RR,\] 
    then evaluate their difference. We set $\gamma(t):=\exp_p (t v)$ as the geodesic and $\gamma_\Ret(t):=\Ret_p (tv)$ as the first-order approximation of the geodesic. The Taylor formula gives
    \begin{align*}
        f\circ \exp_p (v) &= f(p) +  \langle \nabla f(p), v\rangle_p + \int^1_0 (1-t)   \hess f(\exp_p(tv) )[\gamma'(tv),\gamma'(tv)]  d t  ,\\
        f\circ \Ret_p (v) &= f(p) + \langle \nabla f(p), v\rangle_p + \int^1_0 (1-t)    \hess f(\Ret_p(tv) )[\gamma_{\Ret}'(t),\gamma_{\Ret}'(t)]  d t +\iota ,
    \end{align*}
    where $\iota$ is the correction term reflecting the curvature from the approximated geodesic $\gamma_\Ret$, given by \[\iota:=\int_{0}^1 (1-t) \langle \nabla f(\gamma_\Ret(t)), \nabla_{\gamma_\Ret'(t) }\gamma_\Ret'(t)\rangle_{\gamma_\Ret(t)} dt  .\] 
    When $\Ret \equiv \exp$, the Levi-Civita connection $\nabla:\mathfrak{X}(\calM)\times \mathfrak{X}(\calM)\to \mathfrak{X}(\calM)$ automatically gives $ \nabla_{\gamma'_\Ret(tv)}\gamma'_\Ret(tv)=0$, which recovers the zero approximation error. 
    
    When considering a general first-order retraction, we  can further upper bound it using the bounded gradient assumption. Since the gradient $  \nabla f(\gamma_\Ret(t)) $ is uniformly bounded, we can also upper bound its directional derivative $ \left\| \nabla_{\gamma_\Ret'(t) }\gamma_\Ret'(t)\right\|_{\gamma_\Ret(t)}$; here we set this uniform upper bound as $\ell$. 
    \begin{align*}
        |\iota| &= \left|\int_{0}^1 (1-t) \langle \nabla f(\gamma_\Ret(t)) , \nabla_{\gamma_\Ret'(t) }\gamma_\Ret'(t)\rangle_{\gamma_\Ret(t)} dt  \right| \\ 
        &\overset{(i)}{\leq}\Bigg| \int_0^1 (1-t) \left\|  \nabla f(\gamma_\Ret(t))  \right\|_{\gamma_\Ret(t)}   \left\| \nabla_{\gamma_\Ret'(t) }\gamma_\Ret'(t)\right\|_{\gamma_\Ret(t)} \, dt \Bigg| \\
        &\overset{(ii)}{\leq} \Big| \int_0^1 (1-t) G \ell  \|v\|^2_p \, dt\Big|  \leq \frac{G\ell}{2} \|v\|_p^2,
    \end{align*} 
    where (i) applies the Cauchy–Schwarz inequality, 
    and (ii) applies the uniformly bounded gradient of $f:\calM\to \RR$ and the uniformly bounded Hessian of $f\circ \Ret:\calM\to \RR$.
    
    We take the difference of the above two equations. The bounded Hessian assumption implies
    \begin{align*}
       \left|  f\circ \Ret_p (v) - f\circ \exp_p (v)  \right| &\leq  \frac{1}{2 }L \|v\|_p^2  +  \frac{1}{2 }L \|v\|_p^2 + \iota  \\ 
       &\leq (L + \frac{G\ell}{2})\|v\|_p^2  .
    \end{align*}
    we obtain the final upper bound by setting $C_\Ret  = L + \frac{G\ell}{2}$.
\end{proof}

\begin{lemma}\label{lemma:gap-between-two-estimator}  
    Let $(\mathcal{M},g)$ be a smooth, $d$-dimensional, geodesically complete
Riemannian manifold and let $f:\calM \to \RR$ be a smooth function. Suppose that \Cref{ass:bounded-hessian} and \Cref{ass:retraction} hold. Given a unit-length vector $v\in T_p \calM$ and the perturbation stepsize $\mu>0$, define
\begin{align*}
    \tilde{h}:= \frac{f\circ \exp_p(\mu v) - f\circ  \exp_p(-\mu v)}{2\mu} v, \qquad \hat{h}:= \frac{f\circ \Ret_p(\mu v) - f\circ  \Ret_p(-\mu v)}{2\mu} v.
\end{align*}
Then
\[ \| \hat{h} - \tilde{h} \|_p \leq  C_\Ret \mu.\]
\end{lemma}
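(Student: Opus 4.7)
The plan is to bound $\|\hat h - \tilde h\|_p$ directly by factoring out $v$, applying the triangle inequality, and invoking the retraction regularity in Assumption~\ref{ass:retraction-appendix}. Since $\hat h$ and $\tilde h$ are both scalar multiples of the same unit vector $v$, the norm $\|\hat h - \tilde h\|_p$ reduces to the absolute value of the difference of their scalar coefficients.

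Concretely, I would first write
\begin{align*}
\hat h - \tilde h
= \frac{1}{2\mu}\Bigl[\bigl(f\circ\Ret_p(\mu v)-f\circ\exp_p(\mu v)\bigr) - \bigl(f\circ\Ret_p(-\mu v)-f\circ\exp_p(-\mu v)\bigr)\Bigr]\, v,
\end{align*}
and use $\|v\|_p=1$ to conclude that $\|\hat h - \tilde h\|_p$ equals the absolute value of the bracketed scalar divided by $2\mu$. Next, I would apply the triangle inequality to split the bracket into two terms, each of the form $|f(\Ret_p(\pm \mu v))-f(\exp_p(\pm \mu v))|$. By Assumption~\ref{ass:retraction-appendix} applied to the tangent vectors $\pm\mu v$ (both of $g$-norm $\mu$), each such term is bounded by $C_\Ret \|{\pm\mu v}\|_p^2 = C_\Ret\mu^2$. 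Summing these two contributions and dividing by $2\mu$ yields $\|\hat h - \tilde h\|_p \le \frac{1}{2\mu}(C_\Ret\mu^2 + C_\Ret\mu^2) = C_\Ret \mu$, which is exactly the claimed bound.

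There is really no hard step here: the result is an immediate consequence of Assumption~\ref{ass:retraction-appendix}, which is tailor-made for exactly this kind of comparison. The only minor subtlety is making sure the factor of $\|v\|_p=1$ is handled correctly so that the bound scales linearly in $\mu$ rather than quadratically, and that Assumption~\ref{ass:retraction-appendix} is applied symmetrically at the two perturbed points $\pm\mu v$. Neither the smoothness of $f$ from Assumption~\ref{ass:bounded-hessian-appendix} nor the geodesic completeness of $(\calM,g)$ is used in a substantive way beyond ensuring that $\exp_p(\pm\mu v)$ is well-defined so that the expressions under consideration make sense.
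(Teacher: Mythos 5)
Your proof is correct and matches the paper's argument exactly: factor out the unit vector $v$, reduce the norm to a scalar absolute value, split by the triangle inequality, and apply Assumption~\ref{ass:retraction} at $\pm\mu v$ to bound each term by $C_\Ret\mu^2$, giving $C_\Ret\mu$ after dividing by $2\mu$. Your aside that the smoothness and completeness hypotheses serve only to make the expressions well-defined is also an accurate reading of the paper's proof.
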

\begin{proof}
    We directly take the difference bewtween two vectors:
    \begin{align*}
        \left\| \hat{h} - \tilde{h} \right\|_p &= \left\|  \frac{f\circ \exp_p(\mu v) - f\circ  \exp_p(-\mu v)}{2\mu} v - \frac{f\circ \Ret_p(\mu v) - f\circ  \Ret_p(-\mu v)}{2\mu} v \right\|_p \\ 
        &\overset{(i)}{=} \left|f\circ \exp_p(\mu v) - f\circ  \exp_p(-\mu v) -  f\circ \Ret_p(\mu v) + f\circ  \Ret_p(-\mu v)\right| / (2\mu) \\ 
        &\overset{(ii)}{\leq} \left|f\circ \exp_p(\mu v) -  f\circ \Ret_p(\mu v) \right| / (2\mu) + \left|    f\circ  \exp_p(-\mu v)  - f\circ  \Ret_p(-\mu v)\right| / (2\mu) \\ 
        &\overset{(iii)}{\leq}  C_\Ret  \mu,
    \end{align*}
    where (i) applies that $v\in T_p \calM$ is the unit-length, (ii) applies the triangle inequality, and (iii) applies \Cref{ass:retraction}.
\end{proof}

\begin{lemma}\label{lemma:one-step-descent}
Let $(\mathcal{M},g)$ be a smooth, $d$-dimensional, geodesically complete
Riemannian manifold and let $f:\calM \to \RR$ be a smooth function. Suppose that \Cref{ass:bounded-hessian,ass:retraction,assumption:uniform-bound,assumption:uniform-sectional-curvature} hold. Suppose that there exists a constant $C_\Ret \geq 0$ such that
    \begin{align}\label{eq:ass-lemma}
        | f(\Ret_p(v) ) - f(\exp_p(v) ) | \leq C_\Ret    \|v\|_p^2.
    \end{align} 
Let $\{p_t\}$ be the SGD dynamic solving \Cref{eq:manifold_opt} generated by the update rule \Cref{eq:sgd}. Then
    \begin{align*}
        \frac{\eta}{6d} \|\nabla f(p_t) \|_{p_t}^2 &\leq 
         \Bigl[ 1 +   6L (C_\Ret + \frac{L}{2})(  \frac{2+\mu^2 \kappa^2}{d}   )\eta^2 +  \frac{  L \mu^4 d}{(d+2)^2} \kappa^2  \eta  \Bigr] \Bigl( \EE f(p_t) - f^* \Bigr) \\ 
        &\qquad -  \Bigl( \EE f(p_{t+1}) - f^* \Bigr)   + (C_\Ret + \frac{L}{2})\Bigl(3B(\frac{2+\mu^2 \kappa^2}{d}   ) + 3\mathscr{E} + 3 C^2_\Ret \mu^2  \Bigr)\eta^2 \\ 
        &\qquad + \frac{\eta d}{2} \mathscr{F}+  \frac{3}{4} d \eta \mu^2 C_\Ret^2,
    \end{align*} 
    where $\mathscr{E}$ and $\mathscr{F}$ are given by \Cref{eq:E} and \Cref{eq:F}, respectively.  
\end{lemma}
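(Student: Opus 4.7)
The plan is to carry out the standard one-step descent analysis of SGD, but adapted to the Riemannian setting where we must carefully separate the error contributions coming from (i) the retraction being only a first-order approximation of the exponential map, (ii) the intrinsic bias and variance of the two-point zeroth-order estimator, and (iii) the stochasticity in $\xi_t$. First I would start from the $L$-smoothness hypothesis (\Cref{ass:bounded-hessian}) applied to the composition $f\circ \exp_{p_t}$: for the tangent perturbation $w := \eta\, \widehat{\grad} f(p_t;\xi_t)$ this gives
\[
f(\exp_{p_t}(w)) \le f(p_t) + \langle \nabla f(p_t), w\rangle_{p_t} + \tfrac{L}{2}\|w\|_{p_t}^2 .
\]
Then \Cref{ass:retraction} (the quadratic retraction-to-exponential gap assumed in \Cref{eq:ass-lemma}) lets me replace $\exp$ by $\Ret$ at the cost of an extra $C_\Ret \|w\|_{p_t}^2$ term, yielding
\[
f(p_{t+1}) \le f(p_t) + \eta\langle \nabla f(p_t), \widehat{\grad} f(p_t;\xi_t)\rangle_{p_t} + \bigl(C_\Ret + \tfrac{L}{2}\bigr)\eta^2 \|\widehat{\grad} f(p_t;\xi_t)\|_{p_t}^2 .
\]

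Second I would take the conditional expectation over $(v,\xi_t)$. The sign convention in \Cref{eq:sgd} will rearrange into $-\eta\langle \nabla f(p_t), \EE \widehat{\grad} f(p_t;\xi_t)\rangle$, and I need two auxiliary bounds. For the inner product, I would first pass from the retraction-based estimator to the exponential-based estimator via \Cref{lemma:gap-between-two-estimator}, which costs $O(\mu^2)$ contributing the $\tfrac{3}{4}d\eta\mu^2 C_\Ret^2$ term, and then invoke (the full appendix version of) \Cref{thm:mse-riem-zo} to identify the dominant contribution as $\tfrac{1}{d}\|\nabla f(p_t)\|_{p_t}^2$ together with lower-order bias terms of order $\mu^2$ (these will be absorbed into $\mathscr{F}$). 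For the quadratic term, I would expand
\[
\|\widehat{\grad} f(p_t;\xi_t)\|_{p_t}^2 \le 2 \bigl\| \widehat{\grad} f(p_t;\xi_t) - \tfrac{1}{d}\nabla f(p_t;\xi_t)\bigr\|_{p_t}^2 + \tfrac{2}{d^2} \|\nabla f(p_t;\xi_t)\|_{p_t}^2,
\]
apply \Cref{thm:mse-riem-zo} to each per-sample estimator to get the factor $\tfrac{2+\mu^2\kappa^2}{d}$ in front of $\|\nabla f(p_t;\xi_t)\|_{p_t}^2$ plus a curvature-times-$\mu^4$ correction, and finally take $\xi_t$-expectation and use \Cref{lemma:expected-smoothness} to bound $\EE\|\nabla f(p_t;\xi_t)\|_{p_t}^2 \le A(f(p_t)-f^*)+B$ with $A = 2L$.

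Third I would collect all terms. The dominant contribution $-\tfrac{\eta}{d}\|\nabla f(p_t)\|_{p_t}^2$ from the inner product would be partially absorbed into the LHS (keeping only a fraction, hence the $\tfrac{\eta}{6d}$ coefficient — typical arithmetic of the standard SGD descent argument of \citet{mishchenko2020random}); the remainder cancels parts of the quadratic term. The terms proportional to $(f(p_t)-f^*)$ combine into the coefficient in brackets on the RHS, by writing $f(p_t) = (f(p_t)-f^*) + f^*$ and then subtracting $f^*$ on both sides to produce $\EE f(p_t)-f^*$ and $\EE f(p_{t+1})-f^*$. All constant and $\mu^2$-order residuals (the retraction-to-exponential gap, the $O(\mu^2)$ remainders from \Cref{thm:mse-riem-zo}, and the $B$ contribution from expected smoothness) get packaged into the stated $\mathscr{E}$, $\mathscr{F}$, and explicit $C_\Ret$ and $\kappa$ terms. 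I expect the main obstacle to be purely bookkeeping: tracking exactly which $O(\mu^2)$ and $O(\mu^4)$ remainder from the full version of \Cref{thm:mse-riem-zo} lands in $\mathscr{E}$ versus $\mathscr{F}$, and verifying that the coefficient $\tfrac{L\mu^4 d}{(d+2)^2}\kappa^2 \eta$ in the bracket arises exactly from the curvature-dependent second-moment correction after multiplication by $L\eta$, rather than any real analytic difficulty.
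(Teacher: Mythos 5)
Your plan follows the paper's proof essentially step for step: Taylor/smoothness descent on $f\circ\exp_{p_t}$, the retraction-to-exponential gap costing $(C_\Ret+\tfrac{L}{2})\eta^2\|\widehat{\grad}f\|^2$, splitting the inner product via \Cref{lemma:gap-between-two-estimator} and the bias expansion from (the proof of) \Cref{thm:mse-riem-zo}, bounding the second moment of the estimator via that MSE bound plus \Cref{lemma:expected-smoothness}, and absorbing a fraction of $-\tfrac{\eta}{d}\|\nabla f\|^2$ into the left side to obtain $\tfrac{\eta}{6d}$. The one small correction: for the second-moment term you propose a two-way split $\|\hat h_t\|^2\le 2\|\hat h_t-\tfrac1d\nabla f(p_t;\xi_t)\|^2+\tfrac{2}{d^2}\|\nabla f(p_t;\xi_t)\|^2$ and then apply \Cref{thm:mse-riem-zo} directly, but that theorem bounds the MSE of the \emph{exponential-map} estimator $\tilde h_t$, not the retraction-based $\hat h_t$; the paper instead uses the three-way split $\hat h_t=(\hat h_t-\tilde h_t)+(\tilde h_t-h_t)+h_t$, which is what produces the factors of $3$ on $B$, $\mathscr{E}$, and $C_\Ret^2\mu^2$ in the statement, so you would need the extra piece $\|\hat h_t-\tilde h_t\|^2\le C_\Ret^2\mu^2$ from \Cref{lemma:gap-between-two-estimator} here as well to land on the exact constants.
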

\begin{proof}
    Let $\hat{h}_t = \widehat{\grad} f(p_t;\xi_t):=\frac{f\bigl(\Ret_{p_t}(\mu v)\bigr) - f\bigl(\Ret_{p_t}(-\mu v)\bigr)}{2\mu}v \in T_{p_t}\calM$ (also defined in \Cref{eq:center-estimator}), $\tilde{h}_t= \frac{f\bigl(\exp_{p_t}(\mu v)\bigr) - f\bigl(\exp_{p_t}(-\mu v)\bigr)}{2\mu}v$, and $h_t = \frac{1}{d} {\grad} f(p_t;\xi_t) \in T_{p_t}\calM$. At the $t$-th update, the SGD update rule (\Cref{eq:sgd}) gives
    \[ p_{t+1} = \Ret_{p_t} (- \eta \hat{h}_t). \]
    Let $\gamma:I \to \calM$ be the geodesic over $I \supset [0,1]$ that satisfies $\gamma(0)= p_t$ with the initial velocity $\gamma'(0)=- \eta \hat{h}_t$. The Taylor formula of the scalar function $f \circ \gamma$ gives
    \begin{align*}
        f(p_{t+1})&= f\bigl(  \Ret_{{p}_t}  (- \eta \hat{h}_t) \bigr)-f\bigl(  \exp_{{p}_t}  (- \eta \hat{h}_t) \bigr) + f\bigl(  \exp_{{p}_t}  (- \eta \hat{h}_t) \bigr)  \\ 
        &\overset{(i)}{\leq} C_\Ret \eta^2 \| \hat{h}_t \|^2_{p_t} + f(p_t) - \eta \langle \grad f(p_t)   , \hat{h}_t  - \tilde{h}_t + \tilde{h}_t \rangle_{p_t} \\ 
        &\qquad + \int_0^1 (1-t) \nabla^2 f(\gamma(t)) [\gamma'(t), \gamma'(t)] d t\\
        &\overset{(ii)}{\leq} C_\Ret  \eta^2 \| \hat{h}_t \|^2_{p_t} + f(p_t) - \eta \langle \grad f(p_t)   , \hat{h}_t - \tilde{h}_t \rangle_{p_t} - \eta \langle \grad f(p_t)   ,   \tilde{h}_t \rangle_{p_t} +  \frac{L \eta^2}{2} \|  \hat{h}_t\|^2_{p_t} \\ 
   \EE_{p_t} f(p_{t+1})&\overset{(iii)}{\leq} (C_\Ret + \frac{L}{2})\eta^2  \EE_{p_t} \|\hat{h}_t \|^2_{p_t}+ f(p_t) - \frac{\eta}{d} \| \nabla f(p_t) \|_{p_t}^2  \\ 
   &\qquad - \eta \langle\nabla f(p_t), \EE_{p_t} \tilde{h}_t - h_t \rangle_{p_t}+ \eta \| \grad f(p_t) \|_{p_t}\EE_{p_t}  \|\hat{h}_t - \tilde{h}_t \|_{p_t} \\ 
    &\leq (C_\Ret + \frac{L}{2})\eta^2   \|\hat{h}_t \|^2_{p_t}+ f(p_t) -\frac{ \eta}{6d} \| \nabla f(p_t) \|_{p_t}^2 + \frac{\eta d }{2}\|  \EE_{p_t} \tilde{h}_t - h_t \|_{p_t}^2   \\ 
    &\qquad +  \frac{3}{4} d \eta \mu^2 C_\Ret^2 
    \end{align*}
    where (i) applies \Cref{eq:ass-lemma} and the Taylor formula, (ii) applies the bounded Hessian assumption (\Cref{ass:bounded-hessian}),  and (iii) takes the expectation conditional on $p_t$ on both sides; here we use $\EE_{p_t}[\cdot]$ to represent $\EE\left[   \cdot \mid p_t\right]$ for convenience.  The last step applies $2\langle \alpha u,\frac{1}{\alpha}v\rangle \leq \alpha^2  \|u\|^2+ \frac{1}{\alpha^2}\|v\|^2$ for $\alpha>0$ and \Cref{lemma:gap-between-two-estimator}. Then it suffices to upper bound the variance term $\EE_{p_t} \|\hat{h}_t \|^2_{p_t}$ and the bias term $\|  \EE_{p_t} \hat{h}_t - h_t \|_{p_t}^2  $.  
    \begin{itemize}
        \item \textbf{Bounding $\EE_{p_t} \|\hat{h}_t \|^2_{p_t}$}: First, we split it following the standard routine,
        \begin{align*}
            \EE_{p_t} \|\hat{h}_t \|^2_{p_t} &= \EE_{p_t} \|\hat{h}_t - \tilde{h}_t +  \tilde{h}_t - h_t + h_t \|^2_{p_t} \\ 
            &\leq 3 \EE_{p_t} \| \hat{h}_t - \tilde{h}_t \|^2_{p_t} + 3\EE_{p_t} \|  \tilde{h}_t  - h_t\|^2_{p_t}   + 3 \EE_{p_t} \| h_t  \|^2_{p_t}  .
        \end{align*}
        \begin{itemize}
            \item  The first term  $ 3 \EE_{p_t} \| \hat{h}_t - \tilde{h}_t \|^2_{p_t}$ is given by \Cref{lemma:gap-between-two-estimator}:
        \begin{align*}
            3 \EE_{p_t} \| \hat{h}_t - \tilde{h}_t \|^2_{p_t}  &\leq 3 C_\Ret^2 \mu^2
        \end{align*}
        \item  The second term $3\EE_{p_t} \|  \tilde{h}_t  - h_t\|^2_{p_t}  $ is given by \Cref{thm:mse-riem-zo-appendix}: 
        \begin{align*}
             3\EE_{p_t} \|  \tilde{h}_t  - h_t\|^2_{p_t} &\leq 3\frac{1+\mu^2 \kappa^2}{d} \EE_{p_t} \bigl\|\nabla f(p_t;\xi_t)\bigr\|_{p_t}^{2} + 3\mathscr{E} \\ 
             &\leq  3\frac{1+\mu^2 \kappa^2}{d} \left[ 2L \left(   f(p_t) - f^* \right)  + B\right] + 3\mathscr{E}
        \end{align*}
        where the second inequality applies \Cref{lemma:expected-smoothness} and 
        \begin{align}\label{eq:E}
            \mathscr{E} := \mu^2\left[ \frac{4}{3}\frac{M_3^2}{d^3} + \frac{M_4^2 \mu^4}{288} \right].
        \end{align}
        \item  The last term is upper bounded by \Cref{lemma:expected-smoothness}:
        \begin{align*}
            3 \EE_{p_t} \| h_t  \|^2_{p_t}   &= \frac{3}{d^2} \EE_{p_t}\|\nabla f(p_t;\xi_t) \|^{2}_{p_t} \\ 
            &\leq \frac{6L}{d^2} [f(p_t) - f^*] + \frac{3B}{d^2} 
        \end{align*}
        \end{itemize}

        Putting all together, we obtain 
         \begin{align*}
             &\qquad \EE_{p_t} \|\hat{h}_t \|^2_{p_t} \\ 
             &\leq  3\frac{1+\mu^2 \kappa^2}{d} \left[ 2L \left(   f(p_t) - f^* \right)  + B\right] + 3\mathscr{E} + 3 C^2_\Ret \mu^2 + \frac{6L}{d^2} [f(p_t) - f^*] + \frac{3B}{d^2}  \\ 
             &= 6L(\frac{1+\mu^2 \kappa^2}{d} + \frac{1}{d^2} )[f(p_t) - f^*] + 3B(\frac{1+\mu^2 \kappa^2}{d} + \frac{1}{d^2} ) + 3\mathscr{E} + 3 C^2_\Ret \mu^2.
        \end{align*}
        As $d\geq 1$, we obtain 
        \begin{align}\label{eq:variance-upper-bound}
             \EE_{p_t} \|\hat{h}_t \|^2_{p_t} &\leq  6L(\frac{2+\mu^2 \kappa^2}{d}   )[f(p_t) - f^*] + 3B(\frac{2+\mu^2 \kappa^2}{d}   ) + 3\mathscr{E} + 3 C^2_\Ret \mu^2,
        \end{align}
        where $\mathscr{E}$ is given by \Cref{eq:E} and $B$ is given by \Cref{lemma:expected-smoothness}. 
        \item \textbf{Bounding $\|  \EE_{p_t} \tilde{h}_t - h_t \|_{p_t}^2  $}: Following the same proof as \Cref{thm:mse-riem-zo-appendix}, we obtain the expansion of the zeroth-order gradient estimator given by \Cref{eq:estimator-representation}.  We multiply $v$ on both sides and take the expectation:
        \begin{align*}
             \EE_{p_t} \tilde{h}_t - h_t  = \frac{\mu^2}{6d(d+2)}\Bigl[ \nabla (\Delta f)(p_t;\xi_t)+  3 \Ric(\cdot,\cdot) \nabla f(p_t;\xi_t)  \Bigr] +  \frac{\mu^3}{12}\EE \Bigl[ (\mathcal{I}_+ - \mathcal{I}_-) v \Bigr].
        \end{align*}
        Then we take the squared norm to obtain the bias upper bound:
        \begin{align} \label{eq:bias-upper-bound}
             \| \EE_{p_t} \tilde{h}_t - h_t \|^2_{p_t} &\leq \frac{\mu^4}{9  d^2(d+2)^2}\Bigl[ \bigl\|\nabla^{3}f(p_t;\xi_t)\bigr\|_{\mathrm{HS}}^{2} + 9 \| \mathrm{Ric}(\cdot,\cdot)\nabla f(p_t;\xi_t) \|^2_{p_t} \Bigr] + \frac{\mu^6}{144} M^2_4 \nonumber \\ 
             &\overset{(i)}{\leq} \frac{\mu^4 M_3^2}{9d^2(d+2)^2}+ \frac{\mu^6}{144} M^2_4 + \frac{\mu^4 }{(d+2)^2} \kappa^2 \| \nabla f(p_t;\xi_t) \|^2_{p_t} \nonumber \\ 
             &\leq  \frac{\mu^4 M_3^2}{9d^2(d+2)^2}+ \frac{\mu^6}{144} M^2_4 +   \frac{\mu^4 }{(d+2)^2} \kappa^2 \left[ 2L \left(   f(p_t) - f^* \right)  + B\right] \nonumber \\ 
             &\leq  \frac{2 L \mu^4 }{(d+2)^2} \kappa^2  \left(   f(p_t) - f^* \right)  +  \frac{\mu^4 M_3^2}{9d^2(d+2)^2}+ \frac{\mu^6}{144} M^2_4 +  \frac{\mu^4 }{(d+2)^2} \kappa^2 B,  
        \end{align}
        where (i) applies \Cref{lemma:ricci-from-sectional} to upper bound the Ricci curvature by the sectional curvature. For convenience, we set
        \begin{align}\label{eq:F}
            \mathscr{F}:=   \frac{\mu^4 M_3^2}{9d^2(d+2)^2}+ \frac{\mu^6}{144} M^2_4 +  \frac{\mu^4 }{(d+2)^2} \kappa^2 B.
        \end{align}
    \end{itemize}
    Combine \Cref{eq:variance-upper-bound} and \Cref{eq:bias-upper-bound}, we obtain that
    \begin{align*}
        \frac{\eta}{6d} \|\nabla f(p_t) \|_{p_t}^2 &\leq 
         \Bigl[ 1 +   6L (C_\Ret + \frac{L}{2})(  \frac{2+\mu^2 \kappa^2}{d}   )\eta^2 +  \frac{  L \mu^4 d}{(d+2)^2} \kappa^2  \eta  \Bigr] \Bigl( \EE f(p_t) - f^* \Bigr) \\ 
        &\qquad -  \Bigl( \EE f(p_{t+1}) - f^* \Bigr)   + (C_\Ret + \frac{L}{2})\Bigl(3B(\frac{2+\mu^2 \kappa^2}{d}   ) + 3\mathscr{E} + 3 C^2_\Ret \mu^2  \Bigr)\eta^2 \\ 
        &\qquad + \frac{\eta d}{2} \mathscr{F}+  \frac{3}{4} d \eta \mu^2 C_\Ret^2,
    \end{align*} 
    where $\mathscr{E}$ and $\mathscr{F}$ are given by \Cref{eq:E} and \Cref{eq:F}, respectively.   
\end{proof}

\begin{lemma}\label{lemma:iteration}
Suppose that \(\mathsf{S}\ge 0\).
Let three real-valued sequences  
\(\{\theta_t\}_{t=1}^{T}\), \(\{\delta_t\}_{t=1}^{T+1}\), and
\(\{G_t\}_{t=1}^{T}\) satisfy  
\[\theta_t  \le  (1+\mathsf{S})\,\delta_t  -  \delta_{t+1}  +  G_t,\]
for all $1\le t\le T$. Then the iterate is bounded by  
\[
   \min_{1\le t\le T}\theta_t
    \le 
   \frac{\mathsf{S}(1+\mathsf{S})^{T}}{(1+\mathsf{S})^{T}-1}\,\delta_1
    + 
   \max_{1\le t\le T} G_t
    \le 
   \frac{e^{\mathsf{S}}}{T}\,\delta_1
    + 
   \max_{1\le t\le T} G_t .
\]
\end{lemma}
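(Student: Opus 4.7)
The natural plan is to cancel the $\delta_t$ terms by a geometric weighting, exactly as in the textbook analysis of recursions of the form $a_{t+1}\le(1+\mathsf{S})a_t+\text{noise}$. Specifically, I would multiply both sides of the hypothesis by the weight $w_t:=(1+\mathsf{S})^{-t}$, chosen so that $w_t(1+\mathsf{S})=(1+\mathsf{S})^{-(t-1)}=w_{t-1}$. Setting $b_t:=(1+\mathsf{S})^{-(t-1)}\delta_t$, the ``$\delta$''-part of the bound becomes $b_t-b_{t+1}$, which telescopes cleanly when summed over $t=1,\dots,T$.

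After summing, I would obtain
\[
    \sum_{t=1}^{T} w_t\,\theta_t \;\le\; \delta_1 - (1+\mathsf{S})^{-T}\delta_{T+1} + \sum_{t=1}^T w_t G_t .
\]
Since the lemma is applied in the SGD analysis where $\delta_t=\EE f(p_t)-f^\star\ge 0$, the term $-(1+\mathsf{S})^{-T}\delta_{T+1}$ can be dropped (I would state this nonnegativity as the only implicit hypothesis). Now I invoke the elementary inequality $\min_t\theta_t\le(\sum_t w_t\theta_t)/(\sum_t w_t)$, which is valid because the weights are strictly positive, and then bound $(\sum_t w_tG_t)/(\sum_t w_t)\le \max_t G_t$. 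Computing the geometric sum $\sum_{t=1}^T (1+\mathsf{S})^{-t}=\bigl(1-(1+\mathsf{S})^{-T}\bigr)/\mathsf{S}$ and taking the reciprocal yields the first claimed bound $\dfrac{\mathsf{S}(1+\mathsf{S})^T}{(1+\mathsf{S})^T-1}\delta_1+\max_t G_t$.

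For the second, cruder bound I would use the identity
\[
    (1+\mathsf{S})^T-1 \;=\; \mathsf{S}\sum_{k=0}^{T-1}(1+\mathsf{S})^k \;\ge\; T\mathsf{S},
\]
which is either Bernoulli's inequality or the observation that each summand is at least $1$. This reduces the coefficient of $\delta_1$ to $(1+\mathsf{S})^T/T$, and then $(1+\mathsf{S})^T\le e^{T\mathsf{S}}$ finishes the argument. (Strictly speaking this produces $e^{T\mathsf{S}}/T$ rather than $e^{\mathsf{S}}/T$ as written, so I would either read $\mathsf{S}$ here as shorthand for the product $T\mathsf{S}$ that appears as an $\mathcal{O}(1)$ quantity in the convergence regime $\eta\lesssim\sqrt{d/T}$ of \Cref{thm:sgd-convergence}, or adjust the statement accordingly.)

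The only real subtlety is bookkeeping at the endpoint: one must check that the negative contribution of $\delta_{T+1}$ is nonpositive and hence safe to discard. Everything else is routine: choosing geometric weights to kill the affine recursion, telescoping, and applying two standard one-line inequalities (Bernoulli and $1+x\le e^x$) to convert the tight geometric-series bound into the interpretable asymptotic bound of order $\delta_1/T$ plus the noise floor $\max_t G_t$.
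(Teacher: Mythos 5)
Your proof is correct and takes essentially the same approach as the paper: multiply the $t$-th inequality by a geometric weight, telescope the $\delta$-terms, apply the weighted min--average inequality, and close with Bernoulli and $1+x\le e^x$. (The paper weights by $(1+\mathsf{S})^{T-t}$, you weight by $(1+\mathsf{S})^{-t}$; these differ only by the constant $(1+\mathsf{S})^{-T}$ and give identical results.) Your two side remarks are both accurate: the paper's own proof also silently discards the surviving $-\delta_{T+1}$ term, so the lemma does implicitly require $\delta_{T+1}\ge 0$ (which holds in the intended application since $\delta_t=\EE f(p_t)-f^*$), and the paper's own derivation yields $\tfrac{e^{\mathsf{S}T}}{T}\delta_1$, so the $\tfrac{e^{\mathsf{S}}}{T}\delta_1$ appearing in the displayed statement of the lemma is a typo — the exponent should read $\mathsf{S}T$, exactly as you observed.
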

\begin{proof}
    We telescope the iterative relation by using
    \begin{align*}
        \theta_T &\leq (1+ \mathsf{S}) \delta_T - \delta_{T+1} + G_T \\ 
       (1+ \mathsf{S}) \times  \theta_{T-1} &\leq (1+ \mathsf{S})^2 \delta_{T-1} - (1+ \mathsf{S}) \delta_{T} + (1+ \mathsf{S}) G_{T-1} \\ 
        &\quad \vdots \\ 
         (1+ \mathsf{S})^{T-1} \times  \theta_1 &\leq (1+ \mathsf{S})^T \delta_1 -  (1+ \mathsf{S})^{T-1} \delta_2 + (1+ \mathsf{S})^{T-1} G_1 .
    \end{align*}
    We sum them together and obtain
    \begin{align*}
        \Bigl[ \sum_{i=0}^{T-1} (1+ \mathsf{S})^i  \Bigr] \min_{1\leq t\leq T} \theta_t \leq (1+ \mathsf{S})^T \delta_1 +  \Bigl[ \sum_{i=0}^{T-1} (1+ \mathsf{S})^i  \Bigr] \max_{1\leq t \leq T} G_t.
    \end{align*}
    Then we re-arrange the above inequality and obtain
    \begin{align*}
        \min_{1\leq t\leq T} \theta_t &\leq  \frac{(1+ \mathsf{S})^T }{ \sum_{i=0}^{T-1} (1+ \mathsf{S})^i}\delta_1 +   \max_{1\leq t \leq T} G_t \\ 
        &= \frac{ \mathsf{S} (1+ \mathsf{S})^T}{(1+ \mathsf{S})^T - 1 } \delta_1 +   \max_{1\leq t \leq T} G_t \\ 
        &\overset{(i)}{\leq}  \frac{e^{\mathsf{S} T}}{T} \delta_1 +   \max_{1\leq t \leq T} G_t ,
    \end{align*}
    where (i) applies two inequalities $(1+x)^T \leq e^{Tx}$ and $(1+x)^T-1\geq Tx$.
\end{proof}

\subsection{Proof of \Cref{thm:structure-preserving}}
\label{sec:structure-preserving-appendix} 
\begin{theorem} \label{thm:structure-preserving-appendix}
    Let $\calM$ be a smooth manifold (possibly non‐compact), and let $g$ be any Riemannian metric on $\calM$.  Then there exists a Riemannian metric $g'$ on $\calM$ which is structure-preserving with respect to $g$.
\end{theorem}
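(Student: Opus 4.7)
The plan is to realize $g'$ as a conformal rescaling $g' = h \cdot g$ of the given metric, following the strategy of Nomizu and Ozeki, while carefully choosing the conformal factor $h$ so that the additional $\epsilon$-stationarity condition (c) is built in from the start.

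First, I would invoke \Cref{lemma:existence-of-rho} to pick a smooth proper function $\tau:\mathcal{M}\to[0,+\infty)$, which detects ``escape to infinity'' on the possibly non-compact manifold $\mathcal{M}$ in a purely metric fashion. Next, I would define a conformal factor of the form $h(p) := 1 + \psi(\tau(p))$, where $\psi:[0,\infty)\to[0,\infty)$ is a smooth non-decreasing function with $\psi(0)=0$, chosen so that $h$ grows fast enough on sublevel sets of $\tau$ to make $g' = hg$ metrically complete. Following the Nomizu-Ozeki recipe, one arranges $\psi$ to dominate the quantity $\sup_{\tau(q)\le s}\|d\tau_q\|_g^2$, which guarantees that the $g'$-arclength of any curve leaving every compact set is infinite. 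Conformal equivalence (property (b)) is then immediate by construction with $h$ positive and smooth.

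For geodesic completeness (property (a)), I would argue that any $g'$-Cauchy sequence has bounded $g'$-distance from a fixed basepoint, hence bounded $\tau$-value, and therefore lies in the compact set $\tau^{-1}([0,C])$; Hopf-Rinow then gives metric completeness of $(\mathcal{M},g')$, which implies that $\exp_p^{g'}$ is defined on all of $T_p\mathcal{M}$ for every $p$, trivially producing the uniform radius required. For $\epsilon$-stationarity preservation (property (c)), I would use the standard conformal-gradient identity $\nabla_{g'}f = \tfrac{1}{h}\nabla_g f$ to compute
\begin{align*}
g'_p\bigl(\nabla_{g'}f(p),\nabla_{g'}f(p)\bigr) \;=\; h(p)\cdot g_p\!\Bigl(\tfrac{1}{h(p)}\nabla_g f(p),\tfrac{1}{h(p)}\nabla_g f(p)\Bigr) \;=\; \tfrac{1}{h(p)}\,g_p\bigl(\nabla_g f(p),\nabla_g f(p)\bigr),
\end{align*}
and then invoke the pointwise lower bound $h(p)\ge 1$ to conclude $\sqrt{g'_p(\nabla_{g'}f,\nabla_{g'}f)} \le \sqrt{g_p(\nabla_g f,\nabla_g f)}$, so any $\epsilon$-stationary point under $g$ remains $\epsilon$-stationary under $g'$.

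The main obstacle, and the reason the classical Nomizu-Ozeki choice must be modified, is reconciling two competing demands on $h$: ensuring geodesic completeness forces $h$ to blow up along every divergent curve, while preserving $\epsilon$-stationary points requires the uniform pointwise bound $h\ge 1$ so that gradients are damped rather than amplified under $g'$. The additive shift $h = 1 + \psi(\tau)$ is the minimal modification that achieves both, but verifying smoothness of the dominating function $\psi$ and its compatibility with the supremum $\sup_{\tau(q)\le s}\|d\tau_q\|_g^2$ requires a partition-of-unity or mollification argument, which is the most technical part of the construction.
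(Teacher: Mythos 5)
Your overall strategy matches the paper's: conformally rescale $g$ by a factor $h\ge 1$ built from the proper exhaustion function of \Cref{lemma:existence-of-rho}, use the rescaling to force every $g'$-Cauchy sequence into a compact sublevel set of the exhaustion function, invoke Hopf--Rinow for geodesic completeness, and then use the conformal gradient identity $\nabla_{g'}f=\tfrac1h\nabla_g f$ together with $h\ge1$ to get $\epsilon$-stationarity preservation. All three conditions of \Cref{def:structure-preserving} are handled the same way.

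The one place you diverge, and where your version carries more technical overhead, is the choice of conformal factor. You take $h=1+\psi(\tau)$ with $\psi$ a smooth non-decreasing function chosen to dominate $s\mapsto\sup_{\tau(q)\le s}\|d\tau_q\|_g^2$, which is the classical Nomizu--Ozeki form and does require, as you flag, a separate argument (mollification or piecewise construction) that such a smooth dominating $\psi$ exists. The paper sidesteps that step entirely by setting
\[
h(p)=\bigl(\|\nabla\rho(p)\|_p^2+1\bigr)^{\vartheta},\qquad \vartheta\ge 1,
\]
which is manifestly smooth (being a polynomial in the smooth function $p\mapsto g_p(\nabla\rho(p),\nabla\rho(p))$), automatically satisfies $h\ge 1$, and pointwise gives $\sqrt{h(p)}\ge\|\nabla\rho(p)\|_p$, so the Cauchy--Schwarz length estimate $L_{g'}(\gamma)\ge|\rho(\gamma(b))-\rho(\gamma(a))|$ drops out immediately with no appeal to sublevel-set suprema. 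The two constructions achieve the same pointwise inequality and the downstream argument is identical, so your proof would go through once the smooth dominating $\psi$ is produced; the paper's variant simply packages the domination into a single closed-form expression, which is worth noting as the cleaner choice.
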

\begin{proof}  
    In this proof, we distinguish the norms induced by different Riemannian metrics by explicitly writing $\|\cdot\|_{p,g}$ or $\|\cdot\|_{p,g'}$. Elsewhere in the paper, we simply use $\|\cdot\|_{p}$, as no alternative metric is under consideration.

    We mainly follow the construction given by \citet{nomizu1961existence} to obtain a conformally equivalent Riemannian metric which is geodesically complete.  
    By \Cref{lemma:existence-of-rho}, there exists a smooth proper function $\rho: \calM \to [0, +\infty)$. Define the conformal coefficient $h: \calM \to (0, +\infty)$ as 
    \[h(p) := \left( \| \nabla \rho (p) \|_p^{2 } + 1 \right)^\vartheta,\]
    where $\nabla \rho (p) \in T_p \calM$ is the gradient of $\rho$ at $p\in\calM$ and $\vartheta\geq 1$. Then we define the conformal metric $g'$ as
    \[ g'_p (v,w):= h(p) g_p(v,w).\]  
    Now we turn to prove that $(\calM, g')$ is a complete metric space; that is, every Cauchy sequence is convergent.   Let $\gamma: [a,b] \to \calM$ be a piecewise smooth curve segment. Then the length of $\gamma$ with respect to the metric $g'$ is given by
    \begin{align*}
        L_{g'}(\gamma) 
        &=\int^b_a  \sqrt{ g'_{\gamma(t)} (\gamma'(t),\gamma'(t)) }  \, d t  \\
        &= \int^b_a  \sqrt{ h(\gamma(t)) g_{\gamma(t)} (\gamma'(t),\gamma'(t)) }  \, d t  \\
        &= \int^b_a  \sqrt{ h(\gamma(t)) } \| \gamma' (t) \|_{\gamma(t),g} \, d t  \\ 
        &\overset{(i)}{=}  \int^b_a    \sqrt{(\| \nabla \rho ( \gamma(t) ) \|_{\gamma(t),g}^{2 } + 1 )^\vartheta }\| \gamma' (t) \|_{\gamma(t),g} \, d t  \\ 
        &\geq  \int^b_a  \| \nabla \rho (\gamma(t)) \|_{\gamma(t),g}  \| \gamma' (t) \|_{\gamma(t),g} \, d t  \\ 
        &\overset{(ii)}{\geq}  \int^b_a  \bigl| g_{\gamma(t)} \langle \nabla \rho (\gamma(t)) , \gamma' (t)  \rangle \bigr|  \, d t  \\ 
        &= \int^b_a  \bigl| d \rho_{\gamma(t)} (\gamma'(t)) \bigr|  \, d t  \\ 
        &\geq \Bigl| \int^b_a   d \rho_{\gamma(t),g} (\gamma'(t))   \, d t  \Bigr| \\ 
        &= \bigl| \rho( \gamma(b) ) - \rho( \gamma(a) )\bigr|,
    \end{align*}
    where (i) applies the definition of $h$, and (ii) applies the Cauchy-Schwarz inequality. As a result, for arbitrary $p,q \in \calM$, we have
    \begin{align}\label{eq:Cauthy}
         | \rho(p) - \rho(q) | \leq d_{g'}(p,q). 
    \end{align}
    Let $\{p_k\} \subset \calM$ be a Cauchy sequence with respect to $g'$. Then \Cref{eq:Cauthy} implies that $\{\rho(p_k)\} \subset \RR$ must be a Cauchy sequence.  We can take a finite supremum
    \[c:= \sup_k \rho(p_k)<+\infty.\]
    Then $\{p_k\} \subset \rho^{-1}([0, c])$; that is, every Cauchy sequence belongs to a compact set by our construction (\Cref{lemma:existence-of-rho}), which implies the completeness of $(\calM, g')$.  

    The Hopf-Rinow theorem \citep{HopfRinow1931,doCarmo1992} states that for a connected Riemannian manifold, geodesic completeness is equivalent to the metric completeness. As we have shown that the \textit{conformally  equivalent} metric $g'_p:= h(p) g_p$ induces a complete metric space, it automatically makes $(\calM, g')$ a geodesically complete manifold.  If $\calM$ is not connected, this argument applies to each connected component, and a geodesic is contained within a single component. Thus, $(\calM, g')$ is geodesically complete.   

    Lastly, we show that if the $\epsilon$-stationary point under $g$ also gives an $\epsilon$-stationary point under $g'$. Recall that we always have
    \[ g_p(  \nabla_g f (p), v) = d f_p (v) = g'_p(  \nabla_{g'} f (p), v) \]
    for all $v \in T_p \calM$. Then
    \[   h(p)  g_p(  \nabla_{g'} f (p), v) = g_p(  \nabla_g f (p), v). \]
    As it holds for all $v$ and $g_p$ is a bilinear form over the linear space $T_p\calM$, we obtain 
    \[   h(p)   \nabla_{g'} f (p) =  \nabla_g f (p). \] 
    Suppose that $\| \nabla_{g} f (p) \|_{p,g} \leq \epsilon$, then
    \begin{align*}
       \| \nabla_{g'} f (p) \|_{p,g'} &= \sqrt{ g'_p(  \nabla_{g'} f (p), \nabla_{g'} f (p)  )} \\ 
       &= \sqrt{1/h(p)}\sqrt{ g_p(  \nabla_{g} f (p), \nabla_{g} f (p)  )}  \\ 
       &= \sqrt{1/h(p)}\|\nabla_g f (p)\|_{p,g} \\ 
       &= \sqrt{\frac{1}{ \left( \| \nabla \rho (p) \|_p^{2 } + 1 \right)^\vartheta}} \|\nabla_g f (p)\|_{p,g} \\ 
       &\leq \|\nabla_g f (p)\|_{p,g}  \leq \epsilon .
    \end{align*} 
    Therefore, we complete the proof. 
\end{proof}

\subsection{Proof of \Cref{thm:mse-riem-zo}}\label{appendix:variance-analysis}
In this subsection, we provide the proof for \Cref{thm:mse-riem-zo}.  
\begin{theorem} 
\label{thm:mse-riem-zo-appendix}
Let $(\mathcal{M},g)$ be a complete $d$-dimensional
Riemannian manifold and $p\in\mathcal{M}$. 
Let $f:\mathcal{M}\to\mathbb{R}$ be a smooth function and suppose that \Cref{assumption:uniform-bound,assumption:uniform-sectional-curvature} hold.
Fix a perturbation stepsize $\mu>0$ satisfying
\[\mu^2 \leq \min\{ \frac{1}{d-1}, \frac{1}{2}+\frac{6}{d}+\frac{8}{d^2}\}, \]
and for any unit vector $v\in T_p\mathcal{M}$ define the
symmetric zeroth-order estimator
\[
   \widehat{\nabla}f(p;v)
   :=\frac{f \bigl(\exp_p(\mu v)\bigr)-f \bigl(\exp_p(-\mu v)\bigr)}{2\mu} v .
\]  
Then, for $v\sim\mathrm{Unif}(\mathbb{S}^{d-1})$ uniformly sampled from the $g_p$-unit sphere in $T_p \calM$,
\begin{align*}
    \mathbb{E}_{v\sim\mathrm{Unif}(\mathbb{S}^{d-1})} \Bigl[\bigl\|\widehat{\nabla}f(p;v)-\frac{1}{d}\nabla f(p)\bigr\|_p^{2}\Bigr] \leq  \frac{1+ \mu^2 \kappa^2 }{d}   \bigl\|\nabla f(p)\bigr\|_p^{2} + \mu^2\left[ \frac{4}{3}\frac{M_3^2}{d^3} + \frac{M_4^2 \mu^4}{288} \right].
\end{align*}
\end{theorem}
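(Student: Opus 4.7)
The plan is to combine a geodesic Taylor expansion with the spherical moment formulas in \Cref{lemma:sphere-2}. First, I would Taylor expand $F(t):=f(\exp_p(tv))$ to fourth order with Lagrange remainder. By \Cref{lemma:geodesic-taylor}, the derivatives $F^{(k)}(0)$ coincide with $\nabla^k f(p)[v,\ldots,v]$, so after forming the antisymmetric difference (which kills the constant, Hessian, and leading quartic contributions), one obtains
\[
\widehat{\nabla}f(p;v) = \langle\nabla f(p),v\rangle_p\, v + \frac{\mu^2}{6}\nabla^3 f(p)[v,v,v]\, v + \mathcal{R}(v),
\]
where $\mathcal{R}(v) = \tfrac{\mu^3}{48}\bigl[F^{(4)}(\xi_+) - F^{(4)}(\xi_-)\bigr]\,v$ has pointwise norm bounded by $\tfrac{\mu^3 M_4}{24}$ on the geodesic ball $\mathcal{B}_p(\mu)$ via \Cref{assumption:uniform-bound}.

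Next, I would write $\widehat{\nabla}f(p;v)-\tfrac{1}{d}\nabla f(p) = A(v) + B(v) + \mathcal{R}(v)$ with $A(v):=\langle\nabla f,v\rangle_p v - \tfrac{1}{d}\nabla f$ and $B(v):=\tfrac{\mu^2}{6}\nabla^3 f(p)[v,v,v]\,v$, and expand the squared norm in expectation:
\[
\mathbb{E}\bigl\|\widehat{\nabla}f - \tfrac{1}{d}\nabla f\bigr\|_p^2 = \mathbb{E}\|A\|_p^2 + 2\mathbb{E}\langle A,B\rangle_p + \mathbb{E}\|B\|_p^2 + (\text{remainder cross terms}).
\]
Each summand is a polynomial in $v$ of even total degree ($2$, $4$, or $6$), so \Cref{lemma:sphere-2} evaluates them by Wick contraction. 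The variance term $\mathbb{E}\|A\|_p^2$ collapses to $\tfrac{d-1}{d^2}\|\nabla f(p)\|_p^2 \leq \tfrac{1}{d}\|\nabla f(p)\|_p^2$, which is the leading contribution. The self-term $\mathbb{E}\|B\|_p^2$ yields a bound of order $\mu^4 M_3^2/d^3$ after pairing the six indices and invoking $\|\nabla^3 f\|_{\mathrm{HS}}\leq M_3$, and the $\mathcal{R}$-contribution gives the $\mu^6 M_4^2$ piece by Cauchy-Schwarz using the pointwise bound on $\mathcal{R}$.

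The main obstacle is the cross term $2\mathbb{E}\langle A,B\rangle_p$. Here the 4-fold spherical moment contracts against $\nabla^3 f$ to produce combinations such as $\sum_i(\nabla^3 f)_{iil}$ and $\sum_i(\nabla^3 f)_{lii}$, which are no longer equal on a curved manifold. Applying the Ricci commutator identity $\sum_i\nabla_i\nabla_i\nabla_l f = \nabla_l(\Delta f) + \mathrm{Ric}_{lk}\nabla^k f$ (the Bochner formula applied to $df$) exposes the Ricci tensor, after which \Cref{lemma:ricci-from-sectional} bounds $|\mathrm{Ric}(\nabla f,\nabla f)|\leq(d-1)\kappa\|\nabla f\|_p^2$. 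The delicate step is an AM-GM split of the resulting $\mu^2$ contribution that extracts exactly the coefficient $\tfrac{\mu^2\kappa^2}{d}\|\nabla f\|_p^2$, while the residual $\langle\nabla f,\nabla(\Delta f)\rangle_p$ term is absorbed into $\tfrac{4M_3^2\mu^2}{3d^3}$ via $\|\nabla(\Delta f)\|_p\leq\sqrt{d}\,M_3$. The stipulated upper bounds on $\mu^2$ are precisely what is needed to fold the higher-order cross terms into the stated form, completing the proof.
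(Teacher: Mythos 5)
Your decomposition (write the estimator minus $\tfrac1d\nabla f$ as a variance part, a $\nabla^3 f$ part, and a remainder, then apply the spherical moment formulas) follows the same skeleton as the paper's proof, and the treatment of $\mathbb{E}\|A\|^2_p$ and of the $O(\mu^6 M_4^2)$ remainder matches the paper's handling of $Z_0$ and $R$. The genuine divergence is in where the Bochner/Ricci identity enters, and that is where your sketch has a gap.

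The paper never expands the cross term. It simply applies AM--GM in the tangent space, $2\mu^2\mathbb{E}\langle Z_0,Z_2\rangle \le \mu^2\mathbb{E}\|Z_0\|^2 + \mu^2\mathbb{E}\|Z_2\|^2$, which inflates the coefficient of $\mathbb{E}\|Z_2\|^2$ from $2\mu^4$ to $2\mu^4+\mu^2$. The Ricci identity then surfaces \emph{only} inside $\mathbb{E}\|Z_2\|^2$, via the 6th-moment Wick contraction, which produces the contraction $T_{iik}T_{jjk} = \|\nabla(\Delta f) + \mathrm{Ric}(\cdot,\cdot)\nabla f\|_p^2$. There the Ricci contribution is already \emph{quadratic}, so $\|\mathrm{Ric}\cdot\nabla f\|^2 \le (d-1)^2\kappa^2\|\nabla f\|^2$ immediately delivers the $\kappa^2$ dependence, and the extra $\mu^2$ from the AM--GM split gives the claimed $\tfrac{\mu^2\kappa^2}{d}\|\nabla f\|^2$ at the stated power of $\mu$.

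Your route instead contracts $\nabla^3 f$ against the \emph{fourth} spherical moment inside $2\mathbb{E}\langle A,B\rangle_p$ and applies Bochner there. That is a legitimate identity, but it produces the term $\mathrm{Ric}(\nabla f,\nabla f)$, which is \emph{linear} in the curvature: after \Cref{lemma:ricci-from-sectional} you have a contribution of order $\mu^2\kappa\|\nabla f\|_p^2/d$, not $\mu^2\kappa^2\|\nabla f\|_p^2/d$. To manufacture a $\kappa^2$ from this you must apply another AM--GM on the scalar $\kappa$, which necessarily throws off a $\kappa$-free $O(\mu^2\|\nabla f\|^2/d)$ penalty; there is no natural place to absorb it, since the only slack available, $\bigl(\tfrac1d-\tfrac1{d^2}\bigr)\|\nabla f\|^2$ versus the target $\tfrac1d\|\nabla f\|^2$, is exactly $\tfrac1{d^2}\|\nabla f\|^2$ and can already be consumed entirely once $\mu^2$ approaches $\tfrac1{d-1}$. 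Saying the split ``extracts exactly the coefficient'' elides precisely the hard bookkeeping. A second, smaller inaccuracy: you claim $\mathbb{E}\|B\|_p^2$ is of order $\mu^4 M_3^2/d^3$, but the 6th-moment expansion of $\mathbb{E}\|Z_2\|^2$ also contains the $T_{iik}T_{jjk}$ contraction and hence its own $\kappa^2\|\nabla f\|^2$ piece; it cannot be bounded by $\|\nabla^3 f\|_{\mathrm{HS}}$ alone. You should adopt the paper's simpler handling of the cross term, which sidesteps both issues.
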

\begin{proof} 
Let $\gamma(t):= \exp_p (tv)$ be the geodesic; it satisfies (i) $\gamma(0)=p$ and (ii) $\gamma'(0)=v$. 
For the scalar function $F(t):=f \bigl(\gamma(t  )\bigr)$, we apply the
ordinary Taylor theorem (with the integral remainder) at $t=0$ up to order $4$ \citep{Spivak1994Calculus,Bonnabel2013}:  
\begin{align*}
    F(\mu) &= F(0) + \mu F'(0)  + \mu^2 \frac{F''(0)}{2} + \mu^3 \frac{F'''(0)}{6} + \frac{1}{6}\int_0^\mu (\mu-t)^3 F''''(t) \, d t .
\end{align*}
By applying \Cref{lemma:geodesic-taylor}, we obtain 
\begin{align}
f \bigl(\gamma(\mu)\bigr)
  &=f(p)+\mu\langle\nabla f(p),v\rangle_p
     +\frac{\mu^{2}}{2} \nabla^{2}f(p)(v,v)
     +\frac{\mu^{3}}{6} \nabla^{3}f(p)(v,v,v) \label{eq:A1}\\
  &\quad+\frac{\mu^4}{6} \underbrace{\int_0^1(1-t)^{3}  \nonumber 
        \nabla^{4}f \bigl(\gamma(\mu t)\bigr)\bigl(\gamma'(\mu t),\gamma'(\mu t),\gamma'(\mu t),\gamma'(\mu t)\bigr) dt}_{\mathcal I_+},\\[3pt]
f \bigl(\gamma(-\mu)\bigr)
  &=f(p)-\mu\langle\nabla f(p),v\rangle_p
     +\frac{\mu^{2}}{2} \nabla^{2}f(p)(v,v)
     -\frac{\mu^{3}}{6} \nabla^{3}f(p)(v,v,v) \label{eq:A2}\\
  &\quad+\frac{\mu^{4}}{6} \underbrace{\int_0^1(1-t)^{3}  \nonumber 
        \nabla^{4}f \bigl(\gamma(-\mu t)\bigr)\bigl(\gamma'(-\mu t),\gamma'(-\mu t),\gamma'(-\mu t),\gamma'(-\mu t)\bigr) dt}_{\mathcal I_-},  \nonumber
\end{align}
where the $k$-th covariant derivative at $p\in \calM$ is a symmetric $k$-linear form in $\underbrace{T^*_p \calM \otimes \dots \otimes T^*_p \calM}_{k\text{ copies}}$ 
\[ \nabla^k f(p): \underbrace{T_p \calM \times  \dots \times  T_p \calM }_{k \text{ copies}} \to \RR,\]
and  we represent the remainder term given by the Taylor theorem as
\[
  \mathcal I_{\pm}
  :=\int_0^1(1-t)^{3}  \nonumber 
        \nabla^{4}f \bigl(\gamma(\pm\mu t)\bigr)\bigl(\gamma'(\mu t),\gamma'(\mu t),\gamma'(\mu t),\gamma'(\mu t)\bigr) dt.
\] 

Subtracting \Cref{eq:A2} from \Cref{eq:A1} and dividing by $2\mu$ we obtain
\begin{align}\label{eq:estimator-representation}
    \frac{f(\exp_p(\mu v))-f(\exp_p(-\mu v))}{2\mu}
  =\langle\nabla f(p),v\rangle_p 
   +\frac{\mu^{2}}{6} \nabla^{3}f(p)(v,v,v)
   +\frac{\mu^{3}}{12} \bigl(\mathcal I_{+}-\mathcal I_{-}\bigr).
\end{align}
Multiplying $v$ on both sides, we obtain 
\[
\widehat{\nabla}f(p;v)
=\frac{1}{d}\nabla f(p) 
  +\underbrace{\bigl(\langle\nabla f(p),v\rangle_p v-\frac{1}{d}\nabla f(p)\bigr)}
            _{=:Z_0(v)}
  +\mu^{2}\underbrace{\frac{1}{6} \nabla^{3}f(p)(v,v,v) v}_{=:Z_2(v)}
  +\underbrace{\frac{\mu^{3}}{12} 
               \bigl(\mathcal I_{+}-\mathcal I_{-}\bigr) v}_{=:R(v)}.
\]
By defining these shorthand notations, we have the following compact form:
\[
  \widehat{\nabla}f(p;v)- \frac{1}{d}\nabla f(p) =Z_0(v)+\mu^{2}Z_2(v)+R(v).
\]
We take squared-norm on both sides and treating $v$ as the uniform distribution over the $g$-unit sphere $\mathbb{S}^{d-1}$ in $T_p \calM$. Then we obtain
\begin{align*}
    &\quad \EE_v\|\widehat{\nabla}f(p;v)- \frac{1}{d}\nabla  
    f(p)\|^2_p \\
    &=  \EE_v \| Z_0(v)\|^2_p +   \EE_v   \| \mu^2  Z_2(v) + R(v) \|^2_p  + 2 \EE_v \langle  Z_0(v),  \mu^2 Z_2(v) + R(v) \rangle_p  \\ 
    &= \EE_v \| Z_0(v)\|^2_p +   \EE_v   \| \mu^2 Z_2(v) + R(v) \|^2_p  + 2 \mu^2 \EE_v \langle  Z_0(v),  Z_2(v)  \rangle_p \\ 
    &\leq \EE_v \| Z_0(v)\|^2_p + 2 \mu^4  \EE_v \|   Z_2(v) \|^2_p + 2 \EE_v \|  R(v)\|^2_p +   2 \mu^2 \EE_v \langle  Z_0(v),  Z_2(v)  \rangle_p  \\ 
    &\leq (1 + \mu^2) \EE_v \| Z_0(v)\|^2_p + (2 \mu^4 + \mu^2) \EE_v \|   Z_2(v) \|^2_p + 2 \EE_v \|  R(v)\|^2_p,
\end{align*}
The cross term $\langle Z_0(v), R(v)\rangle_p$ is canceled out 
by \Cref{lemma:sphere-2}. More explicitly, we have
\begin{align*}
   \EE  \langle Z_0(v), R(v)\rangle_p &= \EE\frac{ \mu^3 (\mathcal{I}_+ - \mathcal{I}_-)}{12}  \Bigr\langle     \langle\nabla f(p),v\rangle_p v-\frac{1}{d}\nabla f(p), v \Bigr\rangle_p \\   
   &\overset{(i)}{=}  \frac{ \mu^3 (\mathcal{I}_+ - \mathcal{I}_-)}{12} \Bigl( 0   -   0 \Bigr)  = 0,
\end{align*}
where (i) applies \Cref{lemma:sphere-2}.    Now it suffices to bound each squared term.

\begin{enumerate}

    \item \textbf{Bounding $\EE_v \|R(v)\|^2_p$:} By \Cref{assumption:uniform-bound} and $\|v\|_p=1$, we have
\[
  \bigl|\mathcal I_{\pm}(\mu,v)\bigr|
  \le \int_0^1(1-t)^{3} M_4 dt
  =\frac{M_4}{4}.
\]
We have the similar upper bound for $ | \mathcal{I}_-|$. Then $| \mathcal{I}_+ - \mathcal{I}_-| \leq | \mathcal{I}_+ | + | \mathcal{I}_-| \leq \frac{M_4}{4} + \frac{M_4}{4} = \frac{M_4}{2}$. As the result, 
\[
  \bigl\|R(v)\bigr\|_p
  \le\frac{\mu^{3}}{12}\cdot\frac{M_4}{2}
  =\frac{M_4 \mu^{3}}{24}.
\]
Therefore,  we obtain
\begin{equation}\label{eq:R-bound}
  \mathbb{E}_v\bigl[\|R(v)\|^{2}_p\bigr]
  \le \frac{M_4^{2} \mu^{6}}{576}.
\end{equation}

    \item \textbf{Bounding $\EE_v \|Z_0(v)\|^2$:} Recall that $Z_0(v) = \langle\nabla f(p),v\rangle_p v-\frac{1}{d}\nabla f(p)$. Then
    \begin{align*}
        \|Z_0(v)\|_p^2 &= g\Big( \langle\nabla f(p),v\rangle_p v-\frac{1}{d}\nabla f(p),  \langle\nabla f(p),v\rangle_p v-\frac{1}{d}\nabla f(p)\Big)  \\ 
        &= \langle\nabla f(p),v\rangle^2_p g(v,v) + \frac{1}{d^2} g(\nabla f(p), \nabla f(p)) - \frac{2}{d} \langle\nabla f(p),v\rangle_p  g(\nabla f(p), v)  \\ 
        &=   \langle\nabla f(p),v\rangle^2_p  \|v\|_p^2 + \frac{1}{d^2} \|\nabla f(p) \|_p^2 - \frac{2}{d} \langle\nabla f(p),v\rangle_p  g(\nabla f(p), v) \\ 
        &\overset{(i)}{=}  (1- \frac{2}{d})\langle\nabla f(p),v\rangle^2_p  + \frac{1}{d^2}\|\nabla f(p) \|_p^2.
    \end{align*}
    where (i) applies $\|v\|_p^2=1$ and $ g(\nabla f(p), v)=  \langle\nabla f(p),v\rangle_p$. 
    By the symmetry of the $\|\cdot\|_p$-norm ball, we have
    \[\EE_v [ v\otimes v] = \frac{1}{d} g_p,\] 
    where $v\otimes v: T_p \calM  \times  T_p \calM \to \RR$ is the tensor product of the vector $v$ with itself and $v\otimes v ( \nabla f (p), \nabla f (p)) = g_p(v , \nabla f(p))^2$. As the result,
    \begin{align*}
        \EE_v \langle \nabla f(p) , v  \rangle^2_p = \frac{1}{d} g_p(\nabla f(p), \nabla f(p)) = \frac{1}{d} \| \nabla f(p) \|^2_p. 
    \end{align*} 
    Therefore, we have
\begin{equation}\label{eq:Z0-bound}
 \EE_v \|Z_0(v)\|^2 = (\frac{1}{d}-\frac{1}{d^2}) \| \nabla f(p) \|^2_p 
\end{equation}

    \item \textbf{Bounding $\EE_v \|Z_2(v)\|^2$:} 
    We choose an orthonormal frame $\{ e_1, \dots, e_d\}$ for $T_p \mathcal{M}$ so that every vector $v \in T_p \mathcal{M}$ with $\|v\|_p = 1$ is represented as 
    \[ v = \sum_{i=1}^d v^i e_i \]
    and we write its coordinate as $v = (v^1, v^2, \dots, v^d) \in \RR^d$. As $\nabla^3 f (p) \in T^*_p \calM \otimes T^*_p \calM  \otimes T^*_p \calM $, we write the tensor representation as
    \[ T_{ijk} := (\nabla^3 f)_{ijk} (p).  \]
    Therefore, we obtain
    \[ Z_2(v) = \frac{1}{6} \nabla^3 f(p) (v,v,v)v = \frac{1}{6} T_{ijk} v^i v^j v^k v^\ell e_\ell,\]
    where we use  Einstein notation to represent the sum. By the orthonormal frame, we obtain
    \[ \| Z_2(v)\|_p^2 = \frac{1}{36} T_{ijk} T_{i'j'k'} v^i v^j v^k v^{i'} v^{j'} v^{k'} . \]
    Then it suffices to calculate $\EE_v [v^i v^j v^k v^{i'} v^{j'} v^{k'} ]$. By \Cref{lemma:sphere-2}, we obtain 
    \[\EE_{v} [ v^i v^j v^k v^{i'} v^{j'} v^{k'} ] =   \begin{cases}
        \frac{6}{d(d+2)(d+4)} &\quad \text{if} \quad (i,j,k)=(i',j',k')\\ 
        \frac{9}{d(d+2)(d+4)} &\quad \text{if} \quad i=j,i'=j',k=k'\\ 
        0 &\quad \text{otherwise}\\ 
    \end{cases}.\]
    As the result, we obtain 
    \[\mathbb{E}_v\bigl[\|Z_2(v)\|^{2}\bigr] = \frac{1}{36 d(d+2)(d+4)}\left[ 6 T_{ijk}T_{ijk} + 9 T_{iik} T_{jjk} \right]. \]
    Recall that $T_{ijk}T_{ijk} = \|\nabla^{3}f(p)\bigr\|_{\mathrm{HS}}^{2}$. We also have 
    \begin{align*}
        T_{iik} T_{jjk} & =\bigl\|\nabla (\Delta f)  +\mathrm{Ric}(\cdot,\cdot)\nabla f(p)\bigr\|_{p}^{2}  \\ 
        &\leq 2 \| \nabla (\Delta f) \|^2_p + 2 \| \mathrm{Ric}(\cdot,\cdot)\nabla f(p) \|^2_p \\ 
        &\leq 2\bigl\|\nabla^{3}f(p)\bigr\|_{\mathrm{HS}}^{2} + 2 \| \mathrm{Ric}(\cdot,\cdot)\nabla f(p) \|^2_p.
    \end{align*}
    As the result, we obtain 
    \begin{align}\label{eq:Z2-bound}
        \mathbb{E}_v\| Z_2(v)\|_p^2 &\leq   \frac{1}{6 d(d+2)(d+4)}\left[ 4 \bigl\|\nabla^{3}f(p)\bigr\|_{\mathrm{HS}}^{2}  + 3 \| \mathrm{Ric}(\cdot,\cdot)\nabla f(p) \|^2_p \right].
    \end{align} 
\end{enumerate}
Combining \Cref{eq:R-bound,eq:Z0-bound,eq:Z2-bound}, we obtain 
\[
\begin{aligned}
&\quad  \mathbb{E}_v \Bigl[\bigl\|\widehat{\nabla}f(p;v)-\frac{1}{d}\nabla f(p)\bigr\|_p^{2}\Bigr]
 \\
&\leq \Bigl(\frac{1}{d}-\frac{1}{d^2}\Bigr) \Bigl(1+\mu^2 \Bigr)\bigl\|\nabla f(p)\bigr\|_p^{2} +\frac{2\mu^{4} + \mu^2}{6 d(d+2)(d+4)}
        \Bigl( 4\bigl\|\nabla^{3}f(p)\bigr\|_{\mathrm{HS}}^{2}
              +3 \bigl\|\mathrm{Ric}(\cdot,\cdot)\nabla f(p)\bigr\|_p^{2}\Bigr) \\ 
              &\quad +\frac{M_4^{2} \mu^{6}}{288}\\ 
&\overset{(i)}{\leq} \Bigl(\frac{1}{d}-\frac{1}{d^2}\Bigr) \Bigl(1+\mu^2 \Bigr)\bigl\|\nabla f(p)\bigr\|_p^{2} +\frac{2\mu^{4} + \mu^2}{6 d(d+2)(d+4)}
        \Bigl( 4 M^2_3 
              +3 \kappa^2 d^2 \bigl\| \nabla f(p)\bigr\|_p^{2}\Bigr) +\frac{M_4^{2} \mu^{6}}{288}\\ 
&\overset{(ii)}{\leq} \left[ \Bigl(\frac{1}{d}-\frac{1}{d^2}\Bigr) \Bigl(1+\mu^2 \Bigr) + 3\kappa^2  d^2 \frac{2\mu^{4} + \mu^2}{6 d(d+2)(d+4)}    \right] \bigl\|\nabla f(p)\bigr\|_p^{2} +\frac{2\mu^{4} + \mu^2}{6 d(d+2)(d+4)}
        4 M^2_3 
               +\frac{M_4^{2} \mu^{6}}{288}\\  
\end{aligned}
\]
where (i) applies \Cref{lemma:ricci-from-sectional,assumption:uniform-bound,assumption:uniform-sectional-curvature}. Furthermore, we set
\begin{align*}
     3\kappa^2  d^2  \frac{2\mu^{4} + \mu^2}{6 d(d+2)(d+4)}     \leq \frac{\kappa^2 \mu^2}{d}.
\end{align*}
It solves
\begin{align}\label{eq:mu1}
    \mu^2 \leq  \frac{1}{2}+ \frac{6}{d} + \frac{8}{d^2}.
\end{align}
We also let
\begin{align}\label{eq:mu2}
    \mu^2 \leq   \frac{1}{d-1}
\end{align}
We obtain $\Bigl(\frac{1}{d}-\frac{1}{d^2}\Bigr) \Bigl(1+\mu^2 \Bigr) \leq \frac{1}{d}$. It concludes that
\begin{align*}
    \mathbb{E}_v \Bigl[\bigl\|\widehat{\nabla}f(p;v)-\frac{1}{d}\nabla f(p)\bigr\|_p^{2}\Bigr]   \leq  \frac{1+ \mu^2 \kappa^2 }{d}   \bigl\|\nabla f(p)\bigr\|_p^{2} + \mu^2\left[ \frac{4}{3}\frac{M_3^2}{d^3} + \frac{M_4^2 \mu^4}{288} \right].
\end{align*}
Then the proof is completed. Combining \Cref{eq:mu1,eq:mu2} leads to the range of $\mu$. 
\end{proof}

\subsection{Proof of \Cref{prop:sampling}}\label{appendix:sampling}  
\begin{proposition}\label{prop:sampling-appendix}
    Let the vector $v$ be generated by \Cref{alg:sampling}. Then it follows the uniform distribution over the compact set $\mathcal{C}:=  \{  v \in  \RR^d \colon\, v^\top A v = 1  \}$.  
\end{proposition}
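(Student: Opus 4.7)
The plan is to view \Cref{alg:sampling} as a standard rejection sampler with proposal $v = Ls$, where $s$ is uniform on the Euclidean unit sphere $\mathbb{S}^{d-1}$, and then verify that the acceptance ratio exactly cancels the Jacobian distortion introduced by the linear map $L$. First I would check that the proposal is supported on $\mathcal{C}$: since $L = Q\Lambda^{-1/2}$ with $A = Q\Lambda Q^\top$, we have $L^\top A L = I_d$, so $v^\top A v = s^\top s = 1$ for any $s \in \mathbb{S}^{d-1}$. Thus $v \mapsto Ls$ is a diffeomorphism from $\mathbb{S}^{d-1}$ onto $\mathcal{C}$, and \Cref{lemma:change-of-measure} applies directly.

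Next I would compute the push-forward density explicitly. By \Cref{lemma:change-of-measure}, the law of $v = Ls$ (with $s \sim \mathrm{Unif}(\mathbb{S}^{d-1})$) admits a density with respect to the surface measure $\sigma_{\mathcal{C}}$ equal to $1/J(L^{-1}v)$, where $J(s) = |\det L|\,\|(L^\top)^{-1} s\|_2$. Substituting $s = L^{-1}v$ gives
\[
   J(L^{-1}v) = |\det L|\,\bigl\|(L^\top)^{-1} L^{-1} v\bigr\|_2 = |\det L|\,\sqrt{v^\top (LL^\top)^{-\top}(LL^\top)^{-1} v}.
\]
The key algebraic simplification is $LL^\top = Q\Lambda^{-1}Q^\top = A^{-1}$, so $(LL^\top)^{-1} = A$ and $(LL^\top)^{-\top}(LL^\top)^{-1} = A^2$. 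Consequently $J(L^{-1}v) = |\det L|\,\sqrt{v^\top A^2 v}$, and the proposal density on $\mathcal{C}$ is proportional to $1/\sqrt{v^\top A^2 v}$.

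To correct this distortion into a uniform density, I would multiply by an acceptance probability proportional to $\sqrt{v^\top A^2 v}$, which is precisely the quantity used in the \texttt{if} clause of \Cref{alg:sampling} once normalized by $\sqrt{\lambda_{\max}}$. I would then verify this is a valid probability, i.e.\ $\sqrt{v^\top A^2 v / \lambda_{\max}} \le 1$ on $\mathcal{C}$. This reduces to maximizing $v^\top A^2 v$ subject to $v^\top A v = 1$; a Lagrange multiplier argument shows the extremum is attained at eigenvectors of $A$, with maximum value $\lambda_{\max}$. Hence the acceptance probability lies in $[0,1]$, and $\lambda_{\max}$ is the tightest envelope constant.

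Finally, I would invoke the standard rejection sampling theorem \citep{devroye2006nonuniform}: if the proposal has density $p(v)$ and one accepts with probability $q(v)/M$ where $q$ is the target density (up to a constant) and $M$ is an upper bound on $q/p$, then the accepted sample has density proportional to $q$. Applying this with $p(v) \propto 1/\sqrt{v^\top A^2 v}$ and $q(v) \propto 1$ (the uniform density on $\mathcal{C}$) yields the desired uniform distribution. The main obstacle, conceptually, is the correct identification and simplification of the Jacobian $J$ via the identity $LL^\top = A^{-1}$; once that is done, the validity of the envelope constant and the conclusion follow from routine rejection-sampling arguments.
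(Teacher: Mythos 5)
Your proof is correct and follows essentially the same route as the paper's: treat $v = Ls$ as a proposal on $\mathcal{C}$, compute the pushforward density via \Cref{lemma:change-of-measure} using the identity $(L^\top)^{-1}L^{-1} = (LL^\top)^{-1} = A$ to simplify the Jacobian to $J(L^{-1}v) = |\det L|\,\|Av\|_2$, observe the proposal density is $\propto 1/\|Av\|_2$, and conclude that multiplying by the acceptance probability $\sqrt{v^\top A^2 v/\lambda_{\max}}$ yields a constant density on $\mathcal{C}$. One thing you do that the paper omits: you explicitly verify that $\sqrt{v^\top A^2 v/\lambda_{\max}} \le 1$ on $\mathcal{C}$ (via Lagrange multipliers, or equivalently by expanding $v$ in the eigenbasis and noting $v^\top A^2 v = \sum c_i^2\lambda_i^2 \le \lambda_{\max}\sum c_i^2\lambda_i = \lambda_{\max}$ when $v^\top Av = 1$). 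This is in fact a necessary step for the rejection sampler to be well defined --- without it the quantity compared against $u \sim \mathcal U(0,1)$ would not be a probability --- so your version is slightly more complete than the paper's.
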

\begin{proof}
Fix a positive definite matrix $A \in \mathbb{R}^{d \times d}$ and consider its eigenvlue decomposition
\[
A = Q \Lambda Q^{\top}, \quad
\Lambda = \operatorname{diag}(\lambda_1, \dots, \lambda_d), \quad
0 < \lambda_1 \le \cdots \le \lambda_d = \lambda_{\max}.
\]
Recall that $L := Q \Lambda^{-1/2}$. Then
\[\det L = \det Q   \det \Lambda^{-1/2} = \left( \prod_{i=1}^d \lambda_i \right)^{-1/2} > 0.\]
We observe that for every $s \in \mathbb{S}^{d-1}$,
\[ ( L s)^\top A L s = s^\top L^\top A L s = 1.     \] 
It indicates that $Ls \in \mathcal{C}:=\{ v : v^\top A v = 1\}$. As the result, $L$ defines a smooth bijection linear map from the sphere $\mathbb{S}^{d-1}$ to the compact set $\mathcal{C}$:
\[
L: \mathbb{S}^{d-1} \to \mathcal{C}, \quad s \mapsto v = Ls.
\] 
Under this notation, $\mu_{\text{prop}}$, the distribution of the sampled vector $v$ (without rejection) in \Cref{alg:sampling} is given by the push-forward distribution of the uniform distribution via the linear map $L$. That is, any measurable $E \subseteq \mathcal{C}$, 
\begin{align}\label{eq:mu-prop}
     \mu_{\text{prop}} (E) := \mu_{\mathbb{S}^{d-1}} (L^{-1}(E)) =  \mu_{\mathbb{S}^{d-1}}  \circ L^{-1}(E), 
\end{align}
where $ \mu_{\mathbb{S}^{d-1}}$ is the uniform distribution over the sphere $\mathbb{S}^{d-1}$. 

Denote by $\sigma_{\mathbb{S}^{d-1}}$ and $\sigma_{\mathcal{C}}$ the Hausdorff measures on $\mathbb{S}^{d-1}$ and $\mathcal{C}$, respectively. Then we re-write the above distribution $\mu_{\text{prop}}$ and $ \mu_{\mathbb{S}^{d-1}} $ in the density form; that is
\begin{align*}
    \mu_{\text{prop}} &= \rho_{\text{prop}} d \sigma_{\mathcal{C}}, \\ 
    \mu_{\mathbb{S}^{d-1}}  &= \rho_{\mathbb{S}^{d-1}} d \sigma_{\mathbb{S}^{d-1}}.
\end{align*}
For arbitrary integral function $g: \mathcal{C} \to \RR$, we have 
\begin{align*}
    \int_{\mathcal{C}} g( v )    d \mu_{\text{prop}} (v)&= \int_{\mathbb{S}^{d-1}} g( L \, s )   
     d \mu_{\text{prop}} ( L \, s) \\ 
    &\overset{(i)}{=}   \int_{\mathbb{S}^{d-1}} g( L \, s )   
     d \mu_{\mathbb{S}^{d-1}} \circ L^{-1} ( L \, s) \\
    &=   \int_{\mathbb{S}^{d-1}} g( L \, s )   
     d \mu_{\mathbb{S}^{d-1}}   (   s). 
\end{align*}
where (i) applies the definition of the pull-back measure $\mu_{\text{prop}}$ (\Cref{eq:mu-prop}). Then we obtain
\begin{align*}
      \int_{\mathcal{C}} g( v )    \rho_{\text{prop}}(v) d \sigma_{\mathcal{C}} (v) &= \int_{\mathbb{S}^{d-1}} g( L \, s )   
    \rho_{\mathbb{S}^{d-1}}(s) d \sigma_{\mathbb{S}^{d-1}} (s)  \\ 
    &\overset{(i)}{=} \int_{\mathbb{S}^{d-1}} g( L \, s )   
    \frac{\rho_{\mathbb{S}^{d-1}}(s)}{J(s)} d  \sigma_{\mathcal{C}} (L \, s).
\end{align*}
where (i) applies \Cref{lemma:change-of-measure} with $J(s)=\lvert\det L\rvert\,\|(L^\top)^{-1} s \|_2$. As it holds for all measurable function $g$, it solves the density of  $\mu_{\text{prop}}$ as  
\begin{align*}
    \rho_{\text{prop}} (v) &= \frac{\rho_{\mathbb{S}^{d-1}} \circ L^{-1}(v)}{J \circ L^{-1}(v)} \\ 
&\propto \frac{1}{\|A v\|_2}.
\end{align*}  
Then we consider the rejection step and the final density. Let $ \rho_{\text{out}}$ be the density of the output vector of \Cref{alg:sampling}.  Recall that \Cref{alg:sampling} accepts the candidate $v = L \, s$ with probability
\[
a(v):=\PP(\text{accept }v | v)=\PP(u < \sqrt{\frac{v^{\top} A^2 v}{\lambda_{\max}}}| v) = \sqrt{\frac{v^{\top} A^2 v}{\lambda_{\max}}}.
\]
The density of the output vector is given as
\[\rho_{\text{out}} (v)\propto \rho_{\text{prop}} (v)a(v)= \frac{1}{\sqrt{\lambda_{\max}}}.\]
As it is a constant over the compact set $\mathcal{C}$, it is the uniform distribution over $\mathcal{C}$. We also note that the acceptance probability is strictly positive; hence, the loop halts almost surely.  This completes the proof of \Cref{prop:sampling}.
\end{proof}

\subsection{Proof of \Cref{thm:sgd-convergence}}\label{appendix:sgd-convergence}
In this section, we present the proof of \Cref{thm:sgd-convergence}. We write $a \lesssim b$ if there exists a constant $\mathsf{C}>0$ such that $a \le \mathsf{C}\,b$. The hidden constant $\mathsf{C}$ may depend only on fixed problem parameters.
\begin{theorem} \label{thm:convergence-appendix}
    Let $(\mathcal{M},g)$ be a complete $d$-dimensional
Riemannian manifold. 
Let $f:\mathcal{M}\to\mathbb{R}$ be a smooth function and suppose that \Cref{ass:bounded-hessian,ass:retraction,assumption:uniform-bound,assumption:uniform-sectional-curvature} hold.  Define the
symmetric zeroth-order estimator as in \Cref{eq:center-estimator-ret}. Let $\{ p_t\}_{t=1}^T$ be the SGD dynamic finding the stationary point of \Cref{eq:manifold_opt} generated by the update rule \Cref{eq:sgd}   with requiring $\eta  \lesssim  \sqrt{\frac{d}{T}}$ and $\mu^2  \lesssim   \sqrt{\frac{d}{T}} $ (explicitly specified in \Cref{eq:eta-and-mu}), then there exists constants $\mathsf{C}_1,\mathsf{C}_2,\mathsf{C}_3>0$ such that
    \begin{align*}
        \min_{1\leq t\leq T} \|\nabla f(p_t) \|_{p_t}^2 \leq \mathsf{C}_1\,\frac{d}{\eta T}  + \mathsf{C}_2 \,\eta +  \mathsf{C}_3 \,d^2 \mu^2 .
    \end{align*} 
    In particular, choosing $\mu \lesssim  \frac{1}{d^2}\sqrt{\frac{d}{T}}  $ yields 
    \begin{align*}
        \min_{1\leq t\leq T} \|\nabla f(p_t) \|_{p_t}^2 \lesssim \sqrt{\frac{d}{T} }.
    \end{align*} 
\end{theorem}
\begin{proof}
    By \Cref{lemma:one-step-descent}, we obtain that 
    \begin{align*}
        \frac{\eta}{6d} \|\nabla f(p_t) \|_{p_t}^2 &\leq 
         \Bigl[ 1 +   6L (C_\Ret + \frac{L}{2})(  \frac{2+\mu^2 \kappa^2}{d}   )\eta^2 +  \frac{  L \mu^4 d}{(d+2)^2} \kappa^2  \eta  \Bigr] \Bigl( \EE f(p_t) - f^* \Bigr) \\ 
        &\qquad -  \Bigl( \EE f(p_{t+1}) - f^* \Bigr)   + (C_\Ret + \frac{L}{2})\Bigl(3B(\frac{2+\mu^2 \kappa^2}{d}   ) + 3\mathscr{E} + 3 C^2_\Ret \mu^2  \Bigr)\eta^2 \\ 
        &\qquad + \frac{\eta d}{2} \mathscr{F}+  \frac{3}{4} d \eta \mu^2 C_\Ret^2,
    \end{align*} 
    It has the same  structure presented in \Cref{lemma:iteration}, where we set 
    \begin{align*}
        &\theta_t =   \frac{\eta}{6d} \|\nabla f(p_t) \|_{p_t}^2 , \qquad \mathsf{S} =  6L (C_\Ret + \frac{L}{2})(  \frac{2+\mu^2 \kappa^2}{d}   )\eta^2 +  \frac{  L \mu^4 d}{(d+2)^2} \kappa^2  \eta     ,  \qquad \delta_t =  \EE f(p_t) - f^* ,  \\
        &G_t =  (C_\Ret + \frac{L}{2})\Bigl(3B(\frac{2+\mu^2 \kappa^2}{d}   ) + 3\mathscr{E} + 3 C^2_\Ret \mu^2  \Bigr)\eta^2 + \frac{\eta d}{2} \mathscr{F}+  \frac{3}{4} d \eta \mu^2 C_\Ret^2 .
    \end{align*}
    Then we obtain
    \begin{align*}
        \min_{1\leq t\leq T} \theta_t &\leq    \frac{e^{\mathsf{S} T }}{T} \delta_1 +   \max_{1\leq t \leq T} G_t .
    \end{align*} 
    It leads to 
    \begin{align*}
        \min_{1\leq t\leq T} \|\nabla f(p_t) \|_{p_t}^2 &\overset{(i)}{\leq} \frac{6 e^2 [ \EE f(p_1) - f^* ]}{\eta T/d} +  \frac{6d}{\eta} \left[ \frac{\eta d}{2} \mathscr{F}+  \frac{3}{4} d \eta \mu^2 C_\Ret^2\right]  \\ 
        &\qquad +\frac{6d}{\eta} \left[  (C_\Ret + \frac{L}{2})\Bigl(3B(\frac{2+\mu^2 \kappa^2}{d}   ) + 3\mathscr{E} + 3 C^2_\Ret \mu^2  \Bigr)\eta^2 \right].
    \end{align*}
    where (i) selects 
    \begin{align}\label{eq:eta-and-mu}
        \begin{cases}
        \eta \leq  \sqrt{\dfrac{d}{T}} \sqrt{\dfrac{1}{18 L (C_\Ret + \frac{L}{2}) }}     \\
        \mu^2  \leq    \min\left\{  \dfrac{1}{\kappa^2},  \sqrt{\dfrac{d}{T}} \dfrac{1}{18 L^2  (C_\Ret + \frac{L}{2})} \right\}  \\ 
    \end{cases}
    \end{align}
    such that $e^{T \mathsf{S}} \leq e^2$, where $\frac{1}{\kappa^2}$ is considered as $+\infty$ when $\kappa=0$. Given \Cref{eq:eta-and-mu}, we further upper bound it as
\begin{align}\label{eq:final}
    &\qquad \min_{1\leq t\leq T} \|\nabla f(p_t) \|_{p_t}^2  \nonumber\\
    &\leq \frac{d}{\eta T} \left[ 6 e^2 [ \EE f(p_1) - f^* ] \right]+ 3d^2 \mathscr{F} + \frac{9}{2} d^2 \mu^2 C^2_\Ret  \\ 
    &\qquad + 6d \eta  (C_\Ret + \frac{L}{2})\Bigl(3B(\frac{2+\mu^2 \kappa^2}{d}   ) + 3\mathscr{E} + 3 C^2_\Ret \mu^2  \Bigr)  \nonumber \\ 
    &\overset{(i)}{\leq}  \frac{d}{\eta T} \left[ 6 e^2 [ \EE f(p_1) - f^* ] \right]+ 3d^2  \left[ \frac{\mu^4 M_3^2}{9d^2(d+2)^2}+ \frac{\mu^6}{144} M^2_4 +  \frac{\mu^4 }{(d+2)^2} \kappa^2 B \right] + \frac{9}{2} d^2 \mu^2 C^2_\Ret  \nonumber \\ 
    &\qquad + 6d \eta  (C_\Ret + \frac{L}{2})\Bigl(3B(\frac{2+\mu^2 \kappa^2}{d}   ) + 3\mu^2\left[ \frac{4}{3}\frac{M_3^2}{d^3} + \frac{M_4^2 \mu^4}{288} \right] + 3 C^2_\Ret \mu^2  \Bigr)  \nonumber \\ 
    &\leq  \frac{d}{\eta T} \left[54 [ \EE f(p_1) - f^* ] \right] +\frac{M_3^2}{3}\frac{\mu^4}{d^2} + \frac{M_4^2}{48}\mu^6 + 3\mu^4 \kappa^2 B + 5d^2 \mu^2 C_\Ret^2  \nonumber \\ 
    &\qquad + 54 (C_\Ret + \frac{L}{2})B  \eta  + 18   (C_\Ret + \frac{L}{2})  \left[ \frac{4}{3}\frac{M_3^2}{d^3} + \frac{M_4^2 \mu^4}{288} \right]  d \eta \mu^2 + 18   (C_\Ret + \frac{L}{2}) C_\Ret^2  d \eta \mu^2  \\ 
    &=   \mathcal{O}(\frac{d}{\eta T} ) + \mathcal{O}(\eta) +  \mathcal{O}(d^2 \mu^2), \nonumber
\end{align}
       where (i) applies the formula of $\mathscr{E}$ and $\mathscr{F}$ given by \Cref{eq:E} and \Cref{eq:F}, respectively. 
\end{proof}

\subsection{Proof of \Cref{coro:complexity}}\label{sec:proof-coro-complexity}
We re-state this corollary to have a consistent notation as previous sections.
\begin{corollary}\label{coro:complexity-appendix}
    Let $g$ be the Euclidean metric, and let $g'$ be a structure-preserving metric with respect to $g$.  
    Under the same assumptions as \Cref{thm:sgd-convergence}, suppose that \textbf{either} of the following conditions holds: 
    \begin{enumerate}[(a)]
        \item $g$ is geodesically complete; or
        \item the set of $\epsilon$-stationary points under $g$, $ K := \{\, p \in \calM : \|\nabla_{g} f(p) \|_{p, g} \leq \epsilon \,\},$ 
        is compact. 
    \end{enumerate}
    Then it requires at most $T \leq \mathcal{O}\left(\tfrac{d}{\epsilon^4}\right)$  iterations to achieve  $\min_{1 \leq t \leq T} \EE \bigl[ \| \nabla f(p_t)\|^2_{p_t, g} \bigr] \leq \epsilon^2 .$
\end{corollary}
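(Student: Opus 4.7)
The plan is to run SGD under the structure-preserving metric $g'$ constructed from $g$ via \Cref{thm:structure-preserving}, obtain the $g'$-stationarity guarantee from \Cref{thm:sgd-convergence}, and transfer it back to a $g$-stationarity guarantee using the conformal factor $h$ linking the two metrics. The central identity, obtained by unpacking $h(p)\,g_p(\nabla_{g'} f,v) = g_p(\nabla_g f,v)$ for all $v \in T_p\calM$ (exactly as in the proof of \Cref{thm:structure-preserving}), is
\begin{align*}
   \nabla_g f(p) \;=\; h(p)\,\nabla_{g'} f(p),
   \qquad
   \|\nabla_g f(p)\|^2_{p,g} \;=\; h(p)\,\|\nabla_{g'} f(p)\|^2_{p,g'}.
\end{align*}
Consequently, if $h$ admits a uniform upper bound $h \leq C$ on the region relevant to the iterates, then $\min_t \EE\|\nabla_g f(p_t)\|^2_{p_t,g} \leq C \cdot \min_t \EE \|\nabla_{g'} f(p_t)\|^2_{p_t,g'}$, and applying \Cref{thm:sgd-convergence} under $g'$ (with $\eta,\mu^2 \lesssim \sqrt{d/T}$) gives a rate $\lesssim C\sqrt{d/T}$; forcing this below $\epsilon^2$ requires $T = \mathcal{O}(dC^2/\epsilon^4) = \mathcal{O}(d/\epsilon^4)$ since $C$ is a problem constant.

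It therefore remains to exhibit such a uniform $C$ under each of the two conditions. Under condition~(a), the Euclidean metric $g$ is already geodesically complete, so no conformal rescaling is needed: take $g' := g$ with $h\equiv 1$ and $C=1$, and apply \Cref{thm:sgd-convergence} verbatim to $(\calM,g)$. Under condition~(b), compactness of $K = \{p : \|\nabla_g f(p)\|_{p,g} \leq \epsilon\}$ and continuity of $h$ yield a finite $C := \max_{p \in K} h(p)$; I would then argue by contradiction that the minimizing iterate $p_{t^*}$ must land in $K$ once $T \gtrsim dC^2/\epsilon^4$. If instead $p_{t^*} \notin K$, then $\|\nabla_g f(p_{t^*})\|_{p_{t^*},g} > \epsilon$, and the identity above forces $h(p_{t^*})\,\|\nabla_{g'} f(p_{t^*})\|^2_{p_{t^*},g'} > \epsilon^2$, contradicting the rate guaranteed by \Cref{thm:sgd-convergence} provided $h$ has been engineered to remain bounded in the region reached by iterates.

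The main obstacle is making this last step fully rigorous: the generic construction in \Cref{thm:structure-preserving} produces an $h$ that may blow up far from $K$. I would resolve this by exploiting the flexibility in the choice of proper exhaustion function $\rho$ granted by \Cref{lemma:existence-of-rho}: pick $\rho$ constant (say $\equiv 0$) on an open neighborhood $U \supset K$ with compact closure via a standard bump-function modification, so that $\nabla\rho\equiv 0$ and hence $h\equiv 1$ throughout $U$. Combined with the contradiction argument above, this yields the uniform constant $C$ and closes the proof, with the bulk of the remaining work being a bookkeeping of how the $C^2$ factor is absorbed into the constants $\mathsf{C}_1,\mathsf{C}_2,\mathsf{C}_3$ of \Cref{thm:sgd-convergence} when the parameter choice $\mu \lesssim d^{-2}\sqrt{d/T}$ is propagated.
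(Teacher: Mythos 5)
Your proof follows the same route as the paper's: apply \Cref{thm:sgd-convergence} under the conformally rescaled metric $g' = h\,g$ to get $\min_t \EE\|\nabla_{g'} f(p_t)\|^2_{p_t,g'} \lesssim \sqrt{d/T}$, then transfer to $g$ via the identity $\|\nabla_g f(p)\|^2_{p,g} = h(p)\,\|\nabla_{g'} f(p)\|^2_{p,g'}$ and use compactness of $K$ (plus continuity of $h$) to extract a constant $C = \max_{p\in K} h(p)$. The handling of case (a) by taking $h \equiv 1$ also matches the paper verbatim.

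Where you go beyond the paper is in explicitly flagging the delicate step: the $g'$-rate only converts to a $g$-rate if the minimizing iterate $p_{t^*}$ lies where $h \le C$, and you propose a contradiction argument plus a bump-function modification of the exhaustion function $\rho$ so that $h \equiv 1$ on a neighborhood $U \supset K$. This modification does not actually close the hole you identified: nothing in \Cref{thm:sgd-convergence} forces $p_{t^*} \in U$, and for $p_{t^*} \notin U$ the factor $h(p_{t^*})$ can be arbitrarily large, so the inequality $h(p_{t^*})\,\|\nabla_{g'} f(p_{t^*})\|^2_{p_{t^*},g'} > \epsilon^2$ is perfectly consistent with $\|\nabla_{g'} f(p_{t^*})\|^2_{p_{t^*},g'}$ being tiny --- no contradiction results. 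To be fair, the paper's own proof silently makes the same leap (replacing $\max_{p\in\calM} h$ by $\max_{p\in K} h$ without justifying that the relevant iterate lies in $K$), so you have correctly detected a real subtlety rather than introduced a new flaw; the additional input needed in either version is some argument that the trajectory stays in a region where $h$ is uniformly controlled (e.g., boundedness of sublevel sets together with descent in expectation), which neither the paper nor your modification of $\rho$ by itself supplies.
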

\begin{proof}
    For the item (a), we omit its proof as it is directly implied by setting $h\equiv1$. Recall that we write $a \lesssim b$ if there exists a constant $\mathsf{C}>0$ such that $a \le \mathsf{C}\,b$.  Now we denote $g'_p (v,w) := h(p) g_p(v,w)$.  \Cref{thm:convergence-appendix} implies that
    \begin{align*}
        \min_{1\leq t\leq T} \|\nabla f(p_t) \|_{p_t, g'}^2 \lesssim \sqrt{\frac{d}{T} }.
    \end{align*} 
    It suffices to prove that if $p \in K$ is an $\epsilon$-stationary point under $g'$ then it must be an $\epsilon$-stationary point under $g$ (up to a constant scale).  Note that
    \[ \| \nabla_{g'} f(p)\|_{p,g'} = \frac{1}{\sqrt{h(p)}} \| \nabla_{g} f(p) \|_{p,g} .\]
    As the result, we obtain 
    \begin{align*}
        \frac{1}{\max_{p \in \calM} h(p)}  \min_{1\leq t\leq T} g_{p}(\nabla f(p), \nabla f(p))   &\lesssim \sqrt{\frac{d}{T} } \\ 
        \min_{1\leq t\leq T} g_{p}(\nabla f(p), \nabla f(p))   &\lesssim  \max_{p \in \calM} h(p) \sqrt{\frac{d}{T} } 
    \end{align*}
    We restrict two sides on the compact set (given by the condition (b))
    \[K:=\{ p : \| \nabla_g f(p) \|_{p,g} \leq \epsilon \}.\]
    Because $h:\calM \to \RR$ is a continuous function, then it must be bounded over this compact set. Let this upper bound be $C$. Then we obtain (with absorbing $C$ into $\lesssim$)
    \[  \min_{1\leq t\leq T} \|\nabla f(p)\|_{p,g}^2   \lesssim  \sqrt{\frac{d}{T} }  \]
    By setting $ \sqrt{\frac{d}{T} } \leq \epsilon^2$, we obtain the complexity $T\gtrsim \frac{d}{\epsilon^4}$. 
\end{proof}

\section{Experimental Details}
In this section, we aim to include the omitted experimental details in \Cref{sec:experiments}.
 
\paragraph{Hardware and System Environment} We conducted our experiments on the personal laptop, equipped with AMD Ryzen 9 7940HS Mobile Processor (8-core/16-thread) and NVIDIA GeForce RTX 4070 Laptop GPU; however, GPUs are not required in our experiments. Our codes were tested using Python version 3.12.3. Additional dependencies
are specified in the supplementary `requirements.txt' file. All source codes attached. 

\subsection{Synthetic Experiment: Impact of Sampling Bias}\label{appendix:exp-sampling}

\paragraph{Construction of Quadratic Objective Functions}  We construct quadratic objective functions of the form $f_{\text{quadratic}}(x) = \frac{1}{2}x^\top (B + \xi )x$, where $B$ is a symmetric positive definite matrix that determines the landscape's curvature properties and $\xi$ is the data point independently sampled from $\mathcal{N}(0,1)$ for each entry. The matrix $B$ is generated by first creating a random matrix  $M \in \RR^{d\times d}$ with entries drawn from a standard normal distribution $\mathcal{N}(0,1)$, then forming $B = M^\top M + d I_d$ to ensure positive definiteness with a regularization term $d I_d$ that controls the minimum eigenvalue.

\paragraph{Construction of Logistic Objective Functions} For logistic objective functions, we construct the empirical risk minimization problem $f_{\text{logistic}}(x) = \frac{1}{n}\sum_{i=1}^n \log(1+ \exp(-y_i \zeta_i^\top x) ) + \frac{\lambda}{2} x^\top B x$, where $B$ is generated as the same way as the quadratic function and $\{(\zeta_i, y_i)\}_{i=1}^n$ represents the training dataset with feature vectors $a_i\in \RR^d$ and binary labels $y_i \in \{-1, +1\}$.  The feature matrix $X=[x_1, \dots, x_n]^\top \in \RR^{n \times d}$ is generated from a standard normal distribution $\mathcal N(0,1)$. A ground truth weight vector $w^* \in \RR^d$ is generated from $\mathcal N(0,1)$ and then normalized to unit length. The binary labels $y_i \in \{ -1, +1\}$  are generated by first computing logits $x_i^\top w^*$, then converting to probabilities $p_i = 1/(1 + \exp(-x_i^\top w^*))$, and finally sampling $y_i$ according to $\textrm{Bernoulli}(p_i)$ before converting to the $\{-1, +1\}$ encoding. The regularization parameter $\lambda$ is chosen as $\lambda=0.1$. 

\paragraph{Construction of Riemannian Metric $g_A$}  We design a Riemannian metric on the ambient Euclidean space by defining a symmetric positive definite matrix $\mathbf{A}$ with extreme conditioning properties. Specifically, the metric tensor is constructed by generating a random orthonormal matrix $\mathbf{Q}$ via QR decomposition, prescribing eigenvalues that span geometrically from $\lambda_{\min} = 1$ to $\lambda_{\max} = 10^4 \lambda_{\min}$, and forming $\mathbf{A} = \mathbf{Q} \boldsymbol{\Lambda} \mathbf{Q}^\top$, where $\boldsymbol{\Lambda}$ is the diagonal matrix of these eigenvalues. This construction yields a highly anisotropic Riemannian manifold with a condition number of $\mathbf{A}$ equal to $10^4$, creating challenging geometric landscapes for optimization algorithms. The resulting metric induces Riemannian gradients of the form $\mathbf{A}^{-1} \nabla f(\mathbf{x})$, fundamentally altering the optimization dynamics compared to standard Euclidean methods. 

\paragraph{Hyper-Parameters} Each method uses 16 random directions per iteration with a perturbation stepsize $\mu = 10^{-4}$ for gradient estimation. The algorithms were run for 500{,}000 iterations with learning rates of $10^{-3}$ (quadratic) and $10^{-5}$ (logistic), and results were averaged over 16 independent runs to ensure statistical reliability. All curves are smoothed using a moving average with a window size of 5{,}000 iterations, and confidence bands represent 10th--90th percentiles across runs to visualize convergence variability.

\subsection{Synthetic Experiment: MSE vs. Curvature}\label{appendix:curvature}
\paragraph{Riemannian Metric Construction}
We work on the $d$-dimensional probability simplex 
\[\Delta^{d}:=\{ p\in\mathbb{R}^{d+1} \mid \sum_{i=1}^{d+1}p_i=1 , 0< p_i<1\},\]
and endow its interior (identified with the first \(d\) coordinates) with a
structure-preserving Riemannian metric (\Cref{def:structure-preserving}) conformally equivalent to the canonical Euclidean metric $g^E$:  
\[\tilde g^{(\beta)} = e^{2\phi_\beta(p)} g^E,\]
where the conformal factor is
\[
   \phi_\beta(p)
   \;=\;
   \tfrac{1}{2}\,\beta\,\log h(p),
   \qquad
   h(p)
   =\;
   1 + \sum_{i=1}^{d+1}\frac{1}{p_i^{\,2}}
       - \frac{1}{d+1}
         \Bigl(\sum_{i=1}^{d+1}\frac{1}{p_i}\Bigr)^{2}.
\]
Varying the scalar \(\beta>0\) sharpens or flattens the metric.   We examine four choices  
\(\beta\in\{0.5,\,1.0,\,1.5,\,2.0\}\). 
At the fixed reference point \(p_0\in\Delta^{d}\) (drawn once from the
Dirichlet distribution and held constant throughout the experiment) we
measure the mean-squared error of a symmetric zeroth-order gradient estimator (\Cref{eq:center-estimator}) with using the first-order approximation of the exponential map as the retraction, where the perturbation stepsize $\mu$ is set to $0.1$. We note that under this approximation, the retraction degenerates to the naive Euclidean perturbation; we note that we are using a fixed point $p_0$, it doesn't trigger the out-of-domain issue of the incomplete Riemannian manifolds when $\mu$ is appropriately selected.  The MSE is evaluated using the corresponding structure-preserving metric instead of the original Euclidean metric; the conformal scaling
\(h(p)^{\beta}\) is applied consistently both when sampling directions
(\(\|v\|_{\tilde g}=1\)) and when converting the Euclidean gradients of the
test functions (quadratic and Kullback–Leibler distance to the uniform
distribution) into true Riemannian gradients.

\paragraph{Sectional Curvature Evaluation}
Instead of using $\beta$ as the x-axis, we compute the
sectional curvature \(K_{\tilde g^{(\beta)}}(p_0)\) of each metric at
\(p_0\) to reflect the true relation between the intrinsic curvature and the estimation error.  Let \(\phi=\phi_\beta\); then
\(\tilde g^{(\beta)} = e^{2\phi} g^E\) is a warped Euclidean metric whose
curvature depends solely on \(\phi\).
We draw an orthonormal pair \((v,w)\) in the Euclidean tangent space
\(T_{p_0}\Delta^{d}\) via Gram-Schmidt method, rescale them so that
\(\|v\|_{\tilde g}=\|w\|_{\tilde g}=1\), and evaluate 
\begin{align*}
    K(p_0;v,w)  = & e^{-2\phi(p_0)}
   \Bigl(
      \|\nabla\phi(p_0)\|^{2}_{\!2}
      - \langle\mathrm{Hess}\,\phi(p_0)\,v,v\rangle
      - \langle\mathrm{Hess}\,\phi(p_0)\,w,w\rangle
      \\ 
      &- \langle\nabla\phi(p_0),v\rangle^{2}
      - \langle\nabla\phi(p_0),w\rangle^{2}
   \Bigr),
\end{align*}
where gradients and Hessians are taken with respect to the ambient Euclidean
coordinates.  Because \(\tilde g^{(\beta)}\) is \emph{isotropic} up to the
conformal factor, a single random 2-plane suffices; the resulting scalar is
recorded as \(K(p_0)\) for that \(\beta\).
These four curvature values, monotonically decreasing as \(\beta\) grows, serve
as the horizontal tick labels in \Cref{fig:conformal-metrics}.
 
\paragraph{Hyper-Parameters}  
For each metric, we run \(50{,}000\) independent zeroth-order gradient trials, each trial drawing one random Riemannian unit direction and applying \Cref{eq:center-estimator} to estimate the gradient with using the perturbation stepsize $\mu=0.1$ and using the exponential map as the retraction. 
The reference point \(p_0\in\Delta^{4}\) is sampled once and
held fixed, so that changes in estimator accuracy stem solely from the chosen
metric. 
Closed-form gradients are available for both test functions, Euclidean and KL
distance to the uniform distribution. We record the mean-squared error
\(h(p_0)\lVert\widehat{\nabla}f-\nabla f\rVert^{2}\) for each trial.
The resulting \(50{,}000\) errors per setting are summarized with
log-scale box plots whose boxes span the inter-quartile range and whiskers
cover the 10th--90th percentiles (outliers omitted). 

\subsection{Gradient-Based Mesh Optimization}\label{appendix:mesh-optimization} 
In our work, we consider the black-box mesh optimization problem. In the well-known CFD-GCN model \citep{belbute2020combining}, additional efforts are taken to allow the position of nodes to support the auto-differentiation in the SU2 PDE solver; however, in most of existing finite-volume numerical solvers, the positions of mesh nodes are typically not differentiable. Therefore, we need to apply the zeroth-order optimization approach.  

\paragraph{Construction of Mesh Objective Function}
Let $P=\{p_i\}_{i=1}^N\subset\RR^2$ be interior node positions of the given mesh with boundary nodes fixed.  Given $P$, the coarse mesh induced by $P$ defines a PDE state $\hat{u}_P$ (solved on $P$). Then we interpolate it into the fine mesh $M_{\text{fine}}$ to obtain the PDE state $u_P$. The objective is the mean-squared error (MSE) to a fixed fine-grid reference $u_{\mathrm{ref}}$:
\[
  f_{\mathrm{mesh}}(P)
  = \frac{1}{N_{\text{fine}}}\bigl\| u_P - u_{\mathrm{ref}}\bigr\|_2^2,
\]
where $N_{\text{fine}}$ denotes the number of nodes in the fine mesh. The randomness in this objective comes from the random sampling over the nodes; instead of taking all nodes to be updated, each step we will only sample a part of nodes to be updated. In our experiments, we set the size of coarse mesh to be $20\times 20$ and the size of fine mesh to to be $200 \times 200$. Each time, we will randomly sample $30\% \times 20 \times 20 = 120$ nodes to update. 

\paragraph{Construction of Mesh Parameterization}
Each interior node is updated in \textbf{barycentric coordinates} $b\in\Delta^{m-1}$ with respect to its incident cell (with vertices $\{v_j\}_{j=1}^{m}$), i.e.,
$p(b)=\sum_{j=1}^{m} b_j v_j$.
This coordinate guarantees feasibility ($b_j>0$, $\sum_j b_j=1$), which naturally results in a probability simplex structure. Under the canonical inclusion embedding, this manifold is geodesically incomplete and hence feasible for our proposed approach.  

\paragraph{Construction of Structure-Preserving Metric}
We endow $\Delta^{m-1}$ with the structure-preserving conformal metric $\tilde g^{(\beta)}$ as defined in \Cref{appendix:curvature}, and use the first-order approximation of the exponential map of $\tilde g^{(\beta)}$ as the retraction (\Cref{def:retraction}). We note that this approximation requires to set the length of perturbation vectors to be sufficiently small to ensure the accuracy of the retraction; this requirement can be satisfied by adopting the same technique as the soft projection trick used in \Cref{fig:sub-b}. We always assume this requirement is satisfied throughout the training. 
 
\paragraph{Hyper-Parameters}
Each iteration uses $4$ random directions with perturbation stepsize $\mu=10^{-1}$.
Optimization runs for $T=20{,}000$ iterations with learning rate $\eta\in \{ 300, 400,500 \}$ (we report the best curve among these hyper-parameters). All curves are smoothed with a moving-average window of $2{,}000$ iterations. For all other other estimator-dependent hyper-parameters, we have included all of them in the configuration files along with source codes.  
 
\begin{table}[t]
\centering
\caption{Hyper-parameter settings for gradient-based mesh optimization experiments.}
\label{tab:hyperparams}
\begin{tabular}{lll}
\toprule
\textbf{Fixed Hyperparameter}    & \textbf{Symbol} & \textbf{Value}                 \\ \midrule
Fine-mesh size & $M_{\text{fine}}$ & $200 \times 200$ \\ 
Number of nodes in the fine mesh & $N_{\text{fine}}$ & $40{,}000$ \\ 
Coarse-mesh size & $M_{\text{coarse}}$ & $20 \times 20$ \\ 
Number of nodes in the coarse mesh  & $N_{\text{coarse}}$ & $400$ \\ 
Sampled nodes in each iteration & - & $120$ \\ 
All nodes positions & $P$ & - \\ 
Total iterations         & $T$             & $20{,}000$                     \\ 
\hline
\midrule
\textbf{Tunable Hyperparameter}    & &                  \\ \midrule
Random directions        & -               & 4                              \\
Perturbation stepsize  & $\mu$           & $10^{-1}$                      \\
Learning rate            & $\eta$          & $\{300, 400, 500\}$            \\\bottomrule
\end{tabular}
\end{table}  

\subsection{Additional Discussions on Stability}
\label{sec:stability_discussion}
To further evaluate the robustness of the proposed method, we conducted repeated experiments using 5 independent random seeds. The results are illustrated in \Cref{fig:multi-runs}. We note that both our method and the \textit{Reversion} method exhibit low variance and consistently show the narrowest error bands (smallest shaded areas), indicating that they are sufficiently stable. In contrast, the naive \textit{Unconstrained} method suffers from the highest variance.  

\begin{figure}[h]
    \centering
    \includegraphics[width=0.6\linewidth]{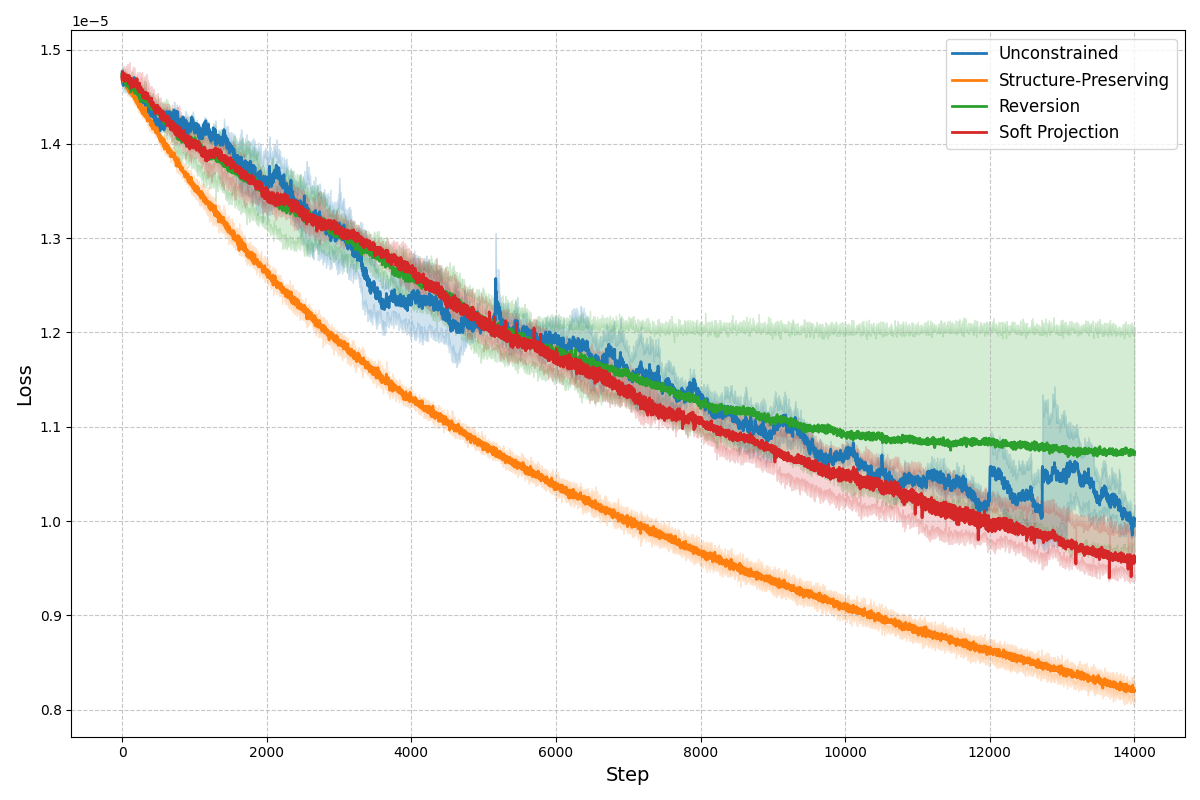}  
    \caption{Training loss trajectories averaged over 5 independent random seeds. The solid lines represent the mean loss, while the shaded areas indicate the min-max deviation. Our structure-preserving method demonstrates superior stability and consistently lower loss.}
    \label{fig:multi-runs}
\end{figure}

 \section{Limitations and Future Work}
Our work also presents several limitations which potentially point to future research directions. A potential limitation is the rejection sampling's reliance on the eigen-decomposition of the metric matrix, which presents a significant computational bottleneck when applied to high-dimensional problems. Developing more scalable algorithms (e.g. Randomized SVD or iterative solvers) to efficiently handle high-dimensional manifolds is therefore an important future direction. Additionally, while our theory guarantees the existence of a structure-preserving metric (\Cref{thm:structure-preserving}), its practical construction currently relies on a case-by-case design and lacks a general construction. Moreover, our construction is based on the conformal transformation; a valuable direction for future work is to explore whether a broader class of structure-preserving transformations exists beyond the current scaling and to develop more general constructive methods that are better compatible for zeroth-order optimization.

\section{The Use of Large Language Models (LLMs)}
In preparing this manuscript, we employed Large Language Models (LLMs) as general-purpose assistive tools in the following ways:  
\begin{itemize}
    \item \textit{Literature review support.} We used the Deep Research functionality provided by existing AI platforms to help gather references and draft preliminary summaries of related work.  
    \item \textit{Language refinement.} We used AI chatbots hosted on multiple platforms to generate the abstract and to improve the clarity, style, and readability of the manuscript.  
    \item \textit{Proof verification.} We used AI chatbots to check the logical consistency, correctness, and completeness of our formal proofs.  
    \item \textit{Codes Generation.} We also applied the AI agent to generate a part of experimental codes. 
\end{itemize}

All LLM-assisted outputs were critically reviewed, verified, and, where necessary, revised by the authors. We take full responsibility for the content of this manuscript. LLMs were not involved in generating research ideas, drawing scientific conclusions, or contributing original insights.

\newpage 
\begin{table}[t]
\centering
\renewcommand{\arraystretch}{1.5}
\caption{A summary of notations in Riemannian manifolds.}
\label{tab:glossary}
\begin{tabular}{p{0.265\linewidth}  p{0.675\linewidth}}
\toprule
\textbf{Notations} & \textbf{Definition} \\
\midrule
Smooth Manifold ($\calM$) & A $d$-dimensional second-countable Hausdorff topological space where each point $p$ has a neighborhood $U_p$ diffeomorphic to $\mathbb{R}^d$. \\
\midrule
Deviation ($v$) & A linear mapping $v: C^\infty(U_p) \to \mathbb{R}$ satisfying the product rule: $v(fg) = v(f) \cdot g(p) + v(g) \cdot f(p)$. \\
\midrule
Tangent Space ($T_p \calM$) & The real vector space of all deviations at a point $p \in \calM$. \\
\midrule
Cotangent Space ($T_p^* \calM$) & The dual space of the tangent space $T_p \calM$; the space of all linear maps $\psi: T_p \calM \to \mathbb{R}$. \\
\midrule
Tangent Bundle ($T\calM$) & The disjoint union of all tangent spaces: $T\calM := \{ (p,v) \mid p \in \calM, v \in T_p \calM \}$. \\
\midrule
Immersion & A smooth map $f: \calM \to \mathbb{R}^n$ whose differential $d f|_p$ is injective at every $p \in \calM$. \\
\midrule
Embedding & An immersion that is also a homeomorphism onto its image $f(\calM)$. \\
\midrule
Vector Field ($X$) & A smooth map (section) $X : \mathcal{M} \to T\mathcal{M}$ such that $X(p) \in T_p\mathcal{M}$ for all $p \in \mathcal{M}$.  \\
\midrule
$\mathfrak{X}(\mathcal{M})$ &  The space of all vector fields on the manifold $\calM$. \\
\hhline{==}
Riemannian Metric ($g$) & A smooth assignment of an inner product $g_p: T_p \calM \times T_p \calM \to \mathbb{R}$ to each tangent space $T_p \calM$. Also denoted $\langle \cdot, \cdot \rangle_p$. \\
\midrule
\makecell[l]{Riemannian Manifold \\ ($(\calM, g)$)} & A smooth manifold $\calM$ equipped with a Riemannian metric $g$. \\
\midrule
$n$-Euclidean Metric & A metric $g$ induced by a smooth embedding $\phi: \calM \to \mathbb{R}^n$ via the pullback $g_p^E ( v, u) = \langle d\phi|_p(v), d\phi|_p(u) \rangle$. \\
\midrule
Levi-Civita Connection
  & The unique affine connection on $\mathfrak{X}(\mathcal{M})$ that is torsion-free and metric-compatible. \\
\midrule
Geodesic ($\gamma$) & A smooth curve $\gamma: I \to \calM$ whose velocity vector $\gamma'(t)$ satisfies the geodesic equation $\nabla_{\gamma'(t)}\gamma'(t) = 0$. \\
\midrule
Exponential Map ($\exp_p$) & A map $\exp_p: T_p\calM \to \calM$ defined by $\exp_p(v) := \gamma(1)$, where $\gamma$ is the unique geodesic with $\gamma(0) = p$ and $\gamma'(0) = v$. \\
\midrule
Retraction ($\Ret$) & A smooth map $\Ret: T\calM \to \calM$ satisfying $\Ret_p(0) = p$ and $d\Ret_p|_0 = \mathrm{id}_{T_p\calM}$. It approximates the exponential map. \\
\midrule
Gradient ($\grad f$) & The vector field $\grad f (p) := ( d f|_p )^\sharp$, where $\sharp$ is the musical isomorphism $T^*_p \calM \to T_p \calM$ induced by the metric $g$. \\
\hhline{==}
Riemannian Stochastic Optimization Problem & 
$\min_{p \in \mathcal{M}} f(p) = \mathbb{E}_{\xi \sim \Xi} [f(p; \xi)],$ where $f(\cdot;\xi):\mathcal{M}\to\mathbb{R}$ is a smooth function relying on $\xi$ drawn from the data distribution $\Xi$.\\
\midrule  
Symmetric zeroth-order estimator & $\widehat{\nabla} f(p) = \frac{f(\Ret_p(\mu v)) - f(\Ret_p(-\mu v))}{2\mu } v,$ where $\mu$ is the perturbation stepsize and $v$ is uniformly sampled from the unit ball in $T_p \calM$. \\ 
\midrule 
SGD update rule & $p_{t+1} = \Ret_{p_t} \bigl( \eta \, \widehat{\grad} f( p_t;\xi_t)  \bigr),$ where $\eta$ is the learning rate. \\ 
\bottomrule
\end{tabular}
\end{table}  

\end{document}